\documentclass{article}

\usepackage{microtype}
\usepackage{graphicx}
\usepackage{subfigure}
\usepackage{booktabs} 
\usepackage{makecell} 
\usepackage{hyperref}
\usepackage{graphicx}
\usepackage{rotating}

\usepackage[textsize=tiny]{todonotes}
\usepackage{comment}

\usepackage[accepted]{Styles/icml2026}

\usepackage{amsmath}
\usepackage{amssymb}
\usepackage{mathtools}
\usepackage{amsthm}

\usepackage[capitalize,noabbrev]{cleveref}

\theoremstyle{plain}
\newtheorem{theorem}{Theorem}[section]
\newtheorem{proposition}[theorem]{Proposition}
\newtheorem{lemma}[theorem]{Lemma}
\newtheorem{corollary}[theorem]{Corollary}
\newtheorem{remark}[theorem]{Remark} 
\theoremstyle{definition}
\newtheorem{definition}[theorem]{Definition}
\newtheorem{assumption}[theorem]{Assumption}
\theoremstyle{remark}

\usepackage[textsize=tiny]{todonotes}

\usepackage{mathrsfs}
\usepackage{listings}
\usepackage{xcolor}

\newcommand{\bbR}{\mathbb{R}}

\DeclareMathOperator*{\argmin}{argmin}

\DeclarePairedDelimiterX{\normop}[1]{\lVert}{\rVert_{\mathrm{op}}}{#1}
\DeclarePairedDelimiterX{\normopp}[1]{\lVert}{\rVert_{\mathrm{op}}^p}{#1}
\DeclarePairedDelimiterX{\normhs}[1]{\lVert}{\rVert_{\mathrm{HS}}}{#1}

\newcommand{\E}{\mathbb{E}}

\icmltitlerunning{Gradient Regularized Newton Boosting Trees with Global Convergence}

\begin{document}

\twocolumn[
\icmltitle{Gradient Regularized Newton Boosting Trees with Global Convergence}




\icmlsetsymbol{last}{$\dagger$}

\begin{icmlauthorlist}
\icmlauthor{Nikita Zozoulenko}{imperial}
\icmlauthor{Daniel Falkowski}{stockholm}
\icmlauthor{Thomas Cass}{imperial,last}
\icmlauthor{Lukas Gonon}{imperial,gallen,last}
\end{icmlauthorlist}

\icmlaffiliation{imperial}{Department of Mathematics, Imperial College London, UK}
\icmlaffiliation{gallen}{School of Computer Science, University of St. Gallen, Switzerland}
\icmlaffiliation{stockholm}{Department of Mathematics, Stockholm University, Sweden}

\icmlcorrespondingauthor{Nikita Zozoulenko}{n.zozoulenko23@imperial.ac.uk}
\icmlkeywords{Machine Learning, ICML, gradient boosting, decision trees, convex optimization, Newton, xgboost, lightgbm, Newton boosting, Hilbert space}

\vskip 0.3in
]



\printAffiliationsAndNotice{$\dagger$Equal last author}  

\begin{abstract}
    Gradient Boosting Decision Trees (GBDTs) dominate tabular machine learning, with modern implementations like XGBoost, LightGBM, and CatBoost being based on Newton boosting: a second-order descent step in the space of decision trees. Despite its empirical success, the global convergence of Newton boosting is poorly understood compared to first-order boosting. In this paper, we introduce \textbf{Restricted Newton Descent}, which studies convex optimization with Newton's method on Hilbert spaces with inexact iterates, based on the concepts of cosine angle and weak gradient edge. Within this framework, we recover Newton boosting with GBDTs and classical finite-dimensional theory as special cases. 
    We first prove that vanilla Newton boosting achieves a linear rate of convergence for smooth, strongly convex losses that satisfy a Hessian-dominance condition. To handle general convex losses with Lipschitz Hessians, we extend a recent gradient regularized Newton scheme to the restricted weak learner setting. This scheme minimally modifies the classical algorithm by introducing an adaptive $\ell_2$-regularization term proportional to the square root of the gradient norm at each iteration. We establish a $\mathcal{O}(\frac{1}{k^2})$ rate for this scheme, thereby obtaining a globally convergent second-order GBDT algorithm with a rate matching that of first-order boosting with Nesterov momentum.
    In numerical experiments, we show that our scheme converges while vanilla Newton boosting may diverge.
\end{abstract}

\vspace{-20pt}
\section{Introduction}\label{sectionIntroduction}

Gradient Boosting Decision Trees (GBDTs) are powerful machine learning models, excelling in tabular settings such as regression, classification, and learning to rank over neural baselines \cite{
2021AreNeuralRankersStillOutperformedByGradientBoostedDecisionTrees,
2022WhyDoTreesOutperformDeepLearningTabularData,
2023WhenDoNeuralNetsOutperformBoostedTreesOnTabularData,
2025TabArenaBenchmarkTabularData}. Modern implementations such as XGBoost \cite{2016XGBoost}, LightGBM \cite{2017LightGBM}, and CatBoost \cite{2018CatBoost} are based on the Newton boosting algorithm, which updates a decision tree ensemble by performing a Newton descent step in the space of decision trees, derived via a second order Taylor expansion with respect to the training data \cite{2016XGBoost}. However, despite the widespread use of GBDTs, the theoretical mechanisms of Newton boosting remain poorly understood, with most prior work focused on first-order gradient boosting \cite{1999BoostingAlgorithmsAsGradientDescent,
2001GreedyFunctionApproximationAGradientBoostingMachine,
2011GeneralizedBoostingAlgorithmsForConvexOptimization,
2017ANewPerspectiveOnBoostingInLinearRegressionViaSubgradientOptimization,
2019RegularizedGradientBoosting,
2020RandomizedGradientBoostingMachine,
2020AcceleratingGradientBoostingMachines}.

While Newton's method in $\mathbb{R}^N$ offers fast quadratic local convergence, it lacks global convergence guarantees for general strictly convex losses, even when combined with line search \cite{2016SimpleExamplesForTheFailureOfNewtonsMethodWithLineSearchForStrictlyConvexMinimization}. Recently, the convex optimization literature has seen a resurgence of interest in regularized Newton schemes that achieve global $\mathcal{O}(\frac{1}{k^2})$ convergence rates. This was first established for the Cubic Regularized Newton (CRN) method \cite{2006CubicRegularizationOfNewtonMethodAndItsGlobalPerformance, 2008AcceleratingTheCubicRegularizationOfNewtonsMethodOnConvexProblems}; however, CRN requires solving an expensive subproblem at each iteration, making it unfit for GBDTs. More recently, the Gradient Regularized Newton (GRN) method was proposed, achieving the same $\mathcal{O}(\frac{1}{k^2})$ rate as CRN \cite{2023RegularizedNewtonMethodWithGlobalConvergence, 2024GradientRegularizationOfNewtonMethodWithBregmanDistances}. GRN uses an adaptive $\ell_2$-regularization term proportional to the square root of the gradient norm at each iteration. Crucially, this method avoids expensive subproblems and incurs negligible computational overhead compared to the standard Newton update. This discovery has sparked new research into the global convergence of second-order numerical schemes \cite{2022ADampedNewtonMethodAchievesGlobalMinus2AndLocalQuadraticConvergenceRate, 2024SuperUniversalRegularizedNewtonMethod, 2025MinimizingQuasiSelfConcordantFunctionsByGradientRegularizationOfNewtonMethod, 2025SketchAndProjectMeetsNewtonMethodGlobalConvergenceWithLowRankUpdates}. A natural question arises: can these convergence guarantees be extended to GBDTs and optimization with weak learners, providing convergence rates for second-order boosting methods used in practice?




\subsection{Contributions}
In this paper, we formulate Newton boosting as a convex optimization problem on a Hilbert space $\mathcal{H}$, where optimization is performed on a restricted set of permissible weak learners $\mathcal{F} \subset \mathcal{H}$. Our contributions are:

\begin{itemize}
    \item We introduce \textbf{Restricted Newton Descent} using weak iterates based on the concepts of cosine angle and weak gradient edge, obtaining both tree boosting and finite-dimensional convex optimization as special cases.

    \item For Hessian-dominated smooth strongly convex losses, we prove that vanilla restricted Newton's method achieves a linear rate of convergence. This includes $\ell_2$-regularized binary and categorical cross entropy.

    \item For general convex losses with $2M$-Lipschitz Hessians, we generalize the gradient regularized Newton scheme of \citet{2023RegularizedNewtonMethodWithGlobalConvergence} to the Hilbert space setting with weak learners. We prove a global $\mathcal{O}(\frac{1}{k^2})$ convergence rate for all learning rates $\eta \leq 1$, and a local linear rate if the loss additionally is strongly convex.
    
    \item We present numerical results showing that the gradient regularized Newton scheme converges while vanilla Newton boosting may diverge.    
\end{itemize}

\subsection{Related Literature}

\textbf{GBDTs:} Gradient boosting models have traditionally been viewed as gradient descent in $\mathbb{R}^N$ on the training data, where $N$ is the dataset size \cite{1999BoostingAlgorithmsAsGradientDescent, 2000LogitBoostAdditiveLogisticRegressionAStatisticalViewOfBoosting}. Later works identified this process with gradient descent in the Hilbert space of square-integrable functions $L^2(\widehat{\nu}_N)$, where $\widehat{\nu}_N$ is the empirical data distribution \cite{2011GeneralizedBoostingAlgorithmsForConvexOptimization}; see also connections to gradient representation boosting in deep learning \cite{2018FunctionalGradientBoostingResNet, 2020GeneralizedBoosting, 2025RandomFeatureRepresentationBoosting}. Recent theoretical work has shifted from the gradient perspective to studying boosting via coordinate descent in high-dimensional spaces, achieved by enumerating all possible decision trees \cite{2019RegularizedGradientBoosting, 2020RandomizedGradientBoostingMachine, 2020SnapBoostAHeterogeneousBoostingMachine}.

\textbf{Newton Boosting:} Newton-like updates were first introduced by LogitBoost \cite{2000LogitBoostAdditiveLogisticRegressionAStatisticalViewOfBoosting} for logistic loss, and later applied to GBDTs in a general context by XGBoost \cite{2016XGBoost}. Empirical differences between first-order and Newton boosting were investigated in \cite{2021GradientAndNewtonBoostingForClassificationAndRegression}. Theoretical analysis of Newton boosting remains limited compared to first-order methods; Notable work includes the LogitBoost analysis by \citet{2014AConvergenceRateAnalysisForLogitBoostMARTAndTheirVariant} using probability clamping, and later the SnapBoost work using coordinate descent \cite{2020SnapBoostAHeterogeneousBoostingMachine}.

\textbf{Convergence Rates in Boosting:} For second order boosting, both \citet{2014AConvergenceRateAnalysisForLogitBoostMARTAndTheirVariant} and \citet{2020SnapBoostAHeterogeneousBoostingMachine} proved a linear rate in the strongly convex setting. In contrast, our work also covers general convex losses with Lipschitz Hessians. As for first order boosting, the earliest explicit convergence rate was proved by \citet{2006SomeTheoryForGeneralizedBoostingAlgorithms} with a slow $\mathcal{O}((\log k)^{-\frac{1}{2}})$ rate. For smooth convex losses, \citet{2014AConvergenceRateAnalysisForLogitBoostMARTAndTheirVariant} proved a classical $\mathcal{O}(\frac{1}{k})$ rate.  \citet{2020AcceleratingGradientBoostingMachines} incorporated Nesterov momentum to achieve an accelerated $\mathcal{O}(\frac{1}{k^2})$ rate, similar to our work. Linear rates for strongly convex losses were proven by \citet{2011GeneralizedBoostingAlgorithmsForConvexOptimization} using gradient descent, and by \citet{2020RandomizedGradientBoostingMachine} via randomized coordinate descent.  \citet{2012APrimalDualConvergenceAnalysisOfBoosting} studied the primal-dual structure of boosting, obtaining a linear rate with non-standard constants.


\section{Newton Boosting as Hilbert Space Convex Optimization}
In this section, we recall Newton's method in the Hilbert space setting, and introduce the \textbf{Restricted Newton Descent} framework where Newton boosting with GBDTs is recovered as a special case.

\subsection{Hilbert Space Newton's Method}

Let $\mathcal{H}$ be a Hilbert space. In this paper we consider twice Fréchet differentiable convex loss functions $L : \mathcal{H} \to \mathbb{R}$, defined below.

\begin{definition}
    A function $L : \mathcal{H} \to \bbR$ is said to be \textbf{Fréchet differentiable} if there exists a mapping $\nabla L : \mathcal{H} \to \mathcal{H}$ such that for all $h,f \in \mathcal{H}$
    \begin{align*}
        L(h + f) = L(h) + \langle f, \nabla L (h)\rangle + o(\|f\|).
    \end{align*}
    The element $\nabla L(h)$ is called the \textbf{gradient} at $h$. A function $L : \mathcal{H} \to \bbR$ is said to be \textbf{twice Fréchet differentiable} if $L$ is Fréchet differentiable, and if for all $h\in\mathcal{H}$ there exists a bounded, self-adjoint linear operator $\nabla^2 L(h) : \mathcal{H} \to \mathcal{H}$ such that
    \begin{align*}
    L(h + f) = L(h) + \left\langle f, \nabla L(h) + \frac{1}{2}  \nabla^2 L(h)[f]
   \right\rangle + o(\|f\|^2),
    \end{align*}
    for all $f \in \mathcal{H}$. The map $\nabla^2 L(h)$ is called the \textbf{Fréchet Hessian} (or second derivative) of $L$ at $h$.
\end{definition}

We aim to minimize $L$ with Hilbert space Newton's method with a fixed learning rate $\eta>0$. At each iteration $k$, we update an iterate $F_k = F_{k-1} + \eta f_k = \sum_{t=1}^k \eta f_t$ by minimizing a second-order Taylor expansion of the loss:
\begin{align*}
    L(F_k + f) &\approx L(F_k) + \left\langle f, \nabla L(F_k) + \frac{1}{2}\nabla^2 L(F_k)[f]  \right\rangle \\
    &=  L(F_k) + \left\langle f,g_k + \frac{1}{2}H_k[f]  \right\rangle \\
    &=: L(F_k) + Q_k(f)
\end{align*}
where $g_k = \nabla L(F_k)$ is the gradient, $H_k = \nabla^2 L(F_k)$ is the Hessian operator, and $Q_k(f)$ is the local second-order approximation of $L$ at the point $F_k$, that is after $k$ iter\-ations of the optimization scheme. Differentiating $Q_k$ yields
\begin{align*}
    \nabla Q_k(f) = g_k + H_k[f].
\end{align*}
Setting the derivative of $Q_k$ to 0 and solving for $f$ gives the Newton update, assuming that $H_k$ is invertible at the point $g_k$:
\begin{align}\label{eqExactNewtonUpdate}
    f_{k+1}  := - H_k^{-1}[g_k] = \argmin_{f \in \mathcal{H}} Q_k(f).
\end{align}

\subsection{Restricted Newton Descent}
In the context of gradient boosting, the exact update $f_{k+1}$ in \eqref{eqExactNewtonUpdate} is generally not attainable since weak learners, as opposed to exact Newton directions, are used to minimize the loss. Consequently, the ensemble $F_k$ instead takes the form $F_k := \sum_{t=1}^k \eta f_t^w$ for some notion of a weak iterate $f^w_t$. To formalize this, we introduce the \textbf{Restricted Newton Descent} framework (not to be confused with constrained optimization). This differs from standard Newton's method in two ways. Firstly, the gradient $g_k$ and Hessian $H_k$ are computed w.r.t. the sum of weak, rather than exact, iterates $F_k$. Secondly, we do not minimize $Q_k(f)$ over the entire space $f \in \mathcal{H}$, but rather restrict the weak update $f_{k+1}^w$ to belong to some subset $f_{k+1}^w \in \mathcal{F} \subset \mathcal{H}$ of admissible weak learners. More specifically, given a family $\mathcal{F}$, we define the weak iterates $f_{k+1}$ as any element
\begin{align}\label{eqWeakNewtonUpdate}
    f_{k+1}^w  \in  \argmin_{f \in \mathcal{F}} Q_k(f).
\end{align}
Note that $f_{k+1}^w$ is not necessarily unique (although it will be for decision trees). We consider general families $\mathcal{F}$ satisfying the following scalability condition, similar to \citet{2020AcceleratingGradientBoostingMachines}:
\begin{assumption}
    The family $\mathcal{F}$ is closed under scalar multiplication. That is, if $f \in \mathcal{F}$ then $a f \in \mathcal{F}$ for all $a\in\mathbb{R}$.
\end{assumption}
Decision trees, commonly used in boosting, satisfy this assumption. If $\mathcal{F} = \mathcal{H}$, then the weak update $f^w_{k+1}$ coincides with the exact update $f_{k+1}$, and standard Newton's method is recovered. The full algorithm is detailed in \cref{alg:boostingVanillaNewton}, with the exact step in parentheses.

\begin{algorithm}[H]
\caption{Vanilla Restricted Newton}
\label{alg:boostingVanillaNewton}
\begin{algorithmic}[1]
    \STATE \textbf{Input:} Loss $L:\mathcal{H}\to\bbR$. Learning rate $\eta$. Weak learner family $\mathcal{F}$. Initial value $F_0$.
    \FOR{$k = 0, 1, 2, \dots$}
        \STATE $g_k, \: H_k \leftarrow \nabla L(F_k), \: \nabla^2L(F_k)$
        \STATE $\big($Exact target $f_{k+1} \leftarrow - H_k^{-1}g_k \big)$
        \STATE Weak learner $f^{w}_{k+1} \leftarrow \argmin_{f \in \mathcal{F}} Q_k(f)$
        \STATE $F_{k+1} \leftarrow F_k + \eta f^{w}_{k+1}$
    \ENDFOR
\end{algorithmic}
\end{algorithm}

Theoretical analysis of any optimization scheme now depends on the error between the chosen weak iterate $f^w_{k+1}$ and the exact iterate $f_{k+1}$, as this difference introduces an error component orthogonal to the optimal descent direction. As we show later, different schemes rely on different error metrics. For vanilla Newton's method, the relevant metric is the cosine angle between $f_{k+1}$ and $f^w_{k+1}$ in the induced Hessian norm (see \cref{secVanillaNewtonBoosting}), whereas for the globally convergent gradient regularized scheme, the metric takes the form of a weak learner edge condition between the true gradient and the implied weak gradient (see \cref{secGradientRegularizedNewton}).

\subsection{GBDTs as Restricted Convex Optimization}\label{subsec:GBDTsAsRestrictedConvexOptimization}

In this section, we provide an overview of gradient boosting from a functional $\mathcal{L}^2(\nu)$ perspective \cite{2011GeneralizedBoostingAlgorithmsForConvexOptimization, 2020GeneralizedBoosting}, in the context of Restricted Newton Descent.

Let $(X, Y) \sim \nu$ be a random sample with its corresponding probability measure $\nu$. Denote by $X \sim \nu_X$ the features in $\bbR^q$, with targets $Y \sim \nu_Y$ in $\bbR^K$, and $\nu_{Y|X=x}$ the conditional law. Let $\widehat{\nu}_N = \frac{1}{N}\sum_{i=1}^N \delta_{(x_i, y_i)}$ be the empirical measure of $\nu$ with respect to a training dataset $\{x_i, y_i\}_{i=1}^N$ of size $N$. We work in the Hilbert space $\mathcal{L}^2(\nu_X)$ of square integrable functions defined on the feature space, with inner product $\langle f, g\rangle = \E_{\nu_X}[\langle f(X),g(X) \rangle_{\bbR^K}]$. When $\nu = \widehat{\nu}_N$, this space can be identified with $\mathbb{R}^{N\times K}$ using the inner product $\langle f, h\rangle = \frac{1}{N}\sum_{i=1}^N \langle f(x_i),h(x_i) \rangle_{\bbR^K}$. Let $l$ be a twice differentiable convex loss function.

To derive the Newton boosting algorithm used by e.g. XGBoost and LightGBM as a special case of Restricted Newton Descent, we simply let $\mathcal{F}$ be the set of weak decision trees up to a fixed depth, let the base space be $\mathcal{H} = \mathcal{L}^2(\widehat{\nu}_{X,N})$, and the loss function be $L(F) = \frac{1}{N} \sum_{i=1}^N l(F(x_i), y_i) = \E_{\nu_N}[l(F(X),Y)]$. In the space $\mathcal{L}^2(\nu_X)$, the Fréchet gradient and Hessian take the form
\begin{align}
    \nabla L (F)(x)  =  \E_{\nu_{Y|X=x}}\big[ \partial_{1} l(F(x), Y) \big],
\end{align}
\begin{align}
    \nabla^2 L (F)[f](x)  =  \E_{\nu_{Y|X=x}}\big[ \partial_{11} l(F(x), Y) f(x) \big],
\end{align}
and are easily computed for empirical measures $\widehat{\nu}_N$ as partial derivatives and matrices with respect to $F(x_i)$. Write $g_{(i)} = \partial_1 l(F_k(x_i), y_i) \in \bbR^K$ and $h_{(i)} = \partial_{11} l(F_t(x_i), y_i) \in \bbR^{K\times K}$. Following the XGBoost derivation \cite{2016XGBoost}, the objective becomes to minimize
\begin{align*}
        L(F_k+f) 
        &= \frac{1}{N}\sum_{i=1}^N l(F_k(x_i)+f(x_i), y_i) \\
        &\approx  L(F_k) + \frac{1}{N}\sum_{i=1}^N \left\langle f(x_i), g_{(i)} + \frac{1}{2} h_{(i)} f(x_i) \right\rangle_{\mathbb{R}^K} \\
        &= L(F_k) + Q_k(f)
\end{align*}
which we see coincides with the Hilbert space derivation using $Q_k$. Furthermore, in boosting we minimize over the set of weak learners $f \in \mathcal{F}$ rather than over all $L^2(\widehat{\nu}_N)$. Letting $f$ be a decision tree of the form $f(x) = \sum_j w_j I_j(x)$ where $I_j(x) \in \{0, 1\}$ is a disjoint partition of the feature space, one finds that the optimal weights $w_j$ minimizing $Q_k(f)$ are
\begin{align*}
    w_j &=  -\left( \sum_{i \in I_j} h_{(i)} \right)^{-1} \sum_{i \in I_j} g_{(i)},
\end{align*}
coinciding with the classical XGBoost formula. See \cref{appendixXGBFormulaDerivation} for more details and full derivation.

\section{Restricted Newton for Hessian-dominated Losses}\label{secVanillaNewtonBoosting}

Before introducing the globally convergent scheme in \cref{secGradientRegularizedNewton}, we first analyze the vanilla Newton update via Restricted Newton Descent. We prove for smooth strongly convex losses satisfying a Hessian-dominance condition, that vanilla Newton's method with weak learners converges globally with a linear rate. This analysis broadly parallels that of the LogitBoost analysis \cite{2014AConvergenceRateAnalysisForLogitBoostMARTAndTheirVariant}, but we analyze it within our Restricted Newton Descent formulation in a more general setting beyond logistic loss. To ensure the Newton update is well-defined on $\mathcal{H}$, we assume the following:

\begin{assumption}\label{assumptionHessianNewtonEquationUniqueSolution}
    We assume that the equation $g_k = -H_k f$ admits a unique solution $f=f_{k+1} = - H_k^{-1} g_k$.
\end{assumption}

\cref{assumptionHessianNewtonEquationUniqueSolution} is satisfied, for example, when $L$ is strongly convex, or in many practical settings when $\mathcal{H}$ is a finite dimensional space with a strictly convex loss $L$. Note that this assumption is not required for the gradient regularized scheme in \cref{secGradientRegularizedNewton}.

\subsection{Cosine Angle}

Recall that the iterates of Restricted Newton Descent are derived by minimizing $Q_k(f) = \left\langle f, g_k + \frac{1}{2}H_k f \right\rangle$ restricted to $f\in\mathcal{F}$ at each iteration $k$, yielding $F_k = \sum_{t=1}^k \eta f^w_t$. In order to analyze the restricted Newton scheme, we need a way to measure the error between the exact direction $f_{k+1}$ and the weak learner $f^w_{k+1}$. It turns out that the critical quantity determining the performance of this scheme, is the cosine angle between $f_{k+1}$ and $f^w_{k+1}$ measured in the Newton geometry induced by the Hessian. For this, we use the following norm definition.

\begin{definition}\label{defCosineAngle}
    Let $A : \mathcal{H} \to \mathcal{H}$ be a self-adjoint positive operator. We define the inner product induced by $A$ as
    \begin{equation}
        \langle h, f\rangle_{A} := \langle h, A f\rangle.
    \end{equation}
    When $A = H_k$, we call this the \textbf{Hessian-induced inner product}, with induced norm $\|\cdot\|_{H_k}$.
\end{definition}

The following lemma motivates the use of the Hessian-induced norm and naturally leads to the definition of the cosine angle between weak and exact iterates.

\begin{lemma}\label{lemmaVanillaNewtonClosestElement}
    Minimizing $Q_k(f)$ is equivalent to finding a closest element $f\in\mathcal{F}$ to the exact Newton direction $f_{k+1}$ in the Hessian-induced norm $\|\cdot\|_{H_k}$.
\end{lemma}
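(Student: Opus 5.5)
The plan is to complete the square in $Q_k$ around the exact Newton direction. By \cref{assumptionHessianNewtonEquationUniqueSolution}, $f_{k+1} = -H_k^{-1} g_k$ is well defined and satisfies $g_k = -H_k f_{k+1}$. We substitute this into $Q_k(f) = \langle f, g_k \rangle + \frac{1}{2}\langle f, H_k f\rangle$ to get
\begin{align*}
    Q_k(f) = -\langle f, H_k f_{k+1}\rangle + \tfrac{1}{2}\langle f, H_k f\rangle .
\end{align*}

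Next we expand $\frac{1}{2}\|f - f_{k+1}\|_{H_k}^2 = \frac{1}{2}\langle f - f_{k+1}, H_k (f - f_{k+1})\rangle$ using \cref{defCosineAngle}. Self-adjointness of $H_k$ lets us combine the two cross terms $\langle f, H_k f_{k+1}\rangle$ and $\langle f_{k+1}, H_k f\rangle$ into $2\langle f, H_k f_{k+1}\rangle$. This yields the identity
\begin{align*}
    Q_k(f) = \tfrac{1}{2}\|f - f_{k+1}\|_{H_k}^2 - \tfrac{1}{2}\|f_{k+1}\|_{H_k}^2 .
\end{align*}

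The second term does not depend on $f$. Hence, for any subset $\mathcal{F}\subset\mathcal{H}$, $\argmin_{f\in\mathcal{F}} Q_k(f) = \argmin_{f\in\mathcal{F}} \|f - f_{k+1}\|_{H_k}$, since squaring a nonnegative quantity and applying a positive affine map both preserve minimizers. The equivalence holds at the level of the full argmin sets, so non-uniqueness of the weak iterate causes no difficulty.

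The only delicate point is that $\|\cdot\|_{H_k}$ is a genuine (semi)norm, which requires $H_k$ to be positive (semi)definite. This follows from convexity of $L$, because the Hessian of a twice Fréchet differentiable convex function is positive semidefinite. The identity itself is purely algebraic and needs neither definiteness nor invertibility beyond \cref{assumptionHessianNewtonEquationUniqueSolution}. Minimizing $\|f-f_{k+1}\|_{H_k}^2$ therefore makes sense even if $H_k$ is only semidefinite, in which case "closest" is meant with respect to a seminorm. Strict positivity is needed only if one wants $\|\cdot\|_{H_k}$ to be a norm in the strict sense, as \cref{defCosineAngle} presumes.
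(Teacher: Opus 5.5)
Your proof is correct and follows the paper's argument. Both complete the square using $g_k=-H_kf_{k+1}$ and self-adjointness of $H_k$ to get $\|f-f_{k+1}\|_{H_k}^2 = 2Q_k(f)+\|f_{k+1}\|_{H_k}^2$, so the $f$-independent constant drops out of the minimization over $\mathcal{F}$. Your semidefiniteness caveat is harmless but unnecessary: uniqueness in \cref{assumptionHessianNewtonEquationUniqueSolution} forces $\ker H_k=\{0\}$, which together with $H_k\succeq 0$ makes $\|\cdot\|_{H_k}$ a genuine norm.
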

\begin{proof}
    Using the Hessian-induced norm and the exact Newton update $f_{k+1}$, we can write
    \begin{align*}
        \|f-f_{k+1}\|^2_{H_k}
        &= \|f\|_{H_k}^2 - 2\langle f, f_{k+1}\rangle_{H_k} + \| f_{k+1} \|_{H_k}^2 \\
        &= \langle f, H_k f\rangle + 2\langle f, g_k \rangle + \|f_{k+1}\|^2_{H_k} \\
        &= 2Q_k(f) + \|f_{k+1}\|^2_{H_k}. \qedhere
    \end{align*}
\end{proof}

\begin{definition}
    Consider the iterates produced by vanilla Newton's method. At each iteration $k$, we define the Hessian-induced cosine angle $\Theta_k \in [0, 1]$ between the exact update $f_{k+1}$ and the weak learner $f_{k+1}^w$ as
    \begin{equation}\label{eqHessianCosineAngle}
        \Theta_k := \frac{\langle f_{k+1}, f_{k+1}^w \rangle_{H_k}}{\|f_{k+1}\|_{H_k} \|f_{k+1}^w\|_{H_k}}, \qquad \Theta := \inf_k \Theta_k.
    \end{equation}
\end{definition}

In contrast to \citet{2014AConvergenceRateAnalysisForLogitBoostMARTAndTheirVariant}, which uses a weak learnability assumption specific to binary classification, more recent prior work uses definitions similar in spirit to ours, but defines them by taking suprema and infima over much larger spaces of permissible weak learners and targets \cite{2020RandomizedGradientBoostingMachine, 2020AcceleratingGradientBoostingMachines,2020SnapBoostAHeterogeneousBoostingMachine}; see also \cite{2011GeneralizedBoostingAlgorithmsForConvexOptimization}. Our definition is simpler and more direct, capturing the specific two vectors $f^w_{k+1}$ and $f_{k+1}$ determining the cosine angle. Using this definition, we immediately obtain the following properties connecting the weak and exact Newton iterates with the gradient $g_k$. 

\begin{lemma}\label{lemmaCosineAngleProperties}
    If $\mathcal{F}$ is closed under scalar multiplication, then
    \begin{enumerate}
        \item $\|f_{k+1} - f_{k+1}^w\|^2_{H_k} = (1-\Theta_k^2) \|f_{k+1}\|^2_{H_k}$,
        \item $\|f_{k+1}^w\|_{H_k} = \Theta_k \|f_{k+1}\|_{H_k}$,
        \item $ \|f^w_{k+1}\|_{H_k}^2 = -\langle g_k, f_{k+1}^w\rangle$.
    \end{enumerate}
\end{lemma}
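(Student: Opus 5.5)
The natural route is to prove item 3 first, since it only uses optimality of $f^w_{k+1}$ along the ray it spans; items 1 and 2 then follow by expanding norms. Throughout, $\langle\cdot,\cdot\rangle_{H_k}$ is the Hessian-induced inner product of \cref{defCosineAngle}, and $f_{k+1}=-H_k^{-1}g_k$ as in \cref{assumptionHessianNewtonEquationUniqueSolution}. Two facts will be used repeatedly. First, for any $f\in\mathcal{H}$,
\begin{equation*}
\langle f, f_{k+1}\rangle_{H_k} = \langle f, H_k f_{k+1}\rangle = -\langle f, g_k\rangle .
\end{equation*}
Second, by \cref{lemmaVanillaNewtonClosestElement}, $f^w_{k+1}$ is a closest point of $\mathcal{F}$ to $f_{k+1}$ in $\|\cdot\|_{H_k}$.

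For item 3, I will use the scalability assumption. Since $af^w_{k+1}\in\mathcal{F}$ for every $a\in\mathbb{R}$, the function
\begin{equation*}
\phi(a):=Q_k(af^w_{k+1}) = a\langle f^w_{k+1},g_k\rangle + \tfrac{a^2}{2}\|f^w_{k+1}\|_{H_k}^2
\end{equation*}
is minimized at $a=1$. Setting $\phi'(1)=0$ gives $\|f^w_{k+1}\|^2_{H_k} = -\langle g_k, f^w_{k+1}\rangle$. If $\|f^w_{k+1}\|_{H_k}=0$, then $\phi$ is linear, and minimality at $a=1$ forces $\langle g_k,f^w_{k+1}\rangle=0$, so the identity still holds. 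By the first fact, item 3 can be restated as $\langle f^w_{k+1}, f_{k+1}\rangle_{H_k} = \|f^w_{k+1}\|_{H_k}^2$. Geometrically, $f_{k+1}-f^w_{k+1}$ is $H_k$-orthogonal to $f^w_{k+1}$: $f^w_{k+1}$ is the $H_k$-orthogonal projection of $f_{k+1}$ onto its own line.

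For item 2, substitute this into the definition \eqref{eqHessianCosineAngle} of $\Theta_k$:
\begin{equation*}
\Theta_k = \frac{\|f^w_{k+1}\|_{H_k}^2}{\|f_{k+1}\|_{H_k}\|f^w_{k+1}\|_{H_k}} = \frac{\|f^w_{k+1}\|_{H_k}}{\|f_{k+1}\|_{H_k}} .
\end{equation*}
This also shows $\Theta_k\ge 0$, consistent with $\Theta_k\in[0,1]$.

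For item 1, expand
\begin{equation*}
\|f_{k+1}-f^w_{k+1}\|^2_{H_k} = \|f_{k+1}\|^2_{H_k} - 2\langle f_{k+1},f^w_{k+1}\rangle_{H_k} + \|f^w_{k+1}\|^2_{H_k} = \|f_{k+1}\|^2_{H_k} - \|f^w_{k+1}\|^2_{H_k},
\end{equation*}
using item 3 for the cross term. Item 2 then rewrites the right-hand side as $(1-\Theta_k^2)\|f_{k+1}\|^2_{H_k}$. This is the Pythagorean identity for the orthogonal decomposition above.

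The main obstacle is not the algebra but the degenerate cases in which $\Theta_k$ is defined by division by zero. If $f^w_{k+1}=0$, or more generally $\|f^w_{k+1}\|_{H_k}=0$, the natural convention is $\Theta_k=0$; item 3 gives $\langle g_k,f^w_{k+1}\rangle=0$, and items 1 and 2 hold trivially. If $f_{k+1}=0$, then $g_k=0$, so $f^w_{k+1}=0$ is a minimizer and all three identities are trivial. I would state these conventions explicitly at the start. I would also note that $H_k$ is only positive semidefinite in general, so $\|\cdot\|_{H_k}$ is a seminorm; none of the steps above require definiteness.
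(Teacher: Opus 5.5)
Your proof is correct and follows essentially the same route as the paper: both use first-order optimality of $f^w_{k+1}$ along the ray $\{a f^w_{k+1} : a\in\mathbb{R}\}$ (guaranteed by scalability) together with $H_k f_{k+1} = -g_k$. The only difference is the order. You differentiate $Q_k$ directly to obtain item 3 first, while the paper differentiates the equivalent distance $\tfrac12\|\alpha f^w_{k+1} - f_{k+1}\|_{H_k}^2$ to obtain $\langle f_{k+1}, f^w_{k+1}\rangle_{H_k} = \|f^w_{k+1}\|_{H_k}^2$ first and deduces item 3 last; your explicit handling of the degenerate zero-norm cases is extra care the paper omits.
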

\begin{proof}
    Since $\mathcal{F}$ is closed under scalar multiplication, the objective $Q_k(\alpha f_{k+1}^w)$ is minimized at $\alpha=1$. Differentiating the equivalent objective (see \cref{lemmaVanillaNewtonClosestElement}) $\phi(\alpha) := \frac{1}{2}\|\alpha f_{k+1}^w - f_{k+1}\|_{H_k}^2$ with respect to $\alpha$ gives the derivative $\phi'(\alpha) = \langle f_{k+1}^w, \alpha f_{k+1}^w - f_{k+1} \rangle_{H_k}$. Setting this to zero at $\alpha=1$ yields
    $
        \langle f_{k+1}, f_{k+1}^w \rangle_{H_k} = \|f_{k+1}^w\|_{H_k}^2.
    $
    Substituting this into the definition of $\Theta_k$, we obtain
    \begin{equation}
        \Theta_k = \frac{\|f_{k+1}^w\|_{H_k}^2}{\|f_{k+1}\|_{H_k} \|f_{k+1}^w\|_{H_k}} = \frac{\|f_{k+1}^w\|_{H_k}}{\|f_{k+1}\|_{H_k}},
    \end{equation}
    which proves the second identity. The first identity then follows immediately from expanding the norm using inner products. The third identity follows from the fact that $-\langle g_k, f_{k+1}^w\rangle = \langle H_k f_{k+1}, f_{k+1}^w\rangle = \langle f_{k+1}, f_{k+1}^w\rangle_{H_k}$.
\end{proof}

\subsection{Decrease in Loss for Vanilla Restricted Newton}

We now show that the restricted Newton scheme guarantees a strict decrease in loss at each iteration, forming the basis of our convergence analysis. We rely on the following standard result from optimization theory, see \cref{app:SmoothnessLemmas} for more details.

\begin{lemma}\label{lemmaStandardSmoothnessUpperBound}
    If the loss $L$ is $S$-smooth, then for all $f\in\mathcal{H}$
    \begin{align}
        L(F_k+f) \leq L(F_k) + \langle g_k, f\rangle + \frac{S}{2} \|f\|^2.
    \end{align}
\end{lemma}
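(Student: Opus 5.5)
The plan is the standard descent-lemma argument, written out for a Hilbert space. Here $S$-smoothness means that $\nabla L$ is $S$-Lipschitz: $\|\nabla L(h)-\nabla L(h')\|\le S\|h-h'\|$ for all $h,h'\in\mathcal{H}$. Fix $F_k$ and $f\in\mathcal{H}$, and define the scalar function $\varphi(t):=L(F_k+tf)$ for $t\in[0,1]$.

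First, I would show that $\varphi$ is differentiable with $\varphi'(t)=\langle f,\nabla L(F_k+tf)\rangle$. This follows from Fréchet differentiability of $L$ at $F_k+tf$, applied to the perturbation $sf$ and letting $s\to0$. Since $\nabla L$ is Lipschitz, $\varphi'$ is continuous, so the fundamental theorem of calculus gives
\begin{align*}
L(F_k+f)-L(F_k)=\int_0^1\langle f,\nabla L(F_k+tf)\rangle\,dt .
\end{align*}

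Next, subtract $\langle g_k,f\rangle=\int_0^1\langle f,g_k\rangle\,dt$ from both sides and apply Cauchy--Schwarz inside the integral:
\begin{align*}
L(F_k+f)-L(F_k)-\langle g_k,f\rangle
&=\int_0^1\langle f,\nabla L(F_k+tf)-\nabla L(F_k)\rangle\,dt\\
&\le\int_0^1\|f\|\,\|\nabla L(F_k+tf)-\nabla L(F_k)\|\,dt .
\end{align*}
The Lipschitz bound gives $\|\nabla L(F_k+tf)-\nabla L(F_k)\|\le St\|f\|$, so the right-hand side is at most $\int_0^1 St\|f\|^2\,dt=\frac{S}{2}\|f\|^2$. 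This is exactly the claimed inequality.

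The only step needing care is the first one: justifying that $\varphi$ is continuously differentiable on $[0,1]$ so the fundamental theorem of calculus applies. If the paper instead defines $S$-smoothness through the Hessian, namely $\langle f,\nabla^2L(h)f\rangle\le S\|f\|^2$, I would use a second-order Taylor argument. Twice Fréchet differentiability gives $\varphi''(t)=\langle f,\nabla^2 L(F_k+tf)f\rangle\le S\|f\|^2$. Then the bound follows from Taylor's theorem with Lagrange remainder, $\varphi(1)=\varphi(0)+\varphi'(0)+\tfrac12\varphi''(\xi)$ for some $\xi\in(0,1)$. This requires only that $\varphi$ be twice differentiable, not that $\varphi''$ be continuous.
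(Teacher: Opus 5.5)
Your proof is correct, but your main route differs from the paper's. The paper argues at second order. It writes $L(F_k+f)=L(F_k)+\langle g_k,f\rangle+\int_0^1(1-t)\langle f,\nabla^2L(F_k+tf)f\rangle\,dt$ and bounds the integrand by $S\|f\|^2$. For that bound it uses the fact, stated next to its definition, that an $S$-Lipschitz gradient gives $\nabla^2L\preceq SI$.

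You stay at first order, using the fundamental theorem of calculus for $\varphi(t)=L(F_k+tf)$, Cauchy--Schwarz, and $\|\nabla L(F_k+tf)-g_k\|\le St\|f\|$. This is the textbook descent-lemma proof. It works directly from the paper's stated definition of $S$-smoothness and needs only first-order Fr\'echet differentiability. It also never has to ask in what sense $\nabla^2L$ is the derivative of $\nabla L$. The paper defines the Hessian only through a second-order expansion, whereas the integral remainder, like your formula for $\varphi''$, tacitly treats it as the derivative of $\nabla L$.

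What the Hessian route buys in exchange is a weaker curvature hypothesis. It uses only the one-sided bound $\langle f,\nabla^2L(h)f\rangle\le S\|f\|^2$, which for nonconvex $L$ is genuinely weaker than an $S$-Lipschitz gradient; for the convex losses in the paper the two coincide.

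Your fallback is essentially the paper's argument with the Lagrange remainder in place of the integral one. That substitution removes the need for any integrability of $\varphi''$ along the segment.
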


\begin{definition}\label{defHessianDominatedLoss}
    We say that the loss $L$ is \textbf{Hessian-dominated} if there exists a constant $c>0$ such that the exact iterates $f_{k+1}$ satisfy $\|f_{k+1}\|^2_{H_k} \geq c L(F_k)$.
\end{definition}

\cref{defHessianDominatedLoss} generalizes the condition derived in \citep[Theorem 4]{2014AConvergenceRateAnalysisForLogitBoostMARTAndTheirVariant} for LogitBoost, to general losses using the language of Hessian norms and weak iterates. Examples of Hessian-dominated losses include binary and categorical cross entropy, both with constant $c=1$ (see \cref{app:HessianDominatedLoss} for more details).

We are now ready to state and prove our main convergence result for Restricted Newton Descent.

\begin{theorem}[\textbf{Global Rate I}]\label{lemmaStandardSmoothnessResults}
    If $L$ is $S$-smooth, $\mu$-strongly convex, and $\mathcal{F}$ closed under scalar multiplication, then for any learning rate $\eta \in (0, \frac{2\mu}{S})$ we have
    \begin{align}
        L(F_k) - L(F_{k+1}) 
        &\geq \eta\left( 1 -\frac{\eta S}{2\mu}\right) \Theta^2\|f_{k+1}\|_{H_k}^2.
    \end{align}
    Consequently, if $L$ satisfies \cref{defHessianDominatedLoss}, then vanilla restricted Newton's method converges linearly with rate $(1-\rho)$, where $\rho = c\Theta^2 \eta(1 - \frac{\eta S}{2\mu})$:
    \begin{align}
        L(F_{k+1}) \leq (1-\rho) L(F_k).
    \end{align}
\end{theorem}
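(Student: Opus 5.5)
We apply \cref{lemmaStandardSmoothnessUpperBound} with $f = \eta f^w_{k+1}$, which gives
\begin{align*}
    L(F_{k+1}) \leq L(F_k) + \eta\langle g_k, f^w_{k+1}\rangle + \frac{\eta^2 S}{2}\|f^w_{k+1}\|^2 .
\end{align*}
The linear term is handled by the third identity of \cref{lemmaCosineAngleProperties}, which says $\langle g_k, f^w_{k+1}\rangle = -\|f^w_{k+1}\|^2_{H_k}$. It therefore remains to control the quadratic term, which is measured in the ambient norm, by the Hessian-induced norm.

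Since $L$ is $\mu$-strongly convex, we have $H_k \succeq \mu I$. We take this as the standard consequence of strong convexity for twice Fréchet differentiable functions: compare the second-order expansion along a ray with the strong convexity lower bound. It gives $\|f\|^2 \leq \frac{1}{\mu}\|f\|^2_{H_k}$ for every $f \in \mathcal{H}$. Substituting yields
\begin{align*}
    L(F_k) - L(F_{k+1}) \geq \eta\Big(1 - \frac{\eta S}{2\mu}\Big)\|f^w_{k+1}\|^2_{H_k}.
\end{align*}
The coefficient is positive precisely because $\eta < 2\mu/S$. The second identity of \cref{lemmaCosineAngleProperties} then gives $\|f^w_{k+1}\|^2_{H_k} = \Theta_k^2\|f_{k+1}\|^2_{H_k} \geq \Theta^2\|f_{k+1}\|^2_{H_k}$, which is the first claim.

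For the linear rate, we insert \cref{defHessianDominatedLoss}, namely $\|f_{k+1}\|^2_{H_k} \geq cL(F_k)$, into the decrease bound. This gives $L(F_k) - L(F_{k+1}) \geq \rho L(F_k)$, and rearranging yields $L(F_{k+1}) \leq (1-\rho)L(F_k)$.

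Two subtleties need care. The first is justifying $H_k \succeq \mu I$ from $\mu$-strong convexity in the Hilbert-space Fréchet setting; I would do this by taking the limit of second-order difference quotients. The second is that the rate implicitly requires $L \geq 0$, or else one should interpret $L$ as the suboptimality $L - L^*$, as in the cross-entropy examples. I would note this in passing.
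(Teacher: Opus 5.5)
Your proposal is correct and follows the paper's proof step for step: the smoothness upper bound with $f=\eta f^w_{k+1}$, the identity $\langle g_k, f^w_{k+1}\rangle = -\|f^w_{k+1}\|_{H_k}^2$, the bound $\|u\|^2 \le \frac{1}{\mu}\|u\|_{H_k}^2$ from strong convexity, the identity $\|f^w_{k+1}\|_{H_k} = \Theta_k\|f_{k+1}\|_{H_k}$ (where positivity of the coefficient for $\eta<2\mu/S$ justifies passing to $\Theta^2$), and then Hessian dominance. Your sign caveat matters only for reading the recursion as convergence. The inequality $L(F_{k+1})\le(1-\rho)L(F_k)$ itself follows by multiplying the dominance inequality by a positive factor, whatever the sign of $L(F_k)$.
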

\begin{proof}
    Starting with the standard smoothness upper bound $L(F+f) \leq L(F) + \langle g, f \rangle + \frac{S}{2}\|f\|^2$ from \cref{lemmaStandardSmoothnessUpperBound}, we substitute the weak update $f = \eta f^w_{k+1}$. Using strong convexity to write $\|u\|^2 \leq \frac{1}{\mu}\|u\|_{H_k}^2$, and the cosine angle properties from \cref{lemmaCosineAngleProperties}, we obtain
    \begin{align}
        L(F_k) -L(F_{k+1})
        &\geq - \eta \langle g_k, f_{k+1}^w \rangle - \frac{S \eta^2}{2} \|f_{k+1}^w\|^2 \\
        &\geq \eta \|f_{k+1}^w\|_{H_k}^2 - \frac{S \eta^2}{2\mu} \|f_{k+1}^w\|_{H_k}^2 \\
        &= \left(\eta -\frac{S \eta^2}{2\mu} \right)\Theta_k^2\|f_{k+1}\|_{H_k}^2.
    \end{align}
    Applying \cref{defHessianDominatedLoss}, we get $L(F_{k+1}) \leq L(F_k) - \rho L(F_k)$, which implies linear convergence.
\end{proof}
\begin{remark}
    Our bound compares favourably to SnapBoost \cite{2020SnapBoostAHeterogeneousBoostingMachine}, which proved a linear rate of $1-\rho_{snap} = 1 - \Theta^2_{snap}\frac{\mu^2}{S^2}$ for a fixed learning rate $\eta = \frac{\mu}{S}$. While our definitions of $\Theta$ are comparable, our result yields a better dependence on the condition number $\frac{S}{\mu}$. Furthermore, we obtain essentially the same rate as \citet{2014AConvergenceRateAnalysisForLogitBoostMARTAndTheirVariant}, where they instead use a change of Hessian lemma, and their probability clamping is replaced by strong convexity in our lemma.
\end{remark}

While cross entropy loss itself is not strongly convex, this issue is typically addressed in two ways. First, for practical applications $\ell_2$-regularization is almost always applied to the loss function, yielding a strongly convex loss. Second, probability values can be clipped to $[\epsilon, 1-\epsilon]$ for some $\epsilon>0$ as in \cite{2014AConvergenceRateAnalysisForLogitBoostMARTAndTheirVariant}, which implicitly implies strong convexity. An alternative approach to \cref{lemmaStandardSmoothnessResults} could be to clip the output tree values, in combination with a \textit{change of Hessian Lemma}, similar to \citep[Lemma 9]{2014AConvergenceRateAnalysisForLogitBoostMARTAndTheirVariant} to obtain a linear rate. However, we believe our approach offers a more concise and simpler theoretical treatment.

\section{Gradient Regularized Restricted Newton}\label{secGradientRegularizedNewton}
Despite the fast quadratic local convergence of Newton's method in finite-dimensional convex optimization, the method does not have global convergence guarantees for general smooth convex functions. A classical example of this is $L(x) = \sqrt{1+x^2}$ defined for $x\in \mathbb{R}$, see \cref{app:newton_divergence}. However, cubic regularized Newton \cite{2006CubicRegularizationOfNewtonMethodAndItsGlobalPerformance} and the more recent gradient regularized Newton \cite{2023RegularizedNewtonMethodWithGlobalConvergence,2024GradientRegularizationOfNewtonMethodWithBregmanDistances} achieve a global $O(\frac{1}{k^2})$ convergence rate. In this section, we extend the latter scheme to Newton boosting with weak learners within our framework of Restricted Newton Descent.

Instead of minimizing the local quadratic approximation $Q_k(f)$ at each iteration $k$, gradient regularized Newton's method minimizes the following regularized objective:
\begin{align*}
    Q_k^{reg}(f) 
    &= \left\langle f,g_k + \frac{1}{2}H_k[f]  \right\rangle + \frac{\lambda_k}{2}\|f\|^2 \\
    &= Q_k(f) + \frac{\lambda_k}{2}\|f\|^2,
\end{align*}
where $\lambda_k \in (0, \infty)$ is a scalar regularization parameter. For losses with $2M$-Lipschitz continuous Hessians, we will allow values $\lambda_k \geq \sqrt{M\|g_k\|}$. Minimizing $Q_k^{reg}(f)$ yields the exact regularized update
\begin{align}\label{eqRegBoostingRoundExact}
    f_{k+1}  = - (H_k + \lambda_k I)^{-1}[g_k].
\end{align}
In other words, gradient regularized Newton is simply an $\ell_2$-regularized Newton step where the $\ell_2$-regularization is adaptive at each iteration, being proportional to the square root of the gradient norm $\sqrt{\|g_k\|}$. Similar to \cref{secVanillaNewtonBoosting}, we consider Restricted Newton Descent using weak learners rather than the exact update. The full gradient regularized scheme is detailed below in \cref{alg:boostingGradRegNewton} (the implicit exact step is denoted in parentheses):
\begin{algorithm}[H]
\caption{Gradient-Regularized Restricted Newton}
\label{alg:boostingGradRegNewton}
\begin{algorithmic}[1]
    \STATE \textbf{Input:} Loss $L:\mathcal{H}\to\bbR$ with $2M$-Lipschitz Hessian. Learning rate $\eta$. Weak learner family $\mathcal{F}$. Initial $F_0$.
    \FOR{$k = 0, 1, 2, \dots$}
        \STATE $g_k, \: H_k \leftarrow \nabla L(F_k), \: \nabla^2L(F_k)$
        \STATE Choose $\lambda_k \geq \sqrt{M\|g_k\|}$
        \STATE $\big($Exact target $f_{k+1} \leftarrow - (H_k + \lambda_k I)^{-1}g_k \big)$
        \STATE Weak learner $f^{w}_{k+1} \leftarrow \argmin_{f \in \mathcal{F}} Q_k^{reg}(f)$
        \STATE $F_{k+1} \leftarrow F_k + \eta f^{w}_{k+1}$
    \ENDFOR
\end{algorithmic}
\end{algorithm}

\subsection{Properties of Gradient Regularization}
We begin by establishing some basic properties the gradient regularized scheme. In the subsequent analysis, define $K_k := H_k + \lambda_k I$.

\begin{lemma}\label{lemmaLambdafMinusGkHk}
    The exact regularized update \eqref{eqRegBoostingRoundExact} satisfies
    \begin{equation}\label{eqRegNewtonUpdateDirection}
        \lambda_k f_{k+1} = -\left( g_k + H_k[f_{k+1}] \right).
    \end{equation}
\end{lemma}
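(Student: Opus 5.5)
The identity is a direct rearrangement of the defining equation of the exact regularized update \eqref{eqRegBoostingRoundExact}, so the plan is short and purely algebraic. First, we check that $K_k = H_k + \lambda_k I$ is invertible, so that $f_{k+1}$ is well defined without \cref{assumptionHessianNewtonEquationUniqueSolution}. Convexity of $L$ gives $\langle f, H_k f\rangle \geq 0$ for all $f$, hence
\begin{equation*}
\langle f, K_k f\rangle \geq \lambda_k \|f\|^2 .
\end{equation*}
Since $\lambda_k > 0$, this coercivity estimate for the bounded self-adjoint operator $K_k$ yields, by Lax--Milgram (or simply by the fact that a bounded self-adjoint operator bounded below is invertible), that $K_k$ has a bounded inverse. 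Thus $f_{k+1} = -K_k^{-1} g_k$ is the unique solution of $K_k f = -g_k$.

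Alternatively, $f_{k+1}$ can be characterized variationally. The regularized quadratic $Q_k^{reg}$ is $\lambda_k$-strongly convex and has Fréchet gradient
\begin{equation*}
\nabla Q_k^{reg}(f) = g_k + H_k[f] + \lambda_k f ,
\end{equation*}
computed exactly as $\nabla Q_k$ was earlier. Its unique minimizer is therefore the zero of this gradient, which is again $f_{k+1}$.

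With this in hand, we apply $K_k$ to both sides of $f_{k+1} = -K_k^{-1}[g_k]$ to obtain
\begin{equation*}
H_k[f_{k+1}] + \lambda_k f_{k+1} = -g_k .
\end{equation*}
Moving $H_k[f_{k+1}]$ to the right-hand side gives \eqref{eqRegNewtonUpdateDirection}, using only linearity of $H_k$.

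There is no real obstacle here. The only point requiring care is the justification of invertibility of $K_k$ in infinite dimensions, which needs convexity (positive semidefiniteness of $H_k$) together with $\lambda_k > 0$. That is the step I would state explicitly. If $\|g_k\| = 0$ and one insists on taking $\lambda_k = 0$, the identity is trivial anyway, because $f_{k+1} = 0$ is a valid solution in that case.
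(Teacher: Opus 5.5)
Your proof is correct and matches the paper's, which simply multiplies $f_{k+1} = -(H_k+\lambda_k I)^{-1}[g_k]$ by $H_k+\lambda_k I$ and rearranges. Your extra step, showing $H_k+\lambda_k I$ is boundedly invertible from $H_k \succeq 0$, $\lambda_k>0$ and coercivity, is a detail the paper leaves implicit and is worth stating.
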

\begin{proof}
    Multiply the Newton update \eqref{eqRegBoostingRoundExact} by $(H_k + \lambda_k I)$.
\end{proof}

\begin{lemma}\label{lemmaUsefulProperties}
    If $L$ is convex, then for all $u\in \mathcal{H}$
    \begin{align*}
        \|u\|_{K_k}^2  
        &:= \langle(H_k + \lambda_k I)u, u\rangle =  \lambda_k \|u\|^2 + \langle u, H_k u\rangle \\
        &\geq \lambda_k \|u\|^2.
    \end{align*}
    Furthermore, if the weak learner family $\mathcal{F}$ is closed under scalar multiplication, then the iterates of \cref{alg:boostingGradRegNewton} satisfy
    \begin{enumerate}
        \item $ \|f_{k+1}^w\|_{K_k}^2 = -\langle g_k, f_{k+1}^w \rangle$,
        \item $\lambda_k \|f_{k+1}^w\| \leq \|g_k\|$.
    \end{enumerate}
\end{lemma}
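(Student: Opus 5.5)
The plan has three parts: the norm inequality, property 1 (from first-order optimality along scalings), and property 2 (from property 1 plus Cauchy--Schwarz).

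\textbf{Norm inequality.} Expand the definition: $\langle (H_k+\lambda_k I)u,u\rangle = \langle u, H_k u\rangle + \lambda_k\|u\|^2$. For a convex twice Fréchet differentiable $L$, the Hessian $H_k$ is positive semidefinite. To see this, suppose $\langle u, H_k u\rangle <0$ for some $u$. The second-order expansion along $tu$ then contradicts convexity of $t\mapsto L(F_k+tu)$: the expansion makes it fall strictly below its tangent line at $t=0$ for small $t$. Hence $\langle u,H_ku\rangle\ge 0$, which gives $\|u\|_{K_k}^2\ge \lambda_k\|u\|^2$. In particular, $\|\cdot\|_{K_k}$ is a genuine norm, since $\lambda_k>0$.

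\textbf{Property 1.} Follow the argument of \cref{lemmaCosineAngleProperties}. Since $\mathcal{F}$ is closed under scalar multiplication, $\alpha f^w_{k+1}\in\mathcal{F}$ for all $\alpha\in\mathbb{R}$. Therefore the scalar function
\begin{align*}
    \psi(\alpha):=Q_k^{reg}(\alpha f^w_{k+1}) = \alpha\langle g_k, f^w_{k+1}\rangle + \frac{\alpha^2}{2}\|f^w_{k+1}\|_{K_k}^2
\end{align*}
is minimized at $\alpha=1$. It is a convex quadratic in $\alpha$, so $\psi'(1)=0$, which reads $\langle g_k,f^w_{k+1}\rangle + \|f^w_{k+1}\|^2_{K_k}=0$. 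This is exactly identity 1.

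I deliberately avoid going through the exact target $f_{k+1}$ here. That route would need $K_k$ to be invertible on all of $\mathcal{H}$, which does hold (a bounded self-adjoint operator $\ge\lambda_k I$), but it is unnecessary.

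\textbf{Property 2.} Combine the norm inequality, identity 1 and Cauchy--Schwarz:
\begin{align*}
    \lambda_k\|f^w_{k+1}\|^2 \le \|f^w_{k+1}\|^2_{K_k} = -\langle g_k,f^w_{k+1}\rangle \le \|g_k\|\,\|f^w_{k+1}\|.
\end{align*}
If $f^w_{k+1}=0$, the claim is trivial. Otherwise, divide by $\|f^w_{k+1}\|$ to get $\lambda_k\|f^w_{k+1}\|\le\|g_k\|$.

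The only step that is not pure algebra is showing that convexity implies $H_k\succeq 0$ in this Hilbert space setting. I expect this to be the main, though mild, obstacle. I would handle it with the one-dimensional restriction argument above, using the $o(t^2)$ remainder from the definition of twice Fréchet differentiability together with convexity of $L$ along the line through $F_k$ in direction $u$.
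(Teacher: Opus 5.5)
Your proposal is correct and follows essentially the same route as the paper: nonnegativity of $\langle u, H_k u\rangle$ from convexity, the first-order condition for $t \mapsto Q_k^{reg}(t f^w_{k+1})$ at $t=1$ (which holds because $\mathcal{F}$ is closed under scaling), and then Cauchy--Schwarz. Your one-dimensional argument that convexity forces $H_k \succeq 0$, and your separate treatment of the case $f^w_{k+1}=0$, fill in details that the paper leaves implicit.
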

\begin{proof}
    The first result follows by convexity. For the rest, let
    $$\phi(t) := Q_k^{reg}(t f_{k+1}^w) = \frac{1}{2}t^2 \|f_{k+1}^w\|_{K_k}^2 + t \langle g_k, f_{k+1}^w\rangle.$$
    Differentiating gives $0 = \phi'(t) = t \|f_{k+1}^w\|_{K_k}^2 + \langle g_k, f_{k+1}^w\rangle$. Since $\mathcal{F}$ was assumed to be scalable, the minimum must be attained at  $t=1$, and the result follows. As for the last inequality, this follows by the first inequality and Cauchy-Schwartz:
    \begin{align*}
        \lambda_k \|f_{k+1}^w\|^2 \leq \|f_{k+1}^w\|_{K_k}^2 = -\langle g_k, f_{k+1}^w \rangle \leq \|g_k\| \|f_{k+1}^w\|. 
    \end{align*}
\end{proof}

We impose a standard regularity condition on the loss $L$ in terms of the Lipschitz continuity of its Hessian, as is common for regularized Newton methods \cite{2006CubicRegularizationOfNewtonMethodAndItsGlobalPerformance, 2008AcceleratingTheCubicRegularizationOfNewtonsMethodOnConvexProblems, 2023RegularizedNewtonMethodWithGlobalConvergence, 2024GradientRegularizationOfNewtonMethodWithBregmanDistances}.

\begin{assumption}\label{assumptionHessLipAndCosine}
    The Fr\'echet Hessian $\nabla^2 L : \mathcal{H} \to \mathcal{L(H)}$ is $2M$-Lipschitz continuous for some constant $M>0$:
    \begin{align*}
        \|\nabla^2L(f)-\nabla^2L(g)\| \leq 2M\|f-g\| \qquad \forall f,g\in \mathcal{H}.
    \end{align*}
\end{assumption}

Before we delve into the convergence rate analysis, we recall the following standard results on Lipschitz Hessians, well known in the finite-dimensional $\mathbb{R}^d$ setting. See \cref{app:SmoothnessLemmas} for proof.

\begin{lemma}\label{lemmaStandardHessianResults}
    If $\nabla^2 L$ is $2M$-Lipschitz, then for all $f,g\in \mathcal{H}$
    \begin{enumerate}
        \item $L(f) \leq L(g) + \left\langle \nabla L(g), f-g\right\rangle + \frac{1}{2}\langle \nabla^2 L(g)(f-g), f-g\rangle + \frac{M}{3}\|f-g\|^3$,
        \item  $\|\nabla L(f) - \nabla L(g) - \nabla^2 L(g)(f-g)\| \leq M\|g-f\|^2$.
    \end{enumerate}
\end{lemma}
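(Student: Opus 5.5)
The plan is the standard integral-remainder argument in Hilbert space, reduced to one-dimensional calculus along the segment from $g$ to $f$. Set $h := f-g$ and $\varphi(t) := L(g+th)$ for $t\in[0,1]$. Twice Fréchet differentiability gives $\varphi'(t) = \langle \nabla L(g+th), h\rangle$ and $\varphi''(t) = \langle \nabla^2 L(g+th) h, h\rangle$ via the chain rule. Lipschitz continuity of $\nabla^2 L$ makes $\varphi''$ continuous, so the fundamental theorem of calculus applies. Likewise, $t\mapsto \nabla L(g+th)$ is a $C^1$ map into $\mathcal{H}$ with derivative $\nabla^2 L(g+th)h$. Hence
\begin{align*}
    \nabla L(f) - \nabla L(g) = \int_0^1 \nabla^2 L(g+th)h \, dt
\end{align*}
as a Bochner (or Riemann) integral in $\mathcal{H}$. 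One can also avoid vector-valued integration by testing against an arbitrary unit vector $v$ and applying the scalar FTC to $t\mapsto\langle \nabla L(g+th), v\rangle$.

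For item 2, I subtract $\nabla^2 L(g)h = \int_0^1 \nabla^2 L(g) h\,dt$ to get
\begin{align*}
    \nabla L(f) - \nabla L(g) - \nabla^2 L(g)h = \int_0^1 \big(\nabla^2 L(g+th) - \nabla^2 L(g)\big)h\,dt.
\end{align*}
Taking norms and using \cref{assumptionHessLipAndCosine} bounds the integrand by $2M t\|h\|^2$. Integrating gives $M\|h\|^2$, which is the claimed bound. If one wants to avoid vector integrals, the same computation applied to $\langle\cdot, v\rangle$ with $\|v\|=1$, followed by a supremum over $v$, gives the identical result.

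For item 1, I use Taylor's formula with integral remainder for $\varphi$:
\begin{align*}
    \varphi(1) = \varphi(0) + \varphi'(0) + \int_0^1 (1-t)\varphi''(t)\,dt.
\end{align*}
Since $\int_0^1(1-t)\,dt = \tfrac12$, this gives
\begin{align*}
    L(f) - L(g) - \langle \nabla L(g), h\rangle - \tfrac12\langle\nabla^2 L(g)h,h\rangle = \int_0^1 (1-t)\langle (\nabla^2 L(g+th)-\nabla^2L(g))h,h\rangle\,dt.
\end{align*}
The integrand is bounded by $(1-t)\,2Mt\|h\|^3$. Since $\int_0^1 2t(1-t)\,dt = \tfrac13$, the right-hand side is at most $\tfrac{M}{3}\|h\|^3$, as claimed.

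The only delicate point is justifying the calculus: differentiability of $\varphi$ twice with the stated formulas, and continuity of $\varphi''$ so that the integral forms hold. The definition in the paper gives first- and second-order expansions pointwise. I would first verify that $\nabla L$ is Fréchet differentiable with derivative $\nabla^2 L$, as is implicit in calling $\nabla^2 L$ the Fréchet Hessian; then $\varphi$ is $C^2$ because the Lipschitz assumption makes $t\mapsto \nabla^2 L(g+th)$ norm-continuous. Everything else is the routine one-dimensional computation above.
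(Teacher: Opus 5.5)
Your proposal is correct and follows the paper's proof: item 2 is the same integral-of-the-Hessian-difference bound, and for item 1 your single remainder $\int_0^1(1-t)\varphi''(t)\,dt$ is the paper's iterated integral $\int_0^1\int_0^t \langle h,\nabla^2L(g+sh)h\rangle\,ds\,dt$ with the order of integration swapped, so it gives the same constant $\frac{M}{3}$. Your remark that one must first know $\nabla L$ is Fréchet differentiable with derivative $\nabla^2 L$ is well placed, since the paper's pointwise second-order definition does not state this and its proof uses $\nabla L(g+th)-\nabla L(g)=\int_0^t\nabla^2L(g+sh)h\,ds$ without justification.
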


The following lemma shows that the weak iterates of \cref{alg:boostingGradRegNewton} guarantee a strict decrease in loss. From this point on, we assume that $\mathcal{F}$ is closed under scalar multiplication.

\begin{lemma}\label{lemmaDecreaseInLossNEW}
If $L$ is convex with $2M$-Lipschitz Hessian, and $\eta\leq 2$, then the iterates of \cref{alg:boostingGradRegNewton} satisfy 
\begin{align}
    L(F_k) - L(F_{k+1}) 
    &\geq \left( \eta - \frac{\eta^3}{3} \right) \lambda_k \|f^w_{k+1}\|^2
\end{align}
\end{lemma}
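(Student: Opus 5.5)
The plan is to apply the cubic upper bound of \cref{lemmaStandardHessianResults} (item 1) with $g = F_k$ and $f = F_{k+1} = F_k + \eta f^w_{k+1}$. Writing $u := f^w_{k+1}$, this gives
\begin{align*}
    L(F_{k+1}) \leq L(F_k) + \eta\langle g_k, u\rangle + \frac{\eta^2}{2}\langle H_k u, u\rangle + \frac{M\eta^3}{3}\|u\|^3.
\end{align*}
The linear term will be handled with the first identity of \cref{lemmaUsefulProperties}, which relies on $\mathcal{F}$ being closed under scalar multiplication. Combined with the definition of $\|\cdot\|_{K_k}$, it gives
\begin{align*}
    -\langle g_k, u\rangle = \|u\|_{K_k}^2 = \lambda_k\|u\|^2 + \langle u, H_k u\rangle.
\end{align*}
Substituting this in, the decrease becomes
\begin{align*}
    L(F_k) - L(F_{k+1}) \geq \eta\lambda_k\|u\|^2 + \left(\eta - \frac{\eta^2}{2}\right)\langle u, H_k u\rangle - \frac{M\eta^3}{3}\|u\|^3.
\end{align*}

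Next I drop the middle term. Since $L$ is convex, $H_k$ is positive semidefinite, so $\langle u, H_k u\rangle \geq 0$. Since $\eta \leq 2$, the coefficient $\eta - \eta^2/2$ is nonnegative. Hence the middle term is nonnegative and can be discarded.

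The only remaining step, and the one that needs care, is to absorb the cubic term into the quadratic one. I want to show $M\|u\|^3 \leq \lambda_k\|u\|^2$, i.e.\ $M\|u\| \leq \lambda_k$. For this I will use the second item of \cref{lemmaUsefulProperties}, $\|u\| \leq \|g_k\|/\lambda_k$, together with the choice $\lambda_k \geq \sqrt{M\|g_k\|}$, which is equivalent to $M\|g_k\| \leq \lambda_k^2$. Together these give
\begin{align*}
    M\|u\| \leq \frac{M\|g_k\|}{\lambda_k} \leq \frac{\lambda_k^2}{\lambda_k} = \lambda_k.
\end{align*}
Therefore $\frac{M\eta^3}{3}\|u\|^3 \leq \frac{\eta^3}{3}\lambda_k\|u\|^2$. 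Combining this with the previous display yields $L(F_k) - L(F_{k+1}) \geq \left(\eta - \frac{\eta^3}{3}\right)\lambda_k\|f^w_{k+1}\|^2$, as claimed. This coupling of the step size to $\lambda_k$ is precisely why the regularization is chosen proportional to $\sqrt{\|g_k\|}$.
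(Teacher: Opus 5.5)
Your proposal is correct and follows essentially the same route as the paper: the cubic upper bound, the identity $-\langle g_k, f^w_{k+1}\rangle = \|f^w_{k+1}\|_{K_k}^2$, and absorbing the cubic term via $M\|f^w_{k+1}\| \le M\|g_k\|/\lambda_k \le \lambda_k$. The only difference is cosmetic. You expand the $K_k$-norm right away and drop $(\eta - \eta^2/2)\langle u, H_k u\rangle \ge 0$, whereas the paper keeps $(\eta-\eta^2/2)\|f^w_{k+1}\|_{K_k}^2$ and lower-bounds it by $(\eta-\eta^2/2)\lambda_k\|f^w_{k+1}\|^2$ at the end, which is the same step.
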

\begin{proof}
    Using \cref{lemmaStandardHessianResults} and \cref{lemmaUsefulProperties}, we write
    \begin{align*}
        L&(F_{k+1}) - L(F_k) \\
        &\leq  \eta \langle  g_k,  f_{k+1}^w\rangle + \frac{\eta^2}{2}\langle H_k(f_{k+1}^w),  f_{k+1}^w\rangle + \frac{M}{3} \eta^3\|f_{k+1}^w\|^3 \\
        &= -\eta \|f_{k+1}^w\|_{K_k}^2 + \frac{\eta^2}{2}\|f_{k+1}^w\|_{K_k}^2 - \frac{\eta^2}{2}\lambda_k\|f_{k+1}^w\|^2 + \frac{M}{3}\eta^3 \|f_{k+1}^w\|^3 \\
        &= -\left( \eta - \frac{\eta^2}{2} \right) \|f_{k+1}^w\|_{K_k}^2 - \frac{\eta^2}{2}\lambda_k\|f_{k+1}^w\|^2 +  \frac{M}{3}\eta^3 \|f_{k+1}^w\|^3.
    \end{align*}
    By \cref{lemmaUsefulProperties} and $\lambda_k \geq \sqrt{M \|g_k\|}$ we see that
    \begin{align*}
        M \|f_{k+1}^w\|^3
        \leq \frac{M \|g_k\|}{\lambda_k}\|f_{k+1}^w\|^2
        \leq \lambda_k\|f_{k+1}^w\|^2.
    \end{align*}
    Combining the above, and again using \cref{lemmaUsefulProperties}, gives the result.
\end{proof}

\subsection{Convergence of Restricted GRN}

We now require a way to quantify how well the weak updates $f^w_{k+1}$ approximate the exact updates $f_{k+1}$ in the gradient regularized scheme. Previously in \cref{lemmaStandardSmoothnessResults} for vanilla Newton, the decrease in loss was determined by the Hessian-induced norm $\|\cdot\|_{H_k}$. In contrast, for the regularized scheme the decrease in loss is governed by the quantity $\lambda_k \|f^w_{k+1}\|$. As such, the cosine angle from \cref{defCosineAngle} is no longer well suited for theoretical analysis of GRN, since the scheme and its convergence proof are fundamentally driven by gradient growth rather than directional alignment. Consequently, motivated by \cref{lemmaLambdafMinusGkHk}, we introduce the notion of an implied weak gradient $g^w_k$, together with a corresponding weak gradient edge condition. This condition essentially states that the weak gradient remains within a relative radius of the true gradient, at each iteration $k$.

\begin{definition}\label{defWeakGradientEdge}
    We define the \textbf{implied weak gradient} as
    \begin{align}\label{eqWeakGradient}
        g_k^w : = - (H_k + \lambda_k I)f_{k+1}^w.
    \end{align}
    We say that the iterates $f^w_{k+1}$ have \textbf{weak gradient edge} $\gamma$ if there exists a real number $0<\gamma \leq 1$ satisfying 
    \begin{align}\label{eqWeakGradientEdge}
        \|g_k^w - g_k\|^2 \leq (1-\gamma^2) \|g_k\|^2.
    \end{align}
\end{definition}

The following lemma relates the gradient at consecutive iterations and serves as the primary tool for our convergence analysis.

\begin{lemma}[\textbf{Gradient Growth}]\label{lemmagTProperties}
    If the iterates of $\cref{alg:boostingGradRegNewton}$ have weak gradient edge $\gamma$, then
    \begin{align}\label{eqGradientGrowthMainPaper}
        \|g_{k+1}\|
        &\leq \left(1 + \eta^2 + \eta \sqrt{1-\gamma^2}\right)\|g_k\|.
    \end{align}
\end{lemma}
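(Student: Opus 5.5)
The plan is to expand $g_{k+1}=\nabla L(F_k+\eta f^w_{k+1})$ around $F_k$ to first order. The Taylor remainder is controlled by \cref{lemmaStandardHessianResults}, and the first-order part is rewritten using the implied weak gradient $g_k^w$ of \cref{defWeakGradientEdge}. Throughout I take $\eta\le 1$, the regime of the paper's main results; this is what makes the bound close with constant $1$.

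\textbf{Step 1: splitting off the remainder.} Write
\begin{align*}
g_{k+1} = \big[\nabla L(F_{k+1}) - g_k - \eta H_k f^w_{k+1}\big] + \big[g_k + \eta H_k f^w_{k+1}\big].
\end{align*}
By part 2 of \cref{lemmaStandardHessianResults}, the first bracket has norm at most $M\eta^2\|f^w_{k+1}\|^2$. By part 2 of \cref{lemmaUsefulProperties}, $\|f^w_{k+1}\|\le \|g_k\|/\lambda_k$. Combined with $\lambda_k^2\ge M\|g_k\|$, this gives
\begin{align*}
M\|f^w_{k+1}\|^2 \le \frac{M\|g_k\|^2}{\lambda_k^2}\le \|g_k\|.
\end{align*}
So the remainder contributes at most $\eta^2\|g_k\|$.

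\textbf{Step 2: the linear part.} By the definition of $g_k^w$ we have $H_k f^w_{k+1} = -g_k^w - \lambda_k f^w_{k+1}$. Substituting,
\begin{align*}
g_k + \eta H_k f^w_{k+1} = (1-\eta)g_k + \eta\,(g_k - g_k^w) - \eta\lambda_k f^w_{k+1}.
\end{align*}
Apply the triangle inequality to the three terms:
\begin{itemize}
\item the first is at most $(1-\eta)\|g_k\|$, using $\eta\le 1$;
\item the second is at most $\eta\sqrt{1-\gamma^2}\|g_k\|$, by the weak gradient edge \eqref{eqWeakGradientEdge};
\item the third is at most $\eta\|g_k\|$, again by $\lambda_k\|f^w_{k+1}\|\le\|g_k\|$.
\end{itemize}
The $(1-\eta)$ and $\eta$ terms sum to exactly $1$. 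Adding the Step 1 bound then yields $\|g_{k+1}\|\le(1+\eta^2+\eta\sqrt{1-\gamma^2})\|g_k\|$.

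\textbf{Expected obstacle.} The delicate point is choosing the decomposition in Step 2 so that the $\lambda_k f^w_{k+1}$ term and the $g_k$ term combine to give constant $1$ rather than something larger. Writing $H_kf^w_{k+1}$ through $g_k^w$, as \cref{lemmaLambdafMinusGkHk} suggests for the exact iterate, is exactly what achieves this. This step also relies on $\eta\le 1$ so that $|1-\eta|=1-\eta$; if larger $\eta$ were intended, one would get $|1-\eta|+\eta$ in place of $1$.
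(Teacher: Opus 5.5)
Your proof is correct and follows the paper's argument essentially step for step. You use the same decomposition $g_{k+1} = (1-\eta)g_k + \eta(g_k - g_k^w) - \eta\lambda_k f^w_{k+1} + E$ with $\|E\|\le M\eta^2\|f^w_{k+1}\|^2$, then the triangle inequality, the weak gradient edge, and $\lambda_k\|f^w_{k+1}\|\le\|g_k\|$. The differences are minor: the paper first bounds $M\|f^w_{k+1}\|^2\le\lambda_k\|f^w_{k+1}\|$ so that it can keep the sharper intermediate bound $\bigl(1-\eta(1-\sqrt{1-\gamma^2})\bigr)\|g_k\| + (\eta+\eta^2)\lambda_k\|f^w_{k+1}\|$ for the global-rate proof, and it relies on $\eta\le 1$ implicitly, whereas you state that restriction explicitly.
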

\begin{proof}[Proof Sketch]
    Motivated by \cref{lemmaStandardHessianResults}, we use the identity
    \begin{align}
        g_{k+1} 
        &= g_k + \eta H_k f^w_{k+1} + E \\
        &= g_k + \eta (-g_k^w-\lambda_k f_{k+1}^w) + E \\
        &= (1-\eta) g_k + \eta(g_k - g_k^w) - \eta \lambda_k f^w_{k+1} + E\label{eqGradientGrowth}
    \end{align}
    where $\|E\| \leq M\eta^2 \|f^w_{k+1}\|^2$. Applying the triangle inequality, the weak learner edge, and \cref{lemmaUsefulProperties} yields the result. In the upcoming convergence rate proofs, both the final bound \eqref{eqGradientGrowthMainPaper} and intermediate expressions using \eqref{eqGradientGrowth} play an important role, see \cref{app:GlobalConvergenceProofs} for proofs.
\end{proof}

We now present our main result: establishing the global convergence of gradient regularized Newton's method with weak learners. Despite the inexact updates inherent to boosting and Restricted Newton Descent, \cref{alg:boostingGradRegNewton} retains the $\mathcal{O}(\frac{1}{k^2})$ rate, matching first-order momentum boosting \cite{2020AcceleratingGradientBoostingMachines} without the requirement of fitting two trees per iteration. We use the following standard definition from the cubic Newton literature.

\begin{definition}
    $L$ has \textbf{finite sublevel sets} if a finite optimum $F^* \in \argmin_{F \in \mathcal{H}} L(F)$ exists, and if there exists a constant $\Lambda>0$ such that $L(F) \leq L(F_0) \implies \|F-F^*\| \leq \Lambda$.
\end{definition}

\begin{theorem}[\textbf{Global Rate II}]\label{thm:global_rate_II}
    Assume $L$ is convex with $2M$-Lipschitz Hessian, finite sublevel sets with diameter $\Lambda$, and weak gradient edge $\gamma$. Let $\lambda_k = C\sqrt{M\|g_k\|}$ for some $C\geq 1$. For any learning rate $\eta \in (0,1]$, the gradient regularized Newton scheme has global convergence rate
    \begin{align*}
        L(&F_k) - L(F^*) \leq \\
        &\mathcal{O}\max\left(\frac{C^2 M \Lambda^3}{\eta^2 k^2 \gamma^{12}},\:\: \Lambda\exp\left(-k\eta\gamma^2\big(\frac{\gamma^2}{2}+2-2\eta\big) / 8\right)\right).
    \end{align*}
\end{theorem}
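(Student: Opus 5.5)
Following the Mishchenko-style analysis of gradient regularized Newton, the plan is to combine the per-step decrease of \cref{lemmaDecreaseInLossNEW} with a lower bound on $\lambda_k\|f^w_{k+1}\|$ in terms of $\|g_{k+1}\|$, and then close the loop using convexity and finite sublevel sets. Throughout, write $\delta_k := L(F_k)-L(F^*)$. Since the loss decreases monotonically by \cref{lemmaDecreaseInLossNEW} (using $\eta\le 1$), every iterate lies in the initial sublevel set. Convexity then gives
\[
\delta_k \le \langle g_k, F_k-F^*\rangle \le \Lambda\|g_k\|,
\]
and the same bound holds with $k+1$ in place of $k$.

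\textbf{Step 1: a decrease-versus-gradient inequality.} From \eqref{eqGradientGrowth}, take norms and use the triangle inequality, the edge condition, the bound $\|E\|\le M\eta^2\|f^w_{k+1}\|^2$, and \cref{lemmaUsefulProperties}. This gives
\[
\|g_{k+1}\| \le \bigl(1-\eta+\eta\sqrt{1-\gamma^2}\bigr)\|g_k\| + (\eta+\eta^2)\lambda_k\|f^w_{k+1}\|.
\]
Here I bound $M\|f^w\|^2 \le \lambda_k\|f^w\|$, which follows from $M\|f^w\|\le M\|g_k\|/\lambda_k \le \lambda_k/C^2$.

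I also want a reverse bound showing that $\lambda_k\|f^w_{k+1}\|$ is not too small relative to $\|g_k\|$. Since $\|g^w_k\|\ge (1-\sqrt{1-\gamma^2})\|g_k\|\ge \tfrac{\gamma^2}{2}\|g_k\|$ and $g^w_k=-K_kf^w_{k+1}$, \cref{lemmaUsefulProperties} gives
\[
\tfrac{\gamma^2}{2}\|g_k\|\,\|f^w_{k+1}\| \le \|g_k^w\|\,\|f^w_{k+1}\|\cdot(\ldots),
\]
and from this I hope to extract
\[
\lambda_k\|f^w_{k+1}\|^2 \;\gtrsim\; \gamma^{c}\,\frac{\|g_k\|^{3/2}}{C\sqrt M}.
\]
The direct route uses $\langle g_k, f^w\rangle = -\|f^w\|_{K_k}^2$ together with edge $\Rightarrow \langle g_k,g^w_k\rangle\ge \tfrac{\gamma^2}{2}\|g_k\|^2$. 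The difficulty is that $H_k$ may be large, so $\|f^w\|$ could be tiny. This is exactly why the theorem has two regimes.

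\textbf{Step 2: case split.} In the regime where $\lambda_k\|f^w_{k+1}\|$ dominates $\|g_k\|$ up to $\gamma$-factors, \cref{lemmaDecreaseInLossNEW} gives
\[
\delta_k-\delta_{k+1} \gtrsim \eta\,\lambda_k\|f^w\|^2 = \eta\,(\lambda_k\|f^w\|)^2/\lambda_k \gtrsim \eta\gamma^{c}\|g_k\|^{3/2}/(C\sqrt M) \ge \eta\gamma^{c}\,\delta_k^{3/2}/(C\sqrt M\,\Lambda^{3/2}).
\]
The standard recursion $\delta_k-\delta_{k+1}\ge a\,\delta_k^{3/2}$ yields $\delta_k=\mathcal O(1/(a^2k^2))$, i.e.\ the first term $C^2M\Lambda^3/(\eta^2k^2\gamma^{12})$, where the power $\gamma^{12}$ comes from squaring the accumulated $\gamma^{6}$ factor. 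In the complementary regime, the Step 1 inequality shows that the gradient contracts:
\[
\|g_{k+1}\|\le\Bigl(1-\eta\gamma^2\bigl(\tfrac{\gamma^2}{2}+2-2\eta\bigr)/8\Bigr)\|g_k\|,
\]
where the exponent constant is produced by combining $1-\sqrt{1-\gamma^2}\ge\gamma^2/2$ with the threshold chosen for the split. The gradient norm is nonincreasing in that regime and bounded by the growth lemma \eqref{eqGradientGrowthMainPaper} otherwise. Counting contraction steps therefore gives $\|g_k\|$ exponentially small, and so $\delta_k\le\Lambda\|g_k\|$ gives the second term.

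\textbf{Main obstacle: merging the two regimes.} Steps of the two types interleave, and the quantity tracked differs: $\delta_k$ in one regime, $\|g_k\|$ in the other. I would handle this as follows. Among the first $k$ iterations, at least $k/2$ are of one type. If at least $k/2$ are of the first type, use the monotonicity of $\delta$ and apply the $\delta^{3/2}$ recursion only along those steps. If at least $k/2$ are of the second type, I need a potential that does not increase on first-type steps, so that the contraction cannot be undone. A natural candidate is $\min_{j\le k}\|g_j\|$ combined with $\delta_k\le\delta_j\le\Lambda\|g_j\|$, which holds by monotonicity of $L$. Tuning the threshold in the case split so that the constants match $(\gamma^2/2+2-2\eta)/8$ is the delicate part, and I would fix the threshold last, by back-solving from that target constant.
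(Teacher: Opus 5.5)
Your ingredients are the paper's. You have the intermediate recursion $\|g_{k+1}\|\le(1-\eta(1-\sqrt{1-\gamma^2}))\|g_k\|+(\eta+\eta^2)\lambda_k\|f^w_{k+1}\|$. You split between steps where $\lambda_k\|f^w_{k+1}\|$ is a fixed multiple of $\|g_k\|$ and steps where the gradient contracts by a factor $1-\eta a$. On the first kind, \cref{lemmaDecreaseInLossNEW} and $\delta\le\Lambda\|g\|$ give $\delta_k-\delta_{k+1}\ge\tau\delta_k^{3/2}$, and you run the subsequence recursion along those steps. The general reverse bound you attempt in Step 1 is unnecessary, and as you suspect it is false in general. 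The paper defines the split by non-contraction, $\|g_{t+1}\|\ge(1-\eta a)\|g_t\|$ with $a=(1-\sqrt{1-\gamma^2})/2$. It then gets $\lambda_k\|f^w_{k+1}\|\ge\frac{a}{1+\eta}\|g_k\|$ on those steps simply by rearranging the recursion. Your split on the size of $\lambda_k\|f^w_{k+1}\|$ is the same dichotomy read in the other direction.

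\textbf{The gap is the merging step.} With a $k/2$ counting split, the case ``at least $k/2$ contracting steps'' still permits up to $k/2$ non-contracting steps. Each of these may multiply $\|g\|$ by up to $1+\eta^2+\eta\sqrt{1-\gamma^2}$, whereas a contracting step only gains $1-\eta a$ with $a\approx\gamma^2/4$. The product of these factors is in general not below $1$.

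The running minimum $\min_{j\le k}\|g_j\|$ does not rescue this. Nothing in the available estimates stops growth and contraction steps from interleaving so that $\|g_j\|$ drifts upward and its minimum stays at $\|g_0\|$.

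The missing idea is a $\gamma$-dependent counting fraction. Split on $|\mathcal{I}_k|\ge pk$ versus $|\mathcal{I}_k|<pk$, with $p$ small enough that $p(\eta+\sqrt{1-\gamma^2})<(1-p)a$; the paper takes $p=a/2$.
\begin{itemize}
\item In the second case, $\|g_k\|\le(1+\eta^2+\eta\sqrt{1-\gamma^2})^{pk}(1-\eta a)^{(1-p)k}\|g_0\|$. This decays with exponent $-k\eta\,[a-p(a+\eta+\sqrt{1-\gamma^2})]=-\frac{k\eta}{8}(1-\sqrt{1-\gamma^2})(3-\sqrt{1-\gamma^2}-2\eta)$. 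That is where the constant in the linear term comes from: it is an averaged rate over all $k$ steps, not the per-step contraction factor you wrote.
\item In the first case, the subsequence recursion runs for only $pk$ steps and so costs $p^{-2}$. The bound is $\propto\tau^{-2}p^{-2}\propto a^{-6}$, and $a\approx\gamma^2/4$ gives the $\gamma^{-12}$.
\end{itemize}
Under your $k/2$ split the first branch would carry only $\gamma^{-8}$. The extra $\gamma^{-4}$ is exactly the price of making the second branch contract. So the stated powers and constants cannot be reached by back-solving the case-split threshold alone; the counting fraction itself must shrink with $\gamma$.
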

\begin{proof}[Proof Sketch] We generalize the analysis of \citet{2023RegularizedNewtonMethodWithGlobalConvergence} to the Hilbert space setting with inexact weak learners. The proof proceeds by analyzing two regimes based on the fraction of iterates where the gradient fails to contract sufficiently, determined by $\eta$ and $\gamma$. In the first regime, we derive a recurrence of the form $\alpha_{k+1} \leq \alpha_k - \tau \alpha_k^{3/2}$ for some $\tau > 0$, yielding a $\mathcal{O}(\frac{1}{\eta^2 \gamma^{12} k^2})$ rate. In the second regime, the cumulative contraction dominates, resulting in a linear rate. The global rate is governed by the maximum of these two bounds; see \cref{app:GlobalConvergenceProofs} for more details.
\end{proof}

\begin{theorem}[\textbf{Local Rate}]\label{thm:local_rate}
Let $\lambda_k = C\sqrt{M\|g_k\|}$ for some $C\geq 1$. If $L$ is $\mu$-strongly convex with $2M$-Lipschitz Hessian, weak gradient edge $\gamma$, and if there exists a $k_0\geq 0$ such that $\|g_{k_0}\| \leq \frac{\mu^2}{4C^2M}\left( \frac{1-\sqrt{1-\gamma^2}}{1+\sqrt{1-\gamma^2}}\right)^2$, then for all $k\geq k_0$ we have
\vspace{-10pt}
\begin{align}
    \|g_{k+1}\| \leq \frac{3 + \rho}{4} \|g_k\|,
\end{align}
where $\rho = 1 - \eta(1-\sqrt{1-\gamma^2}) = \mathcal{O}(1-\eta\frac{ \gamma^2}{2})$. Hence the iterates of \cref{alg:boostingGradRegNewton} converge linearly locally.
\end{theorem}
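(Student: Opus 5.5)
The plan is to work directly with the gradient identity \eqref{eqGradientGrowth} from the proof sketch of \cref{lemmagTProperties}, rather than its crude consequence \eqref{eqGradientGrowthMainPaper}. Then I will use strong convexity to show that the two "error" terms $\eta\lambda_k f^w_{k+1}$ and $E$ are small relative to $\|g_k\|$ once $\|g_k\|$ is below the stated threshold. Finally, an induction on $k$ propagates the threshold to all $k\ge k_0$.

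\textbf{Step 1 (one-step bound).} Write $s:=\sqrt{1-\gamma^2}$, so that $\rho = 1-\eta(1-s)$. From
\begin{align*}
g_{k+1} = (1-\eta) g_k + \eta(g_k - g_k^w) - \eta \lambda_k f^w_{k+1} + E, \qquad \|E\|\le M\eta^2\|f^w_{k+1}\|^2,
\end{align*}
the triangle inequality and the weak gradient edge give
\begin{align*}
\|g_{k+1}\| \le \rho\|g_k\| + \eta\lambda_k\|f^w_{k+1}\| + M\eta^2\|f^w_{k+1}\|^2 .
\end{align*}
Since $\frac{3+\rho}{4}-\rho = \frac{3}{4}\eta(1-s)$, it suffices to show
\begin{align*}
\lambda_k\|f^w_{k+1}\| + \eta M\|f^w_{k+1}\|^2 \le \tfrac34(1-s)\|g_k\| .
\end{align*}

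\textbf{Step 2 (bounding $\|f^w_{k+1}\|$ via strong convexity).} $\mu$-strong convexity gives $H_k\succeq \mu I$, so $\|u\|_{K_k}^2\ge(\mu+\lambda_k)\|u\|^2\ge \mu\|u\|^2$. Combining this with item 1 of \cref{lemmaUsefulProperties} and Cauchy--Schwarz, exactly as in the proof of item 2, yields $\|f^w_{k+1}\|\le \|g_k\|/\mu$. Hence
\begin{align*}
\lambda_k\|f^w_{k+1}\| \le \frac{C\sqrt{M\|g_k\|}}{\mu}\|g_k\|, \qquad M\|f^w_{k+1}\|^2\le \frac{M\|g_k\|}{\mu^2}\|g_k\|.
\end{align*}
Set $\delta := \frac{1-s}{1+s}$. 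The hypothesis $\|g_k\|\le \frac{\mu^2}{4C^2M}\delta^2$ gives $C\sqrt{M\|g_k\|}/\mu\le \delta/2$ and $M\|g_k\|/\mu^2\le \delta^2/(4C^2)\le\delta^2/4$, using $C\ge1$. With $\eta\le1$ and $\delta\le 1$, the left side of the target inequality is at most $(\delta/2+\delta^2/4)\|g_k\|\le \frac34\delta\|g_k\|\le\frac34(1-s)\|g_k\|$, since $\delta\le 1-s$. This proves the one-step contraction whenever $\|g_k\|$ is below the threshold.

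\textbf{Step 3 (induction).} Because $\frac{3+\rho}{4}<1$ (as $\eta>0$ and $\gamma>0$ give $\rho<1$), the contraction yields $\|g_{k+1}\|\le\|g_k\|$. So the threshold condition at $k_0$ holds for every $k\ge k_0$, and the contraction applies at every step. Iterating gives $\|g_k\|\le(\frac{3+\rho}{4})^{k-k_0}\|g_{k_0}\|$, which is local linear convergence. The asymptotic $\rho=\mathcal{O}(1-\eta\gamma^2/2)$ follows from $1-\sqrt{1-\gamma^2}\ge\gamma^2/2$.

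The main obstacle is Step 2: making the constants close so that the error terms fit inside $\frac34\eta(1-s)\|g_k\|$. The key point is that the threshold is expressed in terms of $\delta$, which is no larger than $1-s$. If a sharper bound were needed, I would keep $\mu+\lambda_k$ in the denominator instead of $\mu$, but the crude bound above appears sufficient.
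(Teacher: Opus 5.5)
Your proof is correct and follows the paper's argument: you use the same one-step recursion $\|g_{k+1}\|\le\rho\|g_k\|+\eta\lambda_k\|f^w_{k+1}\|+M\eta^2\|f^w_{k+1}\|^2$, a strong-convexity bound on $\|f^w_{k+1}\|$, and the same $y+y^2\le\tfrac34\Delta$ smallness argument under the stated threshold. The paper bounds $\|f^w_{k+1}\|$ through the implied weak gradient via $\|g^w_k\|\le(1+\sqrt{1-\gamma^2})\|g_k\|$, whereas you bound $\|f^w_{k+1}\|\le\|g_k\|/\mu$ directly from item 1 of \cref{lemmaUsefulProperties} and Cauchy--Schwarz; this saves the factor $1+\sqrt{1-\gamma^2}$, so the threshold could even be relaxed to $\frac{\mu^2}{4C^2M}(1-\sqrt{1-\gamma^2})^2$, and you also make explicit the induction over $k\ge k_0$ that the paper leaves implicit.
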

\begin{proof}[Proof Sketch]\vspace{-3pt}
    The weak gradient edge implies that $\|g_k^w\| \leq \left( 1 + \sqrt{1-\gamma^2} \right) \|g_k\| := C_\gamma \|g_k\|$, while the $\mu$-strong convexity implies that
    \vspace{-2pt}
    \begin{align*}
        \|f^w_{k+1}\| 
        = \|(H_k+\lambda I)^{-1} g^w_k\|
        \leq \frac{1}{\mu}\|g^w\|
        \leq \frac{C_\gamma}{\mu}\|g^w\|.
    \end{align*}
    \vspace{-2pt}Combining these bounds with \cref{lemmaUsefulProperties,lemmagTProperties} yields the recurrence
    \vspace{-2pt}
    \begin{align*}
        \|g_{k+1}\|
        &\leq \rho \|g_k\| + \sqrt{C^2 M \eta^2 \frac{C_\gamma^2}{\mu^2}} \|g_k\|^\frac{3}{2} + C^2 M \eta^2 \frac{C_\gamma^2}{\mu^2}\|g_k\|^2.
    \end{align*}
    \vspace{-2pt}The result follows by solving for the region of strict contraction $\|g_{k+1}\| < \|g_k\|$, where the dependence on $\eta$ cancels out. See \cref{app:GlobalConvergenceProofs} for details. 
\end{proof}
\begin{remark}
    If $\eta=1$ and $\gamma=1$, the coefficient of the linear $\|g_k\|$ term vanishes, allowing for a superlinear rate with exponent $\frac{3}{2}$, recovering the classical result of \citet{2023RegularizedNewtonMethodWithGlobalConvergence}. However, when $\eta < 1$ or $\gamma < 1$, this term is non-zero, and we are only able to prove a linear rate. We conjecture that superlinear rates are generally unattainable in Newton schemes involving weak iterates, suggesting an interesting direction for future research.
\end{remark}

\section{Practical Considerations}

In this section, we discuss when the regularity assumptions of \cref{lemmaStandardSmoothnessResults,thm:global_rate_II,thm:local_rate} are satisfied in practice for boosting, and how the GRN scheme can be implemented into existing tree boosting algorithms with negligible computational overhead.

\subsection{Regularity of the Boosting Objective}

Our convergence analysis relies on standard convex optimization assumptions, such as Lipschitz-continuous Hessians, smoothness, and strong convexity. Using the same notation as \cref{subsec:GBDTsAsRestrictedConvexOptimization}, because the boosting empirical risk $L(F) = \frac{1}{N} \sum_{i=1}^N l(F(x_i), y_i)$ is constructed from a pre-specified element-wise loss $l$ (e.g., MSE, cross entropy, Charbonnier), one can prove that the regularity of $l$ carries over to $L$ in the boosting space $\mathcal{H} = \mathcal{L}^2(\widehat{\nu}_{X,N})$. We formalize this result below. Note, however, that one must be careful when dealing with the previously mentioned Lipschitz constants: the regularity conditions of $l$ are defined in the Euclidean geometry, while the Lipschitz constants used by \cref{lemmaStandardSmoothnessResults,thm:global_rate_II,thm:local_rate} are defined in the boosting space with its own intrinsic geometry and inner products. 

\vspace{5pt}
\begin{proposition}\label{prop:regularity}
    If the element-wise loss $l(\cdot, y_i)$ has an $M_0$-Lipschitz Hessian in $\mathbb{R}^K$ uniformly across all fixed targets $y_i$, then the empirical risk $L$ has an $M_0\sqrt{N}$-Lipschitz Hessian in the boosting space $\mathcal{L}^2(\widehat{\nu}_{X,N})$. Additionally, if $l$ is $S$-smooth or $\mu$-strongly convex, then $L$ is also $S$-smooth or $\mu$-strongly convex, respectively.
\end{proposition}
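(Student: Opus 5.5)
We identify $\mathcal{H}=\mathcal{L}^2(\widehat{\nu}_{X,N})$ with $\mathbb{R}^{N\times K}$ under the scaled inner product $\langle f,h\rangle=\frac{1}{N}\sum_{i}\langle f(x_i),h(x_i)\rangle_{\mathbb{R}^K}$. We write $f_i:=f(x_i)\in\mathbb{R}^K$, so that $\|f\|^2=\frac1N\sum_i\|f_i\|^2$. The formulas of \cref{subsec:GBDTsAsRestrictedConvexOptimization} give the Fréchet gradient and Hessian pointwise: $(\nabla L(F))_i=\partial_1 l(F_i,y_i)$ and $(\nabla^2L(F)[f])_i=h_{(i)}(F)f_i$ with $h_{(i)}(F)=\partial_{11}l(F_i,y_i)$. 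The factor $\frac1N$ in $L$ cancels against the $\frac1N$ in the inner product. The Hessian is therefore a block-diagonal operator, and all three claims reduce to comparing the boosting norm with the Euclidean norm of individual blocks.

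\textbf{Smoothness and strong convexity.} Strong convexity gives $\langle f,\nabla^2L(F)f\rangle=\frac1N\sum_i\langle f_i,h_{(i)}f_i\rangle\ge\frac1N\sum_i\mu\|f_i\|^2=\mu\|f\|^2$. Similarly, smoothness gives the upper bound $S\|f\|^2$. Equivalently, one can sum the pointwise inequalities $l(F_i+f_i,y_i)\le l(F_i,y_i)+\langle\partial_1 l,f_i\rangle+\frac S2\|f_i\|^2$ and divide by $N$. No constant changes, because both sides carry the same $\frac1N$ weight.

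\textbf{Lipschitz Hessian.} For $F,G\in\mathcal{H}$, the operator norm of the block-diagonal difference $D=\nabla^2L(F)-\nabla^2L(G)$ is the maximum over blocks, $\|D\|=\max_i\|h_{(i)}(F)-h_{(i)}(G)\|_{\mathrm{op}}$. This holds since $\|Df\|^2=\frac1N\sum_i\|D_if_i\|^2\le\max_i\|D_i\|^2\|f\|^2$. The uniform $M_0$-Lipschitz assumption then bounds this by $M_0\max_i\|F_i-G_i\|$.

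The key step, and the only real obstacle, is converting this $\ell_\infty$-type quantity back to the boosting norm. Because of the $\frac1N$ weighting, $\max_i\|F_i-G_i\|^2\le\sum_i\|F_i-G_i\|^2=N\|F-G\|^2$, so $\max_i\|F_i-G_i\|\le\sqrt N\|F-G\|$. This yields $\|D\|\le M_0\sqrt N\|F-G\|$, which is the claimed constant. This is exactly the geometry mismatch the paper warns about: a perturbation concentrated on a single data point has small $\mathcal{L}^2(\widehat{\nu}_N)$ norm but full Euclidean size. The bound is attained for such a perturbation, so $\sqrt N$ cannot be improved in general.

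Finally, we confirm that the pointwise expressions are genuinely the Fréchet derivatives. In finite dimensions this follows from differentiability of each $l(\cdot,y_i)$, by summing the per-point Taylor expansions and noting that $o(\|f_i\|^2)\le o(N\|f\|^2)$ with $N$ fixed. This justifies applying \cref{assumptionHessLipAndCosine} and \cref{lemmaStandardHessianResults} with $M=M_0\sqrt N/2$.
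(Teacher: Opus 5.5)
Your proposal is correct and follows essentially the same route as the paper. The block-diagonal Hessian has operator norm at most the maximum of its $K\times K$ block norms, the uniform $M_0$-Lipschitz assumption bounds this by $M_0\max_i\|F(x_i)-G(x_i)\|$, and $\max_i\|F(x_i)-G(x_i)\|\le\sqrt{N}\|F-G\|$ gives the constant $M_0\sqrt{N}$, while smoothness and strong convexity follow by averaging the pointwise quadratic-form bounds.
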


The proof can be found in \cref{app:RegularityAssumptionsCarryOverProof}. This result ensures that if the base loss $l$ is well-behaved, the boosting objective is as well. Moreover, because $l$ is chosen prior to training, its Lipschitz constant $M_0$ can be computed analytically. This avoids the need for adaptive backtracking line search algorithms to estimate this constant, as is typically required in finite-dimensional convex optimization \cite{2022ADampedNewtonMethodAchievesGlobalMinus2AndLocalQuadraticConvergenceRate,2023RegularizedNewtonMethodWithGlobalConvergence,2025OPTAMI}.

\subsection{Implementing GRN for GBDTs}
The GRN scheme is very simple to implement in practice and only differs from vanilla Newton boosting by an extra additive $\ell_2$-regularization term based on the gradient norm. Both methods use the exact same split finding, gain calculation, and leaf fitting formulas. The only difference lies in the effective $\ell_2$-regularization used. Standard tree boosting implementations (e.g., XGBoost, LightGBM, CatBoost) use a static $\ell_2$-regularization $\lambda_{\text{base}}$. GRN, on the other hand, uses an adaptive regularization term $\lambda_k$ calculated via
\begin{equation}
    \lambda_k = \lambda_{\text{base}} + \sqrt{M \|g_k\|_{\mathcal{H}}}.
\end{equation}
Using the notation of \cref{subsec:GBDTsAsRestrictedConvexOptimization}, the exact gradient norm in the boosting space is computed from the sample gradients $g_{(i)}$ as $\|g_k\|_{\mathcal{H}} = \big(\frac{1}{N}\sum_{i=1}^N \|g_{(i)}\|^2_{\mathbb{R}^K}\big)^{1/2}$. Because updating this adaptive $\ell_2$ term only requires a single vector norm calculation and a scalar addition, and the main computational bottleneck for GBDT algorithms is split finding, the computational overhead of GRN is negligible.

\section{Numerical Experiments}\label{secNumericalResults}
In this section, we empirically validate our theoretical findings regarding the convergence of boosting schemes and the behaviour of weak learner conditions. We implement first-order gradient boosting, Newton boosting, and the Gradient Regularized Newton (GRN) boosting scheme in JAX \cite{2018jaxGithub}, following the standard depth-wise greedy tree construction strategy used by XGBoost \cite{2016XGBoost}. The code is provided in the supplementary material and at \url{github.com/nikitazozoulenko/global-newton-boosting}.

\subsection{Charbonnier loss divergence and convergence}
The Charbonnier loss, defined as $l(\widehat{y}, y) = \sqrt{1+(y-\widehat{y})^2}$ $-1$, is a classical example of a loss function where Newton's method can diverge (see \cref{app:newton_divergence}). We investigate if this also is true for boosting, using the Wine Quality UCI dataset \cite{UCIWineQuality} with depth 4 trees. For GRN we set $M=1$, and otherwise use a roughly equivalent static $\ell_2$-regularization of $1.0$ when explicitly stated. \Cref{fig:Charbonnier} compares the training loss of Newton boosting, first-order gradient boosting, and GRN for different learning rates $\eta$. For $\eta=1$, vanilla Newton boosting diverges, whereas GRN decreases the fastest. For a smaller $\eta=0.1$, GRN initially tracks the trajectory of fixed $\ell_2$-regularized Newton boosting but eventually outperforms it once the gradient norm decreases sufficiently. Vanilla Newton boosting minimizes the loss most rapidly in this scenario, but is limited to smaller learning rates. Notably, $\ell_2$-regularized models will converge to a regularized optimum, whereas GRN provably minimizes the original non-regularized loss.

\begin{figure}[t]
  \vskip -0.05in
  \begin{center}
    \centerline{\includegraphics[width=\columnwidth]{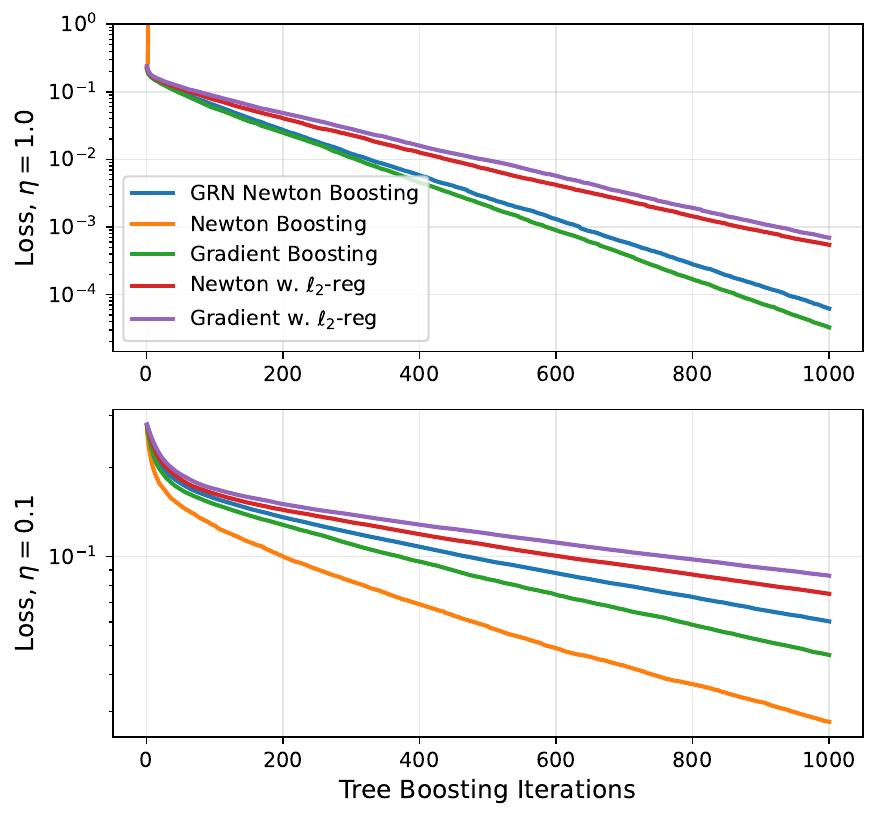}}
    \caption{
      Charbonnier train loss on the Wine Quality UCI dataset, for different gradient boosting schemes and learning rates $\eta$. Vanilla Newton boosting diverges for $\eta=1$.
    }
    \label{fig:Charbonnier}
  \end{center}
  \vskip -0.3in
\end{figure}

\subsection{Weak learner conditions and tree depth}

To better understand the theoretical constants governing convergence, we train models on a subset of the Higgs dataset \cite{UCIHiggs} using BCE loss with $\ell_2$-regularization. We compute the cosine angle $\Theta_k$ and weak learner edge $\gamma_k$ in \cref{fig:bceTreeDepth} at every iteration $k$, via \eqref{eqHessianCosineAngle} and \eqref{eqWeakGradientEdge}, by instantiating $f^w_{k+1}, f_{k+1}, g^w_k$, and $g_k$ as vectors in $\mathbb{R}^N$. As expected, increasing tree depth results in stronger weak learners, reflected in higher values for $\Theta_k$ and $\gamma_k$. We observe that these metrics are high during the initial iterations of boosting and eventually plateau at positive values. The trajectories of the cosine angle $\Theta_k$ and weak gradient edge $\gamma_k$ are remarkably similar, which is somewhat unexpected given that they are defined in different geometries. Conceptually, $\gamma_k$ (\cref{defWeakGradientEdge}) can be viewed as an approximate $(H_k + \lambda_k I)^2$-induced cosine angle between $f^w_{k+1}$ and $f_{k+1}$, in contrast to the standard $H_k$-induced cosine angle $\Theta_k$. If $f^w_{k+1}$ and $f_{k+1}$ were orthogonal under the $(H_k + \lambda_k I)^2$-induced norm, $\gamma_k$ would constitute a true cosine angle; however, we cannot theoretically guarantee this at present.

A similar relationship to tree depth holds for dataset size; larger datasets generally imply harder optimization problems, resulting in smaller cosine angles for a fixed tree depth. Furthermore, as optimization progresses, fitting the diminishing residuals becomes increasingly difficult for the weak learners, naturally leading to lower cosine angles.

\begin{figure}[t]
  \vskip -0.0in
  \begin{center}
    \centerline{\includegraphics[width=\columnwidth]{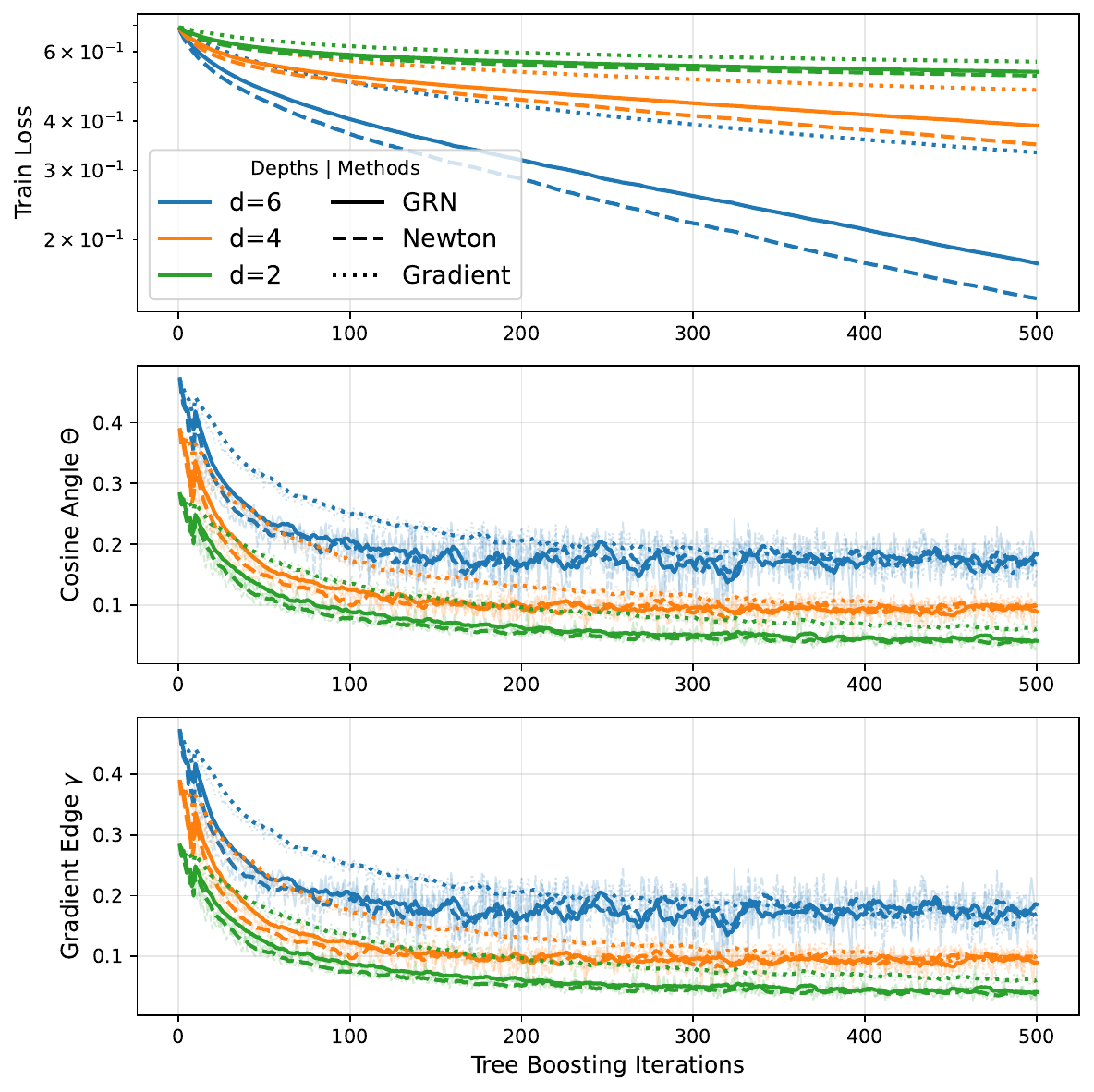}}
    \caption{
      Binary cross entropy, cosine angle $\Theta_k$, and weak gradient edge $\gamma_k$ computed at each boosting iteration $k$, for varying tree depths $d$. The exact values are noisy between iterations, but the rolling window average (highlighted in bold) remains stable.
    }
    \label{fig:bceTreeDepth}
  \end{center}
  \vskip -0.0in
\end{figure}

\section{Conclusion and Future Work}

We introduced Restricted Newton Descent to analyze the global convergence of vanilla Newton boosting and a novel gradient regularized Newton boosting scheme, based on the cosine angle $\Theta$ and weak gradient edge $\gamma$. For vanilla Newton boosting, we proved a linear convergence rate for Hessian-dominated losses. For the gradient regularized scheme, we established a global $\mathcal{O}(1/k^2)$ rate for general convex losses with Lipschitz Hessians. Our results suggest several promising directions for future research. First, given that finite-dimensional cubic and gradient regularized Newton can be accelerated to $\mathcal{O}(1/k^3)$, investigating whether such acceleration can be realized in the boosting setting is a natural next step. Second, it would be interesting to study the Restriction operation for other optimization schemes to determine which convergence properties are retained under weak updates, and see if the dependence on $\Theta^2$ and $\gamma^{12}$ can be improved. Finally, investigating the possibility of superlinear rates with weak learners and establishing generalization bounds for Newton boosting remain open problems. Future work will also explore other applications of Restricted Newton Descent beyond the gradient boosting setting.

\section*{Acknowledgements}
NZ has been supported by the Roth Scholarship at Imperial College London. DF was supported by the Göran Gustafsson Foundation for Research in Natural Sciences and Medicine via Alexander Berglund's 2024 Göran Gustafsson Prize. The work of TC was supported in part by UK Research and Innovation (UKRI) through the Engineering and Physical Sciences Research Council (EPSRC) via Programme Grants [Grant No. UKRI1010: High order mathematical and computational infrastructure for streamed data that enhance contemporary generative and large language models]. We acknowledge computational resources and support provided by the Imperial College Research Computing Service (DOI: \texttt{10.14469/hpc/2232}). For the purpose of open access, the authors have applied a Creative Commons Attribution (CC BY) licence to any Author Accepted Manuscript version arising.

\bibliography{preprint_main}
\bibliographystyle{Styles/icml2026}

\newpage
\appendix
\onecolumn

\section{Discussions on Losses and Gradient Boosting Trees}

In this appendix, we provide explicit calculations for the gradients and Hessians of common loss functions used in boosting. We also provide the full derivation of the optimal weight update for decision trees within the Newton boosting framework, recovering the standard XGBoost formula.

\subsection{Common loss functions used in boosting}

We denote the prediction for the $i$-th data point as $u_i = F(x_i)$. The gradient and Hessian with respect to the prediction are denoted by $g_{(i)} \in \mathbb{R}^K$ and $h_{(i)} \in \mathbb{R}^{K \times K}$, respectively.

\textbf{Mean Squared Error (MSE):}
For the quadratic loss $l(\widehat{y}, y) = \frac{1}{2}(\widehat{y}-y)^2$, the gradient and Hessian are given by:
\begin{align*}
    g_{(i)} &= \widehat{y}_i - y_i, \\
    h_{(i)} &= 1.
\end{align*}
\textbf{Binary Cross Entropy (BCE):}
For binary classification, let $y_i \in \{0,1\}$ be the label and $u_i \in \mathbb{R}$ be the logit. The loss is defined as $l(u_i, y_i) = -[y_i \log \sigma(u_i) + (1-y_i) \log(1-\sigma(u_i))]$, where $\sigma(z) = (1+e^{-z})^{-1}$ is the sigmoid function. Using the chain rule, one finds that the derivatives are
\begin{align*}
    g_{(i)} &= \sigma(u_i) - y_i, \\
    h_{(i)} &= \sigma(u_i)(1-\sigma(u_i)).
\end{align*}
\textbf{Categorical Cross Entropy (CCE):}
In the multi-class setting ($K > 2$), let $y_i \in \{0,1\}^K$ be the one-hot encoded label and $u_i \in \mathbb{R}^K$ be the vector of logits. The loss is $l(u_i, y_i) = -\sum_{k=1}^K (y_i)_k \log (p_i)_k$, where $p_i = \operatorname{softmax}(u_i)$ is the vector of class probabilities. The gradient and Hessian are
\begin{align*}
    g_{(i)} &= p_i - y_i, \\
    h_{(i)} &= \text{diag}(p_i) - p_i p_i^\top,
\end{align*}
where $\text{diag}(p_i)$ is the diagonal matrix with elements of $p_i$ on the diagonal, and $p_i p_i^\top$ denotes the vector outer product.

\subsection{Newton Tree Boosting Derivation}\label{appendixXGBFormulaDerivation}

Let $f(x)$ be a decision tree defined by $f(x) = \sum_j w_j I_j(x)$, where $\{I_j\}_{j=1}^L$ is a disjoint partition of the feature space (i.e., the leaves of the tree) and $w_j \in \mathbb{R}^K$ are the leaf values. The second-order approximation of the loss, $Q(f)$, is given by
\begin{align*}
    Q(f) 
    &=\frac{1}{N} \sum_{i=1}^N \left\langle f(x_i), g_{(i)} + \frac{1}{2} h_{(i)} f(x_i) \right\rangle_{\mathbb{R}^K} \\
    &= \frac{1}{N} \sum_{i=1}^N \left\langle f(x_i), g_{(i)} \right\rangle_{\mathbb{R}^K} + \frac{1}{2N} \sum_{i=1}^N \left\langle f(x_i), h_{(i)} f(x_i) \right\rangle_{\mathbb{R}^K}.
\end{align*}
We analyze the linear and quadratic terms separately. Substituting the definition of the tree $f(x)$, we find that
\begin{align*}
    \sum_{i=1}^N \left\langle f(x_i), g_{(i)} \right\rangle_{\mathbb{R}^K} 
    &= \sum_{i=1}^N \left\langle \sum_j w_j I_j(x_i), g_{(i)} \right\rangle_{\mathbb{R}^K}  \\
    &= \sum_j \left\langle w_j, \sum_{i=1}^N I_j(x_i) g_{(i)}\right\rangle_{\mathbb{R}^K} \\
    &= \sum_j \left\langle w_j, \sum_{i \in I_j} g_{(i)}\right\rangle_{\mathbb{R}^K}.
\end{align*}
For the quadratic term, we use the property that the regions are disjoint, implying $I_j(x)I_k(x) = \delta_{jk}I_j(x)$, where $\delta_{jk}$ is the Kronecker delta. 
\begin{align*}
    \sum_{i=1}^N \left\langle f(x_i), h_{(i)} f(x_i) \right\rangle_{\mathbb{R}^K} 
    &= \sum_{i=1}^N \left\langle \sum_j w_j I_j(x_i), h_{(i)} \sum_k w_k I_k(x_i) \right\rangle_{\mathbb{R}^K} \\
    &= \sum_{i=1}^N \sum_j \sum_k I_j(x_i) I_k(x_i) \left\langle w_j, h_{(i)} w_k \right\rangle_{\mathbb{R}^K} \\
    &= \sum_{i=1}^N \sum_j I_j(x_i) \left\langle w_j, h_{(i)} w_j \right\rangle_{\mathbb{R}^K} \\
    &= \sum_j \sum_{i \in I_j} \left\langle w_j, h_{(i)} w_j \right\rangle_{\mathbb{R}^K} \\
    &= \sum_j \left\langle w_j, \left( \sum_{i \in I_j} h_{(i)} \right) w_j \right\rangle_{\mathbb{R}^K}.
\end{align*}
The total objective decomposes into a sum of independent objectives for each leaf $j$. Differentiating with respect to a specific weight $w_j$, we obtain the local quadratic
\begin{align*}
    q(w_j) =  \left\langle w_j, \sum_{i \in I_j} g_{(i)} \right\rangle 
    + \frac{1}{2} \left\langle w_j, \left( \sum_{i \in I_j} h_{(i)} \right) w_j \right\rangle.
\end{align*}
Taking the gradient with respect to $w_j$ yields
\begin{align*}
    \nabla q(w_j)
    &= \sum_{i \in I_j} g_{(i)} + \left( \sum_{i \in I_j} h_{(i)} \right) w_j.
\end{align*}
Setting the gradient to zero gives the optimal weight update
\begin{align*}
    w_j^* &=  -\left( \sum_{i \in I_j} h_{(i)} \right)^{-1} \sum_{i \in I_j} g_{(i)}.
\end{align*}
Substituting $w_j^*$ back into the objective (and defining $G_j = \sum_{i \in I_j} g_{(i)}$ and $H_j = \sum_{i \in I_j} h_{(i)}$), we recover the standard score used in Newton boosting and XGBoost:
\begin{align*}
    Q(f^*) = \sum_j q(w_j^*) 
    &=  -\frac{1}{2} \sum_j \left\langle H_j^{-1} G_j , G_j \right\rangle_{\mathbb{R}^K}.
\end{align*}

\begin{remark}
    The $\ell_2$-regularization used in standard implementations like XGBoost differs slightly from the one suggested by our theory. XGBoost adds a fixed scalar $\lambda$ to the aggregated leaf Hessian, yielding the update $w_j^* = -(H_j + \lambda)^{-1} G_j$. By contrast, our framework regularizes the weak learner in the Hilbert space $L^2(\widehat{\nu}_N)$. In this space, the regularization modifies the Hessian operator pointwise, which is equivalent to replacing each $h_{(i)}$ with $h_{(i)} + \lambda$. Summing these over the partition $I_j$ results in the update $w_j^* = -(H_j + |I_j| \lambda)^{-1} G_j$. Thus, the theoretically consistent regularization scales proportionally with the number of samples routed to the leaf. This is the formulation we use in our JAX implementation for the numerical experiments.
\end{remark}

\clearpage

\section{Some Classical Optimization Results}\label{app:SmoothnessLemmas}

We collect several standard smoothness and Lipschitz-Hessian definitions and results from optimization theory \cite{2006CubicRegularizationOfNewtonMethodAndItsGlobalPerformance,2018LecturesOnConvexOptimization}, and present them in the Hilbert space setting.

\begin{definition}[$S$-smooth loss]
Let $\mathcal H$ be a real Hilbert space. A Fr\'echet differentiable function
$L : \mathcal H \to \mathbb R$ is called \emph{$S$-smooth} if its gradient is Lipschitz continuous
with constant $S$, i.e.,
\[
\|\nabla L(f) - \nabla L(g)\| \le S \|f - g\|
\quad \forall f,g \in \mathcal H.
\]
Equivalently, if $L$ is twice Fr\'echet differentiable, then its Hessian satisfies
\[
\|\nabla^2 L(h)\|_{\mathrm{op}} \le S
\quad \forall h \in \mathcal H,
\]
or equivalently $\nabla^2 L(h) \preceq S I$ as an operator inequality.
\end{definition}

\begin{definition}[$\mu$-strongly convex loss]
A function $L : \mathcal H \to \mathbb R$ is \emph{$\mu$-strongly convex} for some $\mu>0$ if one
of the following equivalent conditions holds:
\begin{enumerate}
    \item \textbf{(Quadratic lower bound):}
    The function $h \mapsto L(h) - \frac{\mu}{2}\|h\|^2$ is convex.
    \item \textbf{(First-order condition):}
    For all $f,g \in \mathcal H$,
    \[
    L(f) \ge L(g) + \langle \nabla L(g), f-g\rangle
    + \frac{\mu}{2}\|f-g\|^2.
    \]
    \item \textbf{(Gradient monotonicity):}
    For all $f,g \in \mathcal H$,
    \[
    \langle \nabla L(f) - \nabla L(g), f-g\rangle \ge \mu \|f-g\|^2.
    \]
    \item \textbf{(Second-order condition):}
    If $L$ is twice Fr\'echet differentiable, then
    \[
    \nabla^2 L(h) \succeq \mu I
    \quad \forall h \in \mathcal H.
    \]
\end{enumerate}
\end{definition}

\begin{definition}
The Fr\'echet Hessian $\nabla^2 L : \mathcal{H} \to \mathcal{L(H)}$ is $2M$-Lipschitz continuous if there exists a constant $M>0$ such that
    \begin{align*}
        \|\nabla^2L(f)-\nabla^2L(g)\| \leq 2M\|f-g\| \qquad \forall f,g\in \mathcal{H}.
    \end{align*}
\end{definition}

\begin{lemma}
    If the loss $L$ is $S$-smooth, then
    \begin{align}
        L(F_k+f) \leq L(F_k) + \langle g_k, f\rangle + \frac{S}{2} \|f\|^2.
    \end{align}
\end{lemma}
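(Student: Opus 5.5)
The plan is the standard descent-lemma argument, run in the Hilbert space setting with the Fréchet calculus defined earlier. The core step is to restrict $L$ to the segment from $F_k$ to $F_k+f$. Set $\varphi(t) := L(F_k + t f)$ for $t\in[0,1]$. Fréchet differentiability of $L$, together with the chain rule applied to the affine map $t\mapsto F_k+tf$, shows that $\varphi$ is differentiable with
\[
\varphi'(t) = \langle \nabla L(F_k+tf), f\rangle .
\]
Since $\nabla L$ is $S$-Lipschitz, $\varphi'$ is continuous, in fact Lipschitz. The fundamental theorem of calculus then gives
\[
L(F_k+f) - L(F_k) = \int_0^1 \langle \nabla L(F_k+tf), f\rangle\,dt .
\]

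Next, subtract $\langle g_k, f\rangle = \int_0^1 \langle \nabla L(F_k), f\rangle\,dt$ from both sides. This leaves
\[
L(F_k+f) - L(F_k) - \langle g_k, f\rangle = \int_0^1 \langle \nabla L(F_k+tf) - \nabla L(F_k), f\rangle\,dt .
\]
Apply Cauchy--Schwarz to the integrand and then the $S$-smoothness definition, $\|\nabla L(F_k+tf)-\nabla L(F_k)\|\le S t\|f\|$. The integrand is then at most $S t \|f\|^2$, and integrating over $t\in[0,1]$ yields $\frac{S}{2}\|f\|^2$, which is the claimed bound.

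One could instead work from the Hessian characterization $\nabla^2 L\preceq SI$ via a second-order Taylor formula along the segment. That route needs continuity of the Hessian along the segment, so I would rather use the gradient-Lipschitz definition, which avoids any such assumption.

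The only point needing care, which I expect to be the main (mild) obstacle, is justifying the one-dimensional reduction in infinite dimensions. This means checking that $\varphi$ is differentiable with the stated derivative, which follows directly from the Fréchet expansion $L(h+sf)=L(h)+s\langle f,\nabla L(h)\rangle+o(|s|)$ with $h = F_k + tf$. It also means checking that $\varphi'$ is integrable, so that the fundamental theorem of calculus applies; this holds because $\varphi'$ is continuous by the Lipschitz property of $\nabla L$. Once this is in place, the remaining steps are routine scalar calculus.
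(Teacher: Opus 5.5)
Your proof is correct, but it takes a different route from the paper's. The paper uses Taylor's formula with integral remainder, $L(F_k+f)=L(F_k)+\langle g_k,f\rangle+\int_0^1(1-t)\langle f,\nabla^2 L(F_k+tf)f\rangle\,dt$, and bounds the integrand by $(1-t)S\|f\|^2$ using the Hessian characterization $\nabla^2 L\preceq SI$. You instead integrate the gradient once along the segment and bound $\langle\nabla L(F_k+tf)-\nabla L(F_k),f\rangle\le St\|f\|^2$ directly, using Lipschitz continuity of $\nabla L$ and Cauchy--Schwarz.

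Your version needs fewer hypotheses. It uses the primary (gradient-Lipschitz) definition of $S$-smoothness and only first-order Fr\'echet differentiability. It does not rely on the asserted equivalence with $\|\nabla^2L\|_{\mathrm{op}}\le S$; deriving that bound from the gradient-Lipschitz definition would itself go through essentially your one-dimensional integral. Nor does it rely on the integral-remainder formula, which does not follow automatically from the paper's pointwise, Peano-type definition of the Fr\'echet Hessian.

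One correction to your side remark: the Hessian route does not really need continuity of the Hessian along the segment. Under your hypothesis $\varphi'$ is Lipschitz, hence absolutely continuous. The integral remainder then holds with $\varphi''(t)=\langle f,\nabla^2L(F_k+tf)f\rangle$ for almost every $t$, by uniqueness of second-order Peano coefficients.

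In exchange, the paper's version is a one-line computation once the Hessian bound is granted. It also makes visible that only the one-sided bound $\nabla^2L\preceq SI$ is used, and it parallels the integral-remainder proof of the Lipschitz-Hessian lemma in the same appendix.
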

\begin{proof}
    Using Taylor's theorem with integral remainder, we express the expansion of the loss as
    \begin{equation}
        L(F_k+f) = L(F_k) + \langle g_k, f\rangle + \int_0^1 (1-t) \langle f, \nabla^2 L(F_k + tf) f \rangle \, dt.
    \end{equation}
    The definition of $S$-smoothness implies that the Hessian is bounded in operator norm by $S$, i.e., $\nabla^2 L(\cdot) \preceq S I$. Consequently, the quadratic form is bounded by $\langle f, \nabla^2 L(\cdot) f \rangle \leq S \|f\|^2$. Substituting this into the integral yields $\int_0^1 (1-t) S \|f\|^2 \, dt = \frac{S}{2}\|f\|^2$, completing the proof.
\end{proof}

\begin{lemma}
    If $\nabla^2 L$ is $2M$-Lipschitz, then for all $f,g\in \mathcal{H}$
    \begin{enumerate}
        \item $L(f) \leq L(g) + \langle \nabla L(g), f-g\rangle + \frac{1}{2}\langle \nabla^2 L(g)(f-g), f-g\rangle + \frac{M}{3}\|f-g\|^3$
        \item  $\|\nabla L(f) - \nabla L(g) - \nabla^2 L(g)(f-g)\| \leq M\|g-f\|^2$.
    \end{enumerate}
\end{lemma}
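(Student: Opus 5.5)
The plan is to reduce both statements to integral forms of Taylor's theorem along the segment $h(t) = g + t(f-g)$, $t\in[0,1]$, and then to control the remainders using the $2M$-Lipschitz continuity of $\nabla^2 L$ in operator norm. Write $u := f-g$. Since $L$ is twice Fréchet differentiable, the map $\varphi(t) := L(g+tu)$ is twice differentiable on $[0,1]$, with
\[
\varphi'(t) = \langle \nabla L(g+tu), u\rangle, \qquad \varphi''(t) = \langle \nabla^2 L(g+tu)u, u\rangle .
\]
Likewise, $\psi(t) := \nabla L(g+tu)$ is differentiable as an $\mathcal{H}$-valued map, with $\psi'(t) = \nabla^2 L(g+tu)u$.

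The Lipschitz hypothesis makes $t\mapsto \nabla^2 L(g+tu)$ continuous in operator norm. Hence $\varphi''$ and $\psi'$ are continuous, and the fundamental theorem of calculus applies: the scalar form directly, and the Bochner or Riemann form for the Hilbert-valued $\psi$. This regularity is the only point needing care in infinite dimensions, and it is mild.

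\textbf{Part 2.} By the fundamental theorem of calculus,
\[
\nabla L(f) - \nabla L(g) = \int_0^1 \nabla^2 L(g+tu)u\,dt .
\]
Subtracting $\nabla^2 L(g)u$ gives
\[
\nabla L(f) - \nabla L(g) - \nabla^2 L(g)u = \int_0^1 \big(\nabla^2 L(g+tu)-\nabla^2 L(g)\big)u\,dt .
\]
Taking norms inside the integral and applying the Lipschitz bound yields
\[
\Big\|\int_0^1 \big(\nabla^2 L(g+tu)-\nabla^2 L(g)\big)u\,dt\Big\| \le \int_0^1 2M t\|u\|^2\,dt = M\|u\|^2 .
\]

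\textbf{Part 1.} Use the second-order Taylor formula with integral remainder:
\[
\varphi(1) = \varphi(0) + \varphi'(0) + \int_0^1 (1-t)\varphi''(t)\,dt .
\]
Substituting $\varphi''(t) = \langle \nabla^2 L(g)u,u\rangle + \langle(\nabla^2 L(g+tu)-\nabla^2 L(g))u,u\rangle$, the first piece integrates to $\tfrac12\langle \nabla^2 L(g)u,u\rangle$. By Cauchy–Schwarz and the Lipschitz bound, the second piece is at most
\[
\int_0^1 (1-t)\,2Mt\|u\|^3\,dt = 2M\|u\|^3\cdot\tfrac16 = \tfrac{M}{3}\|u\|^3 .
\]
This is exactly the claimed bound.

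The only potential obstacle is the justification of the integral formulas in a general Hilbert space. I would handle it by first applying the scalar result to $t\mapsto\langle\psi(t),v\rangle$ for arbitrary $v$, which gives $\langle \nabla L(f)-\nabla L(g)-\nabla^2 L(g)u, v\rangle \le M\|u\|^2\|v\|$. Taking the supremum over $\|v\|=1$ then proves Part 2 without any vector-valued integration.
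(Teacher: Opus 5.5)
Your proposal is correct and follows essentially the paper's argument. Part 2 is identical: the fundamental theorem of calculus for the gradient along the segment, then the Lipschitz bound $\int_0^1 2Mt\|u\|^2\,dt = M\|u\|^2$. For Part 1, your single-integral remainder $\int_0^1(1-t)\varphi''(t)\,dt$ is the paper's iterated integral $\int_0^1\int_0^t\langle h,\nabla^2L(g+sh)h\rangle\,ds\,dt$ after swapping the order of integration, and it gives the same constant $2M\cdot\tfrac16=\tfrac M3$; your scalarization through $\langle\psi(t),v\rangle$, which avoids vector-valued integrals, is a small piece of extra rigor that the paper omits.
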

\begin{proof}
Let $h = f-g$. We begin with the Fundamental Theorem of Calculus and expand the terms:
\begin{align*}
L(f) - L(g) - \langle \nabla L(g), h \rangle
&= \int_0^1 \langle \nabla L(g+th) - \nabla L(g), h \rangle \,dt \\
&= \int_0^1 \left\langle \int_0^t \nabla^2 L(g+sh)[h]\,ds, h \right\rangle \,dt \\
&= \int_0^1 \int_0^t \langle h, \nabla^2 L(g+sh)[h] \rangle \,ds\,dt \\
&= \int_0^1 \int_0^t \langle h, \nabla^2 L(g)[h] \rangle \,ds\,dt + \int_0^1 \int_0^t \langle h, (\nabla^2 L(g+sh) - \nabla^2 L(g))[h] \rangle \,ds\,dt \\
&= \frac{1}{2} \langle h, \nabla^2 L(g)[h] \rangle + \int_0^1 \int_0^t \langle h, (\nabla^2 L(g+sh) - \nabla^2 L(g))[h] \rangle \,ds\,dt \\
&\leq \frac{1}{2} \langle h, \nabla^2 L(g)[h] \rangle + \int_0^1 \int_0^t \|h\| \cdot \|(\nabla^2 L(g+sh) - \nabla^2 L(g))[h]\| \,ds\,dt \\
&\leq \frac{1}{2} \langle h, \nabla^2 L(g)[h] \rangle + \int_0^1 \int_0^t \|\nabla^2 L(g+sh) - \nabla^2 L(g)\|_{op} \|h\|^2 \,ds\,dt \\
&\leq \frac{1}{2} \langle h, \nabla^2 L(g)[h] \rangle + \int_0^1 \int_0^t (2M s \|h\|) \|h\|^2 \,ds\,dt \\
&= \frac{1}{2} \langle h, \nabla^2 L(g)[h] \rangle + 2M\|h\|^3 \int_0^1 \frac{t^2}{2} \,dt \\
&= \frac{1}{2} \langle h, \nabla^2 L(g)[h] \rangle + \frac{M}{3}\|h\|^3.
\end{align*}
Rearranging the terms yields the desired first inequality. For inequality 2, we express the gradient difference as an integral of the Hessian.
\begin{align*}
    \|\nabla L(f) - \nabla L(g) - \nabla^2 L(g)[h]\|
    &= \left\| \int_0^1 \nabla^2 L(g+th)[h] \,dt - \int_0^1 \nabla^2 L(g)[h] \,dt \right\| \\
    &= \left\| \int_0^1 (\nabla^2 L(g+th) - \nabla^2 L(g))[h] \,dt \right\| \\
    &\leq \int_0^1 \| (\nabla^2 L(g+th) - \nabla^2 L(g))[h] \| \,dt \\
    &\leq \int_0^1 \| \nabla^2 L(g+th) - \nabla^2 L(g) \|_{op} \|h\| \,dt \\
    &\leq \int_0^1 (2M t \|h\|) \|h\| \,dt \\
    &= 2M \|h\|^2 \int_0^1 t \,dt \\
    &= M \|h\|^2.
\end{align*}
This completes the proof.
\end{proof}

\clearpage
\section{Proof of Regularity Conditions in the Boosting Space}
\label{app:RegularityAssumptionsCarryOverProof}

Recall from \cref{subsec:GBDTsAsRestrictedConvexOptimization} that we identify the empirical boosting space $\mathcal{H} = \mathcal{L}^2(\widehat{\nu}_{X,N})$ with $\mathbb{R}^{NK}$, equipped with the inner product $\langle f, h\rangle = \frac{1}{N}\sum_{i=1}^N \langle f(x_i),h(x_i) \rangle_{\mathbb{R}^K}$. The induced norm is therefore $\|f\| = \big(\frac{1}{N}\sum_{i=1}^N \|f(x_i)\|^2_{\mathbb{R}^K}\big)^{1/2}$.

\vspace{5pt}
\begin{proof}[Proof of \cref{prop:regularity}]
First, we prove the claim regarding the Lipschitz continuity of the Hessian. Suppose the base loss $l(u, y)$ has a $M_0$-Lipschitz Hessian with respect to the prediction $u \in \mathbb{R}^K$, meaning $\|\nabla_{11}^2 l(u, y) - \nabla_{11}^2 l(v, y)\|_{\text{op}} \leq M_0 \|u - v\|_{\mathbb{R}^K}$. In the gradient boosting space, we minimize the empirical risk $L(F) = \frac{1}{N} \sum_{i=1}^N l(F(x_i), y_i)$. Because the Hessian operator $\nabla^2 L(F)$ acts pointwise (i.e., it is block-diagonal over the $N$ samples), its operator norm on $\mathcal{H}$ is exactly the maximum of the operator norms of the individual $K \times K$ blocks. Thus, for any $F, G \in \mathcal{H}$, we can bound the difference in Hessians as follows:
\begin{align*}
    \|\nabla^2 L(F) - \nabla^2 L(G)\|_{\text{op}, \mathcal{H}} 
    &= \max_{1 \leq i \leq N} \|\nabla_{11}^2 l(F(x_i), y_i) - \nabla_{11}^2 l(G(x_i), y_i)\|_{\text{op}, \mathbb{R}^K} \\
    &\leq \max_{1 \leq i \leq N} M_0 \|F(x_i) - G(x_i)\|_{\mathbb{R}^K} \\
    &\leq M_0 \left( \sum_{i=1}^N \|F(x_i) - G(x_i)\|_{\mathbb{R}^K}^2 \right)^{1/2} \\
    &= M_0 \sqrt{N} \left( \frac{1}{N}\sum_{i=1}^N \|F(x_i) - G(x_i)\|_{\mathbb{R}^K}^2 \right)^{1/2} \\
    &= M_0 \sqrt{N} \|F - G\|.
\end{align*}
This proves that $L$ has an $M_0\sqrt{N}$-Lipschitz Hessian in the space $\mathcal{H}$.

Next, we prove that smoothness and strong convexity are preserved with the same constants. The Hessian quadratic form in the boosting space is given by $\langle f, \nabla^2 L(F)[f] \rangle = \frac{1}{N} \sum_{i=1}^N \langle f(x_i), \nabla_{11}^2 l(F(x_i), y_i) f(x_i) \rangle_{\mathbb{R}^K}$. If the base loss $l$ is $\mu$-strongly convex and $S$-smooth, its Euclidean quadratic form is bounded pointwise:
\begin{equation*}
    \mu \|f(x_i)\|_{\mathbb{R}^K}^2 \leq \langle f(x_i), \nabla_{11}^2 l(F(x_i), y_i) f(x_i) \rangle_{\mathbb{R}^K} \leq S \|f(x_i)\|_{\mathbb{R}^K}^2.
\end{equation*}
Summing these bounds over $i=1, \dots, N$ and dividing by $N$ directly yields $\mu \|f\|^2 \leq \langle f, \nabla^2 L(F)[f] \rangle \leq S \|f\|^2$.
\end{proof}

\clearpage
\section{Convergence Rate Proofs}\label{app:GlobalConvergenceProofs}

For ease of exposition, we restate the statement and result of the relevant lemmas and theorems from the main part of the paper.

\begin{lemma}\label{appendixLemmaGradientGrowth}
    If the iterates of \cref{alg:boostingGradRegNewton} have weak gradient edge $\gamma$, then the gradient growth is bounded by
    \begin{align}
        \|g_{k+1}\| 
        &= \left(1-\eta(1-\sqrt{1-\gamma^2})\right)\|g_k\| + (\eta+\eta^2)\lambda_k \|f_{k+1}^w\| \\
        &\leq \left(1 + \eta^2 + \eta \sqrt{1-\gamma^2}\right)\|g_k\|.
    \end{align}
\end{lemma}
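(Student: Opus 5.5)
The plan is to expand $g_{k+1} = \nabla L(F_k + \eta f^w_{k+1})$ around $F_k$ with \cref{lemmaStandardHessianResults}, rewrite the Hessian term through the implied weak gradient, and then bound each piece separately. (The first line of the statement should be read as an inequality $\leq$; only an upper bound on $\|g_{k+1}\|$ is obtainable.)

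\textbf{Step 1: Taylor remainder.} Set $E := g_{k+1} - g_k - \eta H_k f^w_{k+1}$. By part 2 of \cref{lemmaStandardHessianResults}, applied with $f = F_{k+1}$ and $g = F_k$, we get $\|E\| \leq M\eta^2\|f^w_{k+1}\|^2$.

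\textbf{Step 2: rewrite the Hessian term.} From \cref{defWeakGradientEdge}, $H_k f^w_{k+1} = -g^w_k - \lambda_k f^w_{k+1}$. Substituting gives the decomposition
\begin{align*}
    g_{k+1} = (1-\eta) g_k + \eta(g_k - g_k^w) - \eta\lambda_k f^w_{k+1} + E .
\end{align*}

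\textbf{Step 3: bound the terms.} Apply the triangle inequality to this decomposition, using $1-\eta \geq 0$ since $\eta \leq 1$ is assumed in the theorems where this is used.
\begin{itemize}
    \item The weak gradient edge \eqref{eqWeakGradientEdge} gives $\|g_k - g_k^w\| \leq \sqrt{1-\gamma^2}\,\|g_k\|$.
    \item For the remainder, first use $\lambda_k \geq \sqrt{M\|g_k\|}$ and then part 2 of \cref{lemmaUsefulProperties}:
    \begin{align*}
        M\|f^w_{k+1}\|^2 \leq \frac{\lambda_k^2}{\|g_k\|}\|f^w_{k+1}\|^2 = \lambda_k\|f^w_{k+1}\| \cdot \frac{\lambda_k\|f^w_{k+1}\|}{\|g_k\|} \leq \lambda_k\|f^w_{k+1}\| .
    \end{align*}
    Hence $\|E\| \leq \eta^2\lambda_k\|f^w_{k+1}\|$.
\end{itemize}
Collecting the terms yields
\begin{align*}
    \|g_{k+1}\| \leq \big(1-\eta+\eta\sqrt{1-\gamma^2}\big)\|g_k\| + (\eta+\eta^2)\lambda_k\|f^w_{k+1}\| ,
\end{align*}
which is the first line of the statement.

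\textbf{Step 4: final bound.} Applying part 2 of \cref{lemmaUsefulProperties} once more, $\lambda_k\|f^w_{k+1}\| \leq \|g_k\|$. The coefficient then becomes $1-\eta+\eta\sqrt{1-\gamma^2}+\eta+\eta^2 = 1+\eta^2+\eta\sqrt{1-\gamma^2}$, as claimed.

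\textbf{Expected obstacle.} The only delicate point is converting the quadratic remainder $M\eta^2\|f^w_{k+1}\|^2$ into a term linear in $\lambda_k\|f^w_{k+1}\|$. This is exactly where the choice $\lambda_k \geq \sqrt{M\|g_k\|}$ enters, combined with $\lambda_k\|f^w_{k+1}\| \leq \|g_k\|$. If $g_k = 0$, the weak gradient edge forces $g^w_k = 0$, so $f^w_{k+1} = 0$ and the bound holds trivially.
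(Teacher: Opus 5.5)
Your proposal is correct and follows the paper's proof step for step. It uses the same decomposition $g_{k+1} = (1-\eta)g_k + \eta(g_k-g_k^w) - \eta\lambda_k f^w_{k+1} + E$ with $\|E\|\le M\eta^2\|f^w_{k+1}\|^2$, the same conversion $M\|f^w_{k+1}\|^2 \le \lambda_k\|f^w_{k+1}\|$ via $\lambda_k\ge\sqrt{M\|g_k\|}$ and $\lambda_k\|f^w_{k+1}\|\le\|g_k\|$, and the same final collapse; you are also right that the first line must be read as $\leq$ and that the triangle-inequality step implicitly needs $\eta\le 1$, which the lemma as stated leaves implicit.
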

\begin{proof}
    Using \cref{eqWeakGradient}, and \cref{lemmaLambdafMinusGkHk,lemmaStandardHessianResults}, we can write
    \begin{align}
        g_{k+1} 
        &= g_k + \eta H_k f^w_{k+1} + E \\
        &= g_k + \eta (-g_k^w-\lambda_k f_{k+1}^w) + E \\
        &= (1-\eta) g_k + \eta(g_k - g_k^w) - \eta \lambda_k f^w_{k+1} + E
    \end{align}
    where $\|E\| \leq M\eta^2 \|f^w_{k+1}\|^2$. Hence using the bounds in \cref{lemmaUsefulProperties} we obtain that
    \begin{align}
        \|g_{k+1}\| 
        &= \| (1-\eta) g_k + \eta(g_k - g_k^w) - \eta\lambda_k f^w_{k+1} + E \| \\
        &\leq (1-\eta) \|g_k\| + \eta \|g_k - g_k^w\| +  \eta \lambda_k \|f^w_{k+1}\| + M \eta^2 \|f^w_{k+1}\|^2 \label{eqGradientRecursionForLater}\\
        &\leq (1-\eta) \|g_k\| + \eta \sqrt{1-\gamma^2}\|g_k\| +  \eta \lambda_k \|f^w_{k+1}\| + \eta^2 \lambda_k \|f^w_{k+1}\| \\
        &= \left(1-\eta(1-\sqrt{1-\gamma^2})\right)\|g_k\| + (\eta+\eta^2)\lambda_k \|f_{k+1}^w\| \\
        &\leq \left(1 + \eta^2 + \eta \sqrt{1-\gamma^2}\right)\|g_k\|.
    \end{align}
\end{proof}

The following global convergence proof of the gradient regularized Newton scheme generalizes the proof of \citet{2023RegularizedNewtonMethodWithGlobalConvergence} to the setting of Restricted Newton Descent with a constant learning rate $\eta \in (0, 1]$ and weak learners. The primary challenge lies in handling the additional error terms introduced by the learning rate and the inexact weak iterates $f^w_{k+1}$, which we handled in our previous Lemmas in this paper. Additonally, the original proof which uses a fixed subsequence condition and fraction, we must make this dependent on both the learning rate $\eta$ and weak gradient edge $\gamma$.

\begin{theorem}\label{theoremAppendixGradRegGlobalConv}
    Assume $L$ is convex with $2M$-Lipschitz Hessian, finite sublevel sets with diameter $\Lambda$, and weak gradient edge $\gamma$. Let the regularization be chosen as $\lambda_k = C \sqrt{M\|g_k\|}$ for some $C\geq 1$. For any learning rate $\eta \in (0,1]$, the gradient regularized Newton scheme has the global convergence rate
   \begin{align}
        L(F_k) - L(F^*) \leq \max\left( 
            \underbrace{\Lambda \|g_0\| \exp\left( - \frac{k\eta}{8}(1-\sqrt{1-\gamma^2})(3-\sqrt{1-\gamma^2}-2\eta) \right)}_{\text{Linear Regime}}, \quad 
            \underbrace{\frac{1024 C^2 M \Lambda^3}{\eta^2 (1-\sqrt{1-\gamma^2})^6(2+k)^2}\frac{(1+\eta)^4}{(1-\frac{1}{3}\eta^2)^2}.}_{\mathcal{O}\left(\frac{1}{k^2}\right)\text{ Regime}}
        \right).
    \end{align}
\end{theorem}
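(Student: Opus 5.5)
Throughout, write $\delta := 1-\sqrt{1-\gamma^2}\in(0,1]$, $\alpha_k := L(F_k)-L(F^*)$, and $\beta_k := \lambda_k\|f^w_{k+1}\|/\|g_k\|\in[0,1]$, where $\beta_k\le 1$ by \cref{lemmaUsefulProperties}. The first inequality of \cref{appendixLemmaGradientGrowth} then reads
\begin{align*}
\|g_{k+1}\|\le \bigl(1-\eta\delta+(\eta+\eta^2)\beta_k\bigr)\|g_k\|.
\end{align*}
Following \citet{2023RegularizedNewtonMethodWithGlobalConvergence}, I split iterations by a threshold $\beta^\star := \eta\delta/(2(\eta+\eta^2)) = \delta/(2(1+\eta))$. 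I call iteration $k$ \emph{bad} if $\beta_k<\beta^\star$, so that $\|g_{k+1}\|\le(1-\eta\delta/2)\|g_k\|$ and the gradient contracts. I call it \emph{good} if $\beta_k\ge\beta^\star$, so that the step $\lambda_k\|f^w_{k+1}\|$ is large relative to $\|g_k\|$. I will treat two regimes according to whether at least a fixed fraction of the first $k$ iterations are bad or good. I will tune that fraction (something like $1/4$ plus a correction) so that the product of the contraction factors on bad steps and the growth factor $(1+\eta^2+\eta\sqrt{1-\gamma^2})$ from \cref{lemmagTProperties} on good steps still yields a net exponential decay. My guess is that the exponent $\tfrac{k\eta}{8}\delta(3-\sqrt{1-\gamma^2}-2\eta)$ in the statement comes out of exactly this bookkeeping after taking logarithms with $\log(1+x)\le x$.

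\textbf{Linear regime.} Suppose many steps are bad. Combining the contraction on bad steps with the bounded growth on good steps gives
\begin{align*}
\|g_k\|\le\|g_0\|\exp\bigl(-c\,k\eta\delta(\cdots)\bigr).
\end{align*}
The loss decreases monotonically by \cref{lemmaDecreaseInLossNEW}, since $\eta\le1$ gives $\eta-\eta^3/3>0$. Hence every $F_k$ lies in the initial sublevel set, and convexity gives $\alpha_k\le\langle g_k,F_k-F^*\rangle\le\Lambda\|g_k\|$. This yields the first term of the maximum.

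\textbf{$\mathcal{O}(1/k^2)$ regime.} On a good step, \cref{lemmaDecreaseInLossNEW} gives
\begin{align*}
\alpha_k-\alpha_{k+1}\ge(\eta-\tfrac{\eta^3}{3})\frac{(\lambda_k\|f^w_{k+1}\|)^2}{\lambda_k}\ge(\eta-\tfrac{\eta^3}{3})\frac{\beta^{\star2}\|g_k\|^{3/2}}{C\sqrt M}.
\end{align*}
Using $\|g_k\|\ge\alpha_k/\Lambda$ turns this into $\alpha_{k+1}\le\alpha_k-\tau\alpha_k^{3/2}$ with $\tau=\eta(1-\eta^2/3)\beta^{\star2}/(C\sqrt M\Lambda^{3/2})$. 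Monotonicity on the remaining steps allows me to restrict to the good subsequence. The standard argument on $1/\sqrt{\alpha}$ gives $1/\sqrt{\alpha_{k+1}}-1/\sqrt{\alpha_k}\ge\tau/2$, so $\alpha$ after $m$ good steps is at most $4/(\tau^2m^2)$. With $m\gtrsim k/4$ this gives a bound of order $C^2M\Lambda^3(1+\eta)^4/(\eta^2\delta^4(1-\eta^2/3)^2k^2)$. The stated $\delta^6$ and the constant $1024$ suggest the authors use a cruder threshold, perhaps with an extra factor of $\delta$ in the fraction of good steps. I would adjust the threshold to match. I also need to handle the initial term, either with a bound on $\alpha_0$ or by absorbing it via the $(2+k)^2$ form, using $\alpha_0\le\Lambda\|g_0\|$ or the fact that the first step already gives $\alpha_1\lesssim C^2M\Lambda^3$-type bounds.

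The main obstacle is the counting argument that fixes the fraction. The weak edge makes the bad-step contraction only $1-\eta\delta/2$, while good steps may expand the gradient by up to $1+\eta^2+\eta\sqrt{1-\gamma^2}$. The fraction must be large enough that the linear regime is genuinely contractive for all $\eta\le1$, yet the complementary good count must stay proportional to $k$. I would first try to avoid the crude growth bound on good steps entirely: on every step, use the sharper per-step bound $1-\eta\delta+(\eta+\eta^2)\beta_k$, and telescope the logarithm directly.
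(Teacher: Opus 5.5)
Your skeleton is the paper's. Your threshold $\beta^\star=\delta/(2(1+\eta))$ is the paper's $a/(1+\eta)$ with $a=\delta/2$. The paper's set of non-contracting indices is contained in your set of good steps, so the two partitions are interchangeable.

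\textbf{The gap.} It is exactly the step you flag as the main obstacle, and your provisional resolution of it fails. With fewer than $pk$ good steps, the linear regime gives $\log(\|g_k\|/\|g_0\|)\le k\eta\bigl[p(1+\eta-\delta)-(1-p)\delta/2\bigr]$. This is negative only if $p<\delta/(2(1+\eta)-\delta)$, so the fraction must be $\Theta(\delta)$. A constant such as $1/4$ breaks down whenever $\delta\le 2(1+\eta)/5$, and then you cannot count on $m\gtrsim k/4$ good steps in the other regime.

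\textbf{The fix.} The paper takes $p=\delta/4$. This gives precisely the exponent $-\frac{k\eta\delta}{8}(2+\delta-2\eta)$, since $2+\delta-2\eta=3-\sqrt{1-\gamma^2}-2\eta$. In the complementary case it gives $m\ge\delta k/4$. With your $\tau\propto\delta^2$, the bound $4/(\tau^2p^2(2+k)^2)$ is then exactly the stated $1024/\delta^6$ term. The $(2+k)^2$ comes from $\tau\sqrt{\alpha_{t_0}}\le 1$, which the recursion itself forces, together with $2+pk\ge p(2+k)$.

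So the $\delta^{-6}$ is not an artifact of a crude threshold. Any two-set counting argument that pays the worst-case growth $1+\eta^2+\eta\sqrt{1-\gamma^2}$ on good steps needs both $\beta^\star=O(\delta)$ and $p=O(\delta)$. Your $\delta^{-4}$ estimate does not follow from the scheme you describe.

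\textbf{Your closing idea works.} Once carried out, it gives a complete and sharper proof.
\begin{itemize}
\item Let $S_k=\sum_{t<k}\beta_t$. Telescoping with $\log(1+x)\le x$ gives $\|g_k\|\le\|g_0\|\exp\bigl(-k\eta\delta+\eta(1+\eta)S_k\bigr)$.
\item The decrease lemma gives $\alpha_t-\alpha_{t+1}\ge\tau_0\beta_t^2\alpha_t^{3/2}$ with $\tau_0=\eta(1-\eta^2/3)/(C\sqrt{M}\Lambda^{3/2})$. By monotonicity, $\alpha_{t+1}^{-1/2}-\alpha_t^{-1/2}\ge\tau_0\beta_t^2/2$.
\item Summing and applying Cauchy--Schwarz gives $\alpha_k^{-1/2}\ge\tau_0 S_k^2/(2k)$.
\item If $S_k\le k\delta/(2(1+\eta))$, then $\alpha_k\le\Lambda\|g_0\|e^{-k\eta\delta/2}$.
\item Otherwise, $\alpha_k\le 64(1+\eta)^4C^2M\Lambda^3/\bigl(\eta^2(1-\eta^2/3)^2\delta^4k^2\bigr)$.
\end{itemize}
Since $\delta,\eta\le 1$, each bound is at most the corresponding term of the stated maximum for $k\ge 1$, and $k=0$ is trivial. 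This route needs no subsequence bookkeeping and improves $\delta^{-6}$ to $\delta^{-4}$. As written, though, it is only a one-line suggestion, so the proposal does not yet prove the statement.
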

\begin{proof}
    By \cref{lemmaDecreaseInLossNEW} we have that $L(F_k) \leq L(F_{k-1}) \leq ... \leq L(F_0)$, implying that the iterates $F_k$ remain within the sublevel set defined by $F_0$. Consequently, $\|F_k-F^*\| \leq \Lambda$ for all $T$. By convexity we have the bound
    \begin{align}
        L(F_k) - L(F^*)
        &\leq \langle g_k, F_k-F^*\rangle  \leq \|g_k\| \Lambda.
    \end{align}
    Let $\mathcal{I}_\infty := \{ t\in \mathbb{N} : \|g_{t+1}\|\geq (1-\eta a) \|g_t\| \}$ be the set of indices where the gradient does not contract sufficiently, for some $0<a<1$ to be specified later. Consider first any $k\in \mathcal{I}_\infty$. It follows from \cref{appendixLemmaGradientGrowth} that
    \begin{align}
        (1-\eta a) \|g_k\| \leq \|g_{k+1}\| \leq  \left(1-\eta(1-\sqrt{1-\gamma^2})\right)\|g_k\| + (\eta+\eta^2)\lambda_k \|f_{k+1}^w\|
    \end{align}
    Rearranging the terms we obtain that
    \begin{align}\label{eqCase1LambdafNormGradientBound}
        \lambda_k \|f_{k+1}^w\| \geq \frac{1 - \sqrt{1-\gamma^2} - a}{1+\eta} \|g_k\|
    \end{align}
    For the next argument we require the bound \eqref{eqCase1LambdafNormGradientBound} to be positive, hence we choose 
    \begin{align}
        a := \frac{1-\sqrt{1-\gamma^2}}{2}.
    \end{align}
    By \eqref{eqCase1LambdafNormGradientBound} and \cref{lemmaDecreaseInLossNEW} we have
    \begin{align}
        L(F_k) - L(F_{k+1})
        &\geq \left( 1 - \frac{1}{3}\eta^2 \right) \eta  \lambda_k \|f_{k+1}^w\|^2 \\
        &\geq \left( 1 - \frac{1}{3}\eta^2 \right) a^2 \frac{\eta}{(1+\eta)^2} \frac{\|g_k\|^2}{\lambda_k} \\
        &= \left( 1 - \frac{1}{3}\eta^2 \right) a^2  \frac{\eta}{(1+\eta)^2} \frac{\|g_k\|^\frac{3}{2}}{C\sqrt{M}} \\
        &\geq \left( 1 - \frac{1}{3}\eta^2 \right) \frac{a^2 }{C\sqrt{M\Lambda^3}} \frac{\eta}{(1+\eta)^2} \bigg(  L(F_k) - L(F^*) \bigg)^\frac{3}{2} \\
        &= \tau \bigg(  L(F_k) - L(F^*) \bigg)^\frac{3}{2}
    \end{align}
    where $\tau = \left( 1 - \frac{1}{3}\eta^2 \right) \frac{a^2 }{C\sqrt{M\Lambda^3}} \frac{\eta}{(1+\eta)^2}$. If this recursion held for all $k\geq 1$, the $\mathcal{O}(\frac{1}{k^2})$ rate would follow by \cref{lemmaAlpha3div2Recursion}. Instead, we consider the subsequence technique as in \cite{2023RegularizedNewtonMethodWithGlobalConvergence}. We start by enumerating $\mathcal{I}_\infty = \{t_0, t_1, t_2, ...\}$. We have that
    \begin{align}
        \alpha_{k+1} 
        &:=  L(F_{t_{k+1}}) - L(F^*) \\
        &= L(F_{t_k}) - L(F^*) +  L(F_{t_k+1}) - L(F_{t_k}) \\
        &\leq  L(F_{t_k}) - L(F^*) + \tau \bigg(L(F_{t_k}) - L(F^*) \bigg)^\frac{3}{2} \\
        &= \alpha_k - \tau \alpha_k^\frac{3}{2}
    \end{align}
    \cref{lemmaAlpha3div2Recursion} implies that $\alpha_{k} \leq \frac{4}{\tau^2(2+k)^2}$. Consider now any general $k\geq 1$. Define $\mathcal{I}_k = \{ t \in \mathcal{I}_\infty : t \leq k\}$. We consider two cases based on the cardinality of the index set $\mathcal{I}_k$, that is, $|\mathcal{I}_k| \geq pk$ and $|\mathcal{I}_k| < pk$, for some fraction $0<p<1$ to be specified later in the proof. 
    
    \textbf{Case} $|\mathcal{I}_k| \geq pk$: Using the fact that both $L(F_k)$ and $\alpha_k$ are decreasing sequences, we obtain that
    \begin{align}
        L(F_k) - L(F^*) 
        \leq L(F_{|\mathcal{I}_k|}) - L(F^*) 
        = \alpha_{|\mathcal{I}_k|} 
        \leq \alpha_{pk}
        \leq \frac{4}{\tau^2p^2(2+k)^2}
        = \frac{4 C^2 M \Lambda^3}{\eta^2 a^4p^2(2+k)^2}\frac{(1+\eta)^4}{(1-\frac{1}{3}\eta^2)^2}.
    \end{align}
    Consequently in this case we have a global $\mathcal{O}(\frac{1}{k^2})$ rate.
    
    \textbf{Case} $|\mathcal{I}_k| < pk$:
    For indices $t\notin \mathcal{I}_k$, we have the contraction $\|g_{t+1}\| \leq (1-\eta a) \|g_t\|$. For $t \in \mathcal{I}_k$ the gradient may grow, but it is bounded by \cref{appendixLemmaGradientGrowth} via $\|g_{t+1}\| \leq \left(1 + \eta^2 + \eta \sqrt{1-\gamma^2}\right) \|g_t\|$. Combining both we obtain that
    \begin{align}
        \|g_k\| 
        &\leq \left(1 + \eta^2 + \eta \sqrt{1-\gamma^2}\right)^{|\mathcal{I}_k|} (1-\eta a)^{k-|\mathcal{I}_k|} \|g_0\|\\
        &\leq \left(1 + \eta^2 + \eta \sqrt{1-\gamma^2}\right)^{pk}(1-\eta a)^{(1-p)k} \|g_0\|.
    \end{align}
    Using the bound $1+x \leq e^x$ and taking logarithms, we see that this scheme produces a contraction if
    \begin{align}
        p\left( \eta^2 + \eta \sqrt{1-\gamma^2} \right) - (1-p)\eta a < 0.
    \end{align}
    Rearranging the above, we find that we require
    \begin{align}
        p < \frac{a}{\eta + a + \sqrt{1-\gamma^2}}.
    \end{align}
    Substituting our choice $a = (1-\sqrt{1-\gamma^2})/2$, the denominator simplifies to $\eta + (1+\sqrt{1-\gamma^2})/2$. Since $\sqrt{1-\gamma^2} \leq 1$, this denominator is strictly less than $\eta + 1$. Therefore, for any $\eta \in (0, 1]$, the choice $p = \frac{a}{2}$ is sufficient. In general, one can show that $a$ and $p$ need to satisfy the inequality
    \begin{align}
        \frac{p\left(\eta+\sqrt{1-\gamma^2}\right)}{1-p} < a <1-\sqrt{1-\gamma^2}.
    \end{align}
    for the proof in both cases above to be valid. With the choice $4p = 2a = 1-\sqrt{1-\gamma^2}$, one finds through elementary algebra and the bound $1+x \leq e^x$ that the gradient growth can be bounded by
    \begin{align}
        \|g_k\| \leq \exp\left( - \frac{k\eta}{8}(1-\sqrt{1-\gamma^2})(3-\sqrt{1-\gamma^2}-2\eta) \right) \|g_0\|,
    \end{align}
    yielding a linear rate of convergence.
    
    \textbf{Final bound:} The final convergence rate is obtained by taking the maximum bound of the two cases above. Note that $1-\sqrt{1-\gamma^2} = \frac{\gamma^2}{2} + o(\gamma^2)$. Hence the global convergence rate in the first case is 
    \begin{equation}
        \mathcal{O}\left(\frac{C^2M \Lambda^3}{\eta^2 k^2 \gamma^{12}}\right),
    \end{equation}
    while for the second case this is 
    \begin{equation}
        \mathcal{O}\left(\exp\left(-k\eta\gamma^2\left(\frac{\gamma^2}{2} +2-2\eta\right) / 8\right)\right).
    \end{equation}
    This concludes the proof.
\end{proof}

The following lemma is a standard result in the convex optimization literature, see e.g. \cite{2006CubicRegularizationOfNewtonMethodAndItsGlobalPerformance}. We include the general proof here for completeness

\begin{lemma}\label{lemmaAlpha3div2Recursion}
    If $\alpha_k$ is a decreasing sequence satisfying $0 \leq \alpha_{k+1} \leq \alpha_k - c\alpha_k^{\frac{3}{2}}$ for some $c>0$, then for all $k \geq 0$:
    \begin{equation}
        \alpha_k \leq \frac{1}{\left( \frac{1}{\sqrt{\alpha_0}} + \frac{c}{2}k \right)^2} \leq \frac{4}{c^2 (k + 2)^2}.
    \end{equation}
\end{lemma}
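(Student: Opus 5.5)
The approach is to pass to reciprocal square roots, where the recursion becomes additive. If $\alpha_k = 0$ for some $k$, then monotonicity and nonnegativity give $\alpha_j = 0$ for all $j \geq k$, and the bound holds trivially. We may therefore assume $\alpha_k > 0$ for all $k$ and set $\beta_k := \alpha_k^{-1/2}$. The goal is the increment bound $\beta_{k+1} - \beta_k \geq \frac{c}{2}$. Summing it from $0$ to $k-1$ gives $\beta_k \geq \beta_0 + \frac{c}{2}k$, and squaring and inverting gives the first inequality of the statement.

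For the increment bound, write
\begin{align*}
\frac{1}{\sqrt{\alpha_{k+1}}} - \frac{1}{\sqrt{\alpha_k}} = \frac{\alpha_k - \alpha_{k+1}}{\sqrt{\alpha_k}\sqrt{\alpha_{k+1}}\,\big(\sqrt{\alpha_k} + \sqrt{\alpha_{k+1}}\big)}.
\end{align*}
By the hypothesis the numerator is at least $c\alpha_k^{3/2}$. Since the sequence is decreasing, $\alpha_{k+1} \leq \alpha_k$, so the denominator is at most $\sqrt{\alpha_k}\cdot\sqrt{\alpha_k}\cdot 2\sqrt{\alpha_k} = 2\alpha_k^{3/2}$. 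The difference is therefore at least $\frac{c}{2}$. This is the only real step, and its one point of care is bounding the denominator from above. That is exactly where the decreasing hypothesis is used, together with the positivity of $\alpha_{k+1}$ so that the division is legitimate.

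For the second inequality we need $\frac{1}{\sqrt{\alpha_0}} \geq c$, since then $\frac{1}{\sqrt{\alpha_0}} + \frac{c}{2}k \geq \frac{c}{2}(k+2)$. This follows from the hypothesis at $k=0$: $0 \leq \alpha_1 \leq \alpha_0(1 - c\sqrt{\alpha_0})$ forces $c\sqrt{\alpha_0} \leq 1$ whenever $\alpha_0 > 0$. Squaring and inverting $\frac{1}{\sqrt{\alpha_0}} + \frac{c}{2}k \geq \frac{c}{2}(k+2)$ then yields $\alpha_k \leq \frac{4}{c^2(k+2)^2}$. The case $\alpha_0 = 0$ is covered by the trivial case above.
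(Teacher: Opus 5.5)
Your proof is correct and follows the paper's argument essentially step for step: the reciprocal-square-root increment is bounded below by $c/2$ using the decreasing hypothesis on the denominator, then telescoped, and $c\sqrt{\alpha_0}\le 1$ (from the hypothesis at $k=0$) gives the second inequality. Your handling of vanishing terms is more careful than the paper's, which silently divides by $\sqrt{\alpha_{k+1}}$, though it is not literally a WLOG reduction: if $\alpha$ first vanishes at some index $k^*$, the bound for $k<k^*$ comes from telescoping over the strictly positive segment $\alpha_0,\dots,\alpha_{k^*-1}$, which your argument already covers.
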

\begin{proof}
    Using the hypothesis $\alpha_k - \alpha_{k+1} \geq c \alpha_k^{3/2}$, consider the increment
    \begin{align}
        \frac{1}{\sqrt{\alpha_{k+1}}} - \frac{1}{\sqrt{\alpha_k}}
        &= \frac{\sqrt{\alpha_k} - \sqrt{\alpha_{k+1}}}{\sqrt{\alpha_k \alpha_{k+1}}} \\
        &= \frac{\alpha_k - \alpha_{k+1}}{\sqrt{\alpha_k \alpha_{k+1}} (\sqrt{\alpha_k} + \sqrt{\alpha_{k+1}})} \\
        &\geq \frac{c \alpha_k^{3/2}}{\sqrt{\alpha_k \alpha_{k+1}} (\sqrt{\alpha_k} + \sqrt{\alpha_{k+1}})}.
    \end{align}
    Since the sequence is decreasing ($\alpha_{k+1} \leq \alpha_k$), we have $\sqrt{\alpha_{k+1}} \leq \sqrt{\alpha_k}$, hence
    \begin{align}
        \frac{1}{\sqrt{\alpha_{k+1}}} - \frac{1}{\sqrt{\alpha_k}}
        \geq \frac{c \alpha_k^{3/2}}{\sqrt{\alpha_k^2} (\sqrt{\alpha_k} + \sqrt{\alpha_k})}
        = \frac{c}{2}.
    \end{align}
    Summing from $i=0$ to $k-1$ yields a telescoping sum, from which we obtain that
    \begin{equation}
        \frac{1}{\sqrt{\alpha_k}} \geq \frac{1}{\sqrt{\alpha_0}} + \frac{ck}{2}.
    \end{equation}
    Inverting and squaring both sides yields the inequality
    \begin{equation}
        \alpha_k \leq \frac{1}{\left( \frac{1}{\sqrt{\alpha_0}} + \frac{c}{2}k \right)^2}.
    \end{equation}
    The final bound follows from the fact that $c\sqrt{\alpha_k} \leq 1$, which follows directly from the hypothesis $0 \leq \alpha_{k+1} \leq \alpha_k - c\alpha_k^{\frac{3}{2}}$.
\end{proof}


\clearpage

\begin{lemma}
Let the regularization be $\lambda_k = C \sqrt{M\|g_k\|}$ for some $C\geq 1$. If $L$ is $\mu$-strongly convex with $2M$-Lipschitz Hessian and weak gradient edge $\gamma$, and if there exists a $k_0\geq 0$ such that $\|g_{k_0}\| \leq \frac{\mu^2}{4C^2M}\left( \frac{1-\sqrt{1-\gamma^2}}{1+\sqrt{1-\gamma^2}}\right)^2$, then for all $k\geq k_0$ we have that
\begin{align}
    \|g_{k+1}\| \leq \frac{3 + \rho}{4} \|g_k\|
\end{align}
where $\rho = 1 - \eta(1-\sqrt{1-\gamma^2})$. Hence the iterates of \cref{alg:boostingGradRegNewton} converge linearly locally.
\end{lemma}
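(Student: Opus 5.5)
The plan is to induct on $k\geq k_0$, with hypothesis that $\|g_k\|$ lies below the threshold
\[
r_\ast := \frac{\mu^2}{4C^2M}\Big(\frac{\delta}{C_\gamma}\Big)^2,
\qquad \delta := 1-\sqrt{1-\gamma^2},
\qquad C_\gamma := 1+\sqrt{1-\gamma^2},
\]
so that $\rho = 1-\eta\delta$. Once the one-step contraction $\|g_{k+1}\|\leq \frac{3+\rho}{4}\|g_k\|$ holds, the factor is $<1$ because $\delta>0$ for $\gamma>0$. Hence $\|g_{k+1}\|\leq\|g_k\|\leq r_\ast$, the hypothesis propagates, and linear convergence follows. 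Everything therefore reduces to proving the one-step bound whenever $\|g_k\|\leq r_\ast$. Throughout I take $\eta\in(0,1]$, as in the global theorem.

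\textbf{Step 1: starting recursion.} I start from the intermediate inequality \eqref{eqGradientRecursionForLater} in the proof of \cref{appendixLemmaGradientGrowth}. Combined with the weak gradient edge it reads
\[
\|g_{k+1}\| \leq \rho\|g_k\| + \eta\lambda_k\|f^w_{k+1}\| + M\eta^2\|f^w_{k+1}\|^2 .
\]
I deliberately avoid the cruder bound $M\|f^w\|^2\leq \lambda_k\|f^w\|$, since it would destroy the smallness we need.

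\textbf{Step 2: bound the weak step by strong convexity.} By \eqref{eqWeakGradient}, $f^w_{k+1} = -(H_k+\lambda_k I)^{-1}g^w_k$. Since $H_k\succeq\mu I$, the inverse has operator norm at most $1/(\mu+\lambda_k)\leq 1/\mu$. The edge condition and the triangle inequality give $\|g^w_k\|\leq C_\gamma\|g_k\|$. Hence $\|f^w_{k+1}\|\leq C_\gamma\|g_k\|/\mu$.

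\textbf{Step 3: substitute and isolate a small parameter.} Inserting Step 2 and $\lambda_k = C\sqrt{M\|g_k\|}$ into Step 1, I define
\[
s := \frac{C\sqrt{M}\,C_\gamma}{\mu}\sqrt{\|g_k\|}.
\]
Then $\eta\lambda_k\|f^w_{k+1}\|\leq \eta s\|g_k\|$. Also $M\eta^2\|f^w_{k+1}\|^2\leq \eta^2 s^2\|g_k\|/C^2\leq \eta s^2\|g_k\|$, using $C\geq1$ and $\eta\leq1$. This yields
\[
\|g_{k+1}\|\leq \big(1-\eta\delta+\eta(s+s^2)\big)\|g_k\|.
\]

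\textbf{Step 4: the main obstacle.} The key point is that every error term carries at least one factor of $\eta$, just as the contraction $\eta\delta$ does. That is why the threshold can be independent of $\eta$, and it is the step I expect to need the most care. The target $\frac{3+\rho}{4} = 1-\frac{\eta\delta}{4}$ holds as soon as $s+s^2\leq \frac{3}{4}\delta$. The hypothesis $\|g_k\|\leq r_\ast$ is chosen exactly so that $s\leq \delta/2$. Then, since $\delta\leq1$, we get $s+s^2\leq \delta/2+\delta^2/4\leq 3\delta/4$. This closes the induction.

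Finally, I would note that $\delta = \gamma^2/2+o(\gamma^2)$, which gives the stated asymptotic form $\rho = \mathcal{O}(1-\eta\gamma^2/2)$.
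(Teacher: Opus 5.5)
Your proposal is correct and follows essentially the same route as the paper: the recursion \eqref{eqGradientRecursionForLater} with the edge bound giving $\rho\|g_k\|$, the strong-convexity bound $\|f^w_{k+1}\|\le C_\gamma\|g_k\|/\mu$, and the sufficient condition $s\le\delta/2$ (the paper's $y=\eta s\le\Delta/2$) to get $s+s^2\le\frac34\delta$. Two points that the paper's write-up leaves implicit or garbles are handled precisely in your version: the explicit induction showing the threshold persists for all $k\ge k_0$, and the consistent tracking of the $C^2$ factor.
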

\begin{proof}
    We have the standard bound
    \begin{align}
        \|g^w_k\| 
        &\leq \|g_k\| + \|g_k^w - g_k\| \\
        &\leq \left( 1 + \sqrt{1-\gamma^2} \right) \|g_k\| \\
        &:= C_\gamma \|g_k\|,
    \end{align}
    and since we assumed $\mu$-strong convexity, we have that
    \begin{align}
        \|f^w_{k+1}\| 
        &= \|(H_k+\lambda I)^{-1} g^w\| \\
        &\leq \frac{1}{\mu}\|g^w\| \\
        &\leq \frac{C_\gamma}{\mu}\|g^w\|.
    \end{align}
    We combine the above with \cref{lemmaUsefulProperties,appendixLemmaGradientGrowth}, specifically \cref{eqGradientRecursionForLater}, to obtain that
    \begin{align}
        \|g_{k+1}\|
        &\leq (1-\eta) \|g_k\| + \eta \|g_k - g_k^w\| +  \eta \lambda_k \|f^w_{k+1}\| + M \eta^2 \|f^w_{k+1}\|^2 \\
        &\leq (1-2\eta + \eta C_\gamma) \|g_k\| + \eta \lambda_k \|f^w_{k+1}\| + M \eta^2 \|f^w_{k+1}\|^2  \\
        &\leq (1-2\eta + \eta C_\gamma) \|g_k\| + \sqrt{C^2 M \eta^2 \frac{C_\gamma^2}{\mu^2}} \|g_k\|^\frac{3}{2} + C^2 M \eta^2 \frac{C_\gamma^2}{\mu^2}\|g_k\|^2.
    \end{align}
    Write $\rho = (1-2\eta + \eta C_\gamma)$ and $K = C^2 M \eta^2 \frac{C_\gamma^2}{\mu^2}$. We seek the condition for a strict decrease $\| g_{k+1} \| < \|g_k\|$. Substituting the above and dividing by $\|g_k\|$ we obtain that
    \begin{align}
        \sqrt{K \|g_k\|} + K\|g_k\| < 1-\rho.
    \end{align}
    Let $\Delta = 1 - \rho = \eta(1 - \sqrt{1-\gamma^2})$. Since $\eta \leq 1$, we observe that $\Delta \leq 1$. Let $y = \sqrt{K\|g_k\|}$. The condition becomes $y + y^2 < \Delta$. A sufficient condition for this to hold is $y \leq \frac{\Delta}{2}$, because this implies $y+y^2 \leq \frac{\Delta}{2} + \frac{\Delta^2}{4} = \frac{\Delta(2+\Delta)}{4}$. Since $\Delta \leq 1$, we have $(2+\Delta) \leq 3$, and thus $y+y^2 \leq \frac{3}{4}\Delta < \Delta$. Expanding the condition $y \leq \frac{\Delta}{2}$ yields
    \begin{align}
        \sqrt{\frac{M \eta^2 C_\gamma^2}{\mu^2} \|g_k\|} \leq \frac{\eta(1 - \sqrt{1-\gamma^2})}{2}.
    \end{align}
    Canceling $\eta$ and rearranging for $\sqrt{\|g_k\|}$, we obtain
    \begin{align}
        \sqrt{\|g_k\|} \leq \frac{\mu}{2\sqrt{M}} \frac{1 - \sqrt{1-\gamma^2}}{C_\gamma} = \frac{\mu}{2\sqrt{M}} \frac{1 - \sqrt{1-\gamma^2}}{1 + \sqrt{1-\gamma^2}}.
    \end{align}
    Squaring both sides gives the bound in the hypothesis. Under this condition, the sum of the higher order terms is bounded by $\frac{3}{4}\Delta \|g_k\| = \frac{3}{4}(1-\rho)\|g_k\|$. Plugging this back into the gradient bound we obtain that
    \begin{align}
        \|g_{k+1}\| \leq \rho \|g_k\| + \frac{3}{4}(1-\rho)\|g_k\| = \frac{3+\rho}{4}\|g_k\|.
    \end{align}
    Since $\rho < 1$, the factor is strictly less than 1, proving linear convergence.
\end{proof}

\clearpage
\section{Divergence of Vanilla Newton on Smooth Convex Losses}\label{app:newton_divergence}
This appendix records explicit smooth convex objectives for which the classical
(unregularized) Newton step may fail to converge globally, even though the loss
satisfies the Lipschitz-Hessian assumption used throughout \cref{secGradientRegularizedNewton} and
\cref{app:GlobalConvergenceProofs}. These examples show that a fixed damping (constant learning rate)
does not, by itself, globalize Newton's method, and motivate the adaptive
gradient regularization used in \cref{alg:boostingGradRegNewton}.

\subsection{Newton, damped Newton, and gradient regularized Newton}

\begin{definition}[Newton-type updates in $\mathbb{R}^d$]
\label{def:newton_updates}
Let $L:\mathbb{R}^d\to\mathbb{R}$ be twice continuously differentiable and assume
$\nabla^2 L(x)\succ 0$ at the current iterate.
\begin{itemize}
  \item The (classical) Newton update is
  \[
    N(x) := x - (\nabla^2 L(x))^{-1}\nabla L(x).
  \]
  \item Given a fixed learning rate $\eta>0$, the damped Newton update is
  \[
    N_\eta(x) := x - \eta(\nabla^2 L(x))^{-1}\nabla L(x).
  \]
\end{itemize}
Assume moreover that $\nabla^2 L$ is $2M$-Lipschitz in operator norm:
\[
  \|\nabla^2 L(u)-\nabla^2 L(v)\|_{\mathrm{op}}\le 2M\|u-v\|
  \qquad \forall u,v\in\mathbb{R}^d.
\]
Then the (exact) gradient regularized Newton (GRN) update is
\[
  \mathrm{GRN}_{M,\eta}(x) := x - \eta\big(\nabla^2L(x)+\lambda(x)I\big)^{-1}\nabla L(x),
  \qquad \lambda(x) := \sqrt{M\|\nabla L(x)\|}.
\]
\end{definition}

\subsection{A canonical 1D counterexample: $L(x)=\sqrt{1+x^2}$}

\begin{proposition}[A smooth convex loss with divergent Newton (1D)]
\label{prop:sqrt_divergence}
Let $L:\mathbb{R}\to\mathbb{R}$ be given by $L(x):=\sqrt{1+x^2}$.
Then $L$ is convex and smooth, and $L''$ is globally Lipschitz.
The classical Newton iteration $x_{k+1}=N(x_k)$ satisfies the exact recursion
\[
  x_{k+1} = -x_k^3.
\]
In particular, $|x_{k+1}|=|x_k|^3$, yielding cubic local convergence for $|x_0|<1$
and divergence for $|x_0|>1$.
\end{proposition}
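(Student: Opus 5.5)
The plan is to compute the first three derivatives of $L(x)=\sqrt{1+x^2}$ explicitly and read off every claim from them. Differentiating gives
\begin{align*}
L'(x) = \frac{x}{(1+x^2)^{1/2}}, \qquad L''(x) = \frac{1}{(1+x^2)^{3/2}}, \qquad L'''(x) = \frac{-3x}{(1+x^2)^{5/2}}.
\end{align*}

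These formulas settle the regularity claims. Since $L''>0$ everywhere, $L$ is (strictly) convex. Since $0<L''\le 1$, $L'$ is $1$-Lipschitz, so $L$ is $S$-smooth with $S=1$.

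Global Lipschitzness of $L''$ follows once $|L'''|$ is bounded on $\mathbb{R}$. I will maximize $\phi(x)=3|x|(1+x^2)^{-5/2}$ by setting the derivative to zero. This gives $1+x^2=5x^2$, so $x^2=1/4$ and $\sup|L'''| = \tfrac{3}{2}(5/4)^{-5/2}<\infty$. The mean value theorem then gives the Lipschitz bound, with $2M$ equal to that supremum.

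For the Newton recursion I will substitute directly into the update:
\begin{align*}
N(x) = x - \frac{L'(x)}{L''(x)} = x - x(1+x^2) = -x^3.
\end{align*}
Hence $x_{k+1}=-x_k^3$, so $|x_{k+1}|=|x_k|^3$ and $|x_k|=|x_0|^{3^k}$. This tends to $0$ cubically when $|x_0|<1$ and to $\infty$ when $|x_0|>1$. The borderline case $|x_0|=1$ oscillates between $\pm1$, which is consistent with the statement since it asserts nothing there.

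No step is a real obstacle. The only place requiring care is keeping track of the factor convention $2M$ when stating the Lipschitz constant of $L''$.
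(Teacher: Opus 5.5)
Your proposal is correct and follows essentially the same route as the paper. Both compute $L'$, $L''$, $L^{(3)}$, use $L''>0$ for convexity and boundedness of $L^{(3)}$ for the Lipschitz Hessian, and obtain $x_{k+1}=-x_k^3$ from $L'/L''=x(1+x^2)$. Your version is slightly more complete: you explicitly verify smoothness via $0<L''\le 1$, which the paper's proof leaves implicit, and you compute the sharp constant $\sup|L^{(3)}|=\tfrac{3}{2}(5/4)^{-5/2}$.
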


\begin{proof}
We compute
\[
L'(x)=\frac{x}{\sqrt{1+x^2}},\qquad
L''(x)=\frac{1}{(1+x^2)^{3/2}},\qquad
L^{(3)}(x)=-\frac{3x}{(1+x^2)^{5/2}}.
\]
In particular $L''(x)>0$ for all $x$, hence $L$ is convex. Since $L^{(3)}$ is bounded,
$L''$ is globally Lipschitz. Moreover,
\[
\frac{L'(x)}{L''(x)}
=\frac{x}{\sqrt{1+x^2}}\,(1+x^2)^{3/2}
=x(1+x^2).
\]
Thus the Newton update is
\[
x_{k+1}=x_k-\frac{L'(x_k)}{L''(x_k)}
=x_k-x_k(1+x_k^2)
=-x_k^3.
\]
Finally, $|x_{k+1}|=|x_k|^3$ implies $x_k\to 0$ if $|x_0|<1$ and $|x_k|\to\infty$
if $|x_0|>1$.
\end{proof}

\begin{proposition}[Fixed damping does not globalize Newton]
\label{prop:damped_not_global}
Fix any learning rate $\eta>0$ and consider $x_{k+1}=N_\eta(x_k)$ for
$L(x)=\sqrt{1+x^2}$. Define
\[
  X(\eta) := \sqrt{\max\Big\{0,\frac{3}{\eta}-1\Big\}}.
\]
Then every initialization with $|x_0|\ge X(\eta)$ diverges and satisfies
\[
  |x_{k+1}|\ge 2|x_k| \qquad \forall k\ge 0.
\]
\end{proposition}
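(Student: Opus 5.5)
Using the Newton-step computation from the proof of \cref{prop:sqrt_divergence}, namely $L'(x)/L''(x) = x(1+x^2)$, the damped iteration becomes the explicit scalar recursion
\begin{align*}
    x_{k+1} = N_\eta(x_k) = x_k - \eta x_k(1+x_k^2) = x_k\big(1-\eta-\eta x_k^2\big).
\end{align*}
Hence $|x_{k+1}| = |x_k|\,\big|1-\eta-\eta x_k^2\big|$. The plan is to show that the multiplier $\big|1-\eta-\eta x_k^2\big|$ is at least $2$ whenever $|x_k|\ge X(\eta)$, and then to propagate this by induction.

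For the one-step bound, suppose $|x|\ge X(\eta)$. Then $x^2 \ge \frac{3}{\eta}-1$, so $\eta x^2 \ge 3-\eta$. This gives $\eta+\eta x^2-1 \ge 2$, and therefore $|1-\eta-\eta x^2| = \eta+\eta x^2 -1 \ge 2$. The case $\eta \ge 3$ needs a separate check. There $X(\eta)=0$ and the hypothesis is vacuous, but the inequality $\eta x^2 \ge 3-\eta$ still holds trivially because $3-\eta\le 0\le \eta x^2$. So the same chain gives $|x_{k+1}|\ge 2|x_k|$ for every $x_k$.

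The induction then runs as follows. If $|x_k|\ge X(\eta)$, the one-step bound gives $|x_{k+1}| \ge 2|x_k| \ge |x_k| \ge X(\eta)$, so the hypothesis is preserved. Consequently $|x_{k+1}|\ge 2|x_k|$ holds for all $k\ge0$, and $|x_k|\ge 2^k|x_0|$.

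The only delicate point is the meaning of divergence in the degenerate case. When $X(\eta)=0$, the initialization $x_0=0$ is the minimizer and is fixed by the recursion, so it does not diverge; the bound $|x_{k+1}|\ge 2|x_k|$ still holds there since both sides are $0$. I would therefore state divergence for $x_0\neq 0$, or equivalently read $|x_0|\ge X(\eta)$ as including $x_0\ne0$. Whenever $X(\eta)>0$, every admissible $x_0$ is nonzero and $|x_k|\to\infty$ geometrically.
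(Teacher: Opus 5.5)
Your proof is correct and follows the paper's argument: the explicit recursion $x_{k+1}=x_k\bigl(1-\eta(1+x_k^2)\bigr)$, the one-step bound $\eta(1+x_k^2)\ge 3$ giving a multiplier of absolute value at least $2$, and induction. Your remark that for $\eta\ge 3$ the admissible initialization $x_0=0$ is a fixed point is a valid refinement the paper's proof overlooks; there \emph{diverges} should be read as applying to $x_0\neq 0$.
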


\begin{proof}
From the computations above, $L'(x)/L''(x)=x(1+x^2)$, hence
\[
x_{k+1}=x_k-\eta x_k(1+x_k^2)=x_k\big(1-\eta(1+x_k^2)\big).
\]
If $|x_k|\ge X(\eta)$, then by definition $\eta(1+x_k^2)\ge 3$, so
$|1-\eta(1+x_k^2)|\ge 2$ and therefore $|x_{k+1}|\ge 2|x_k|$.
Induction yields divergence.
\end{proof}

\subsection{A drift-based generator for smooth convex Newton counterexamples}
\label{app:drift_generator}

The previous example is not accidental. We now record a general construction
principle: prescribing a one-dimensional \emph{Newton drift} $d(x)$ determines a
convex objective $L$ whose Newton iteration is exactly $x_{k+1}=x_k-\tilde d(x_k)$,
where $\tilde d$ is the odd extension of $d$. This yields an infinite family of
smooth convex losses on which classical Newton diverges.

\begin{theorem}[Lipschitz-Hessian drift-to-loss generator for 1D Newton dynamics]
\label{thm:drift_to_loss_lipschitz}
Let $d:[0,\infty)\to[0,\infty)$ be continuous and satisfy:
{\renewcommand{\labelenumi}{(D\arabic{enumi})}%
\begin{enumerate}
\setcounter{enumi}{-1}
\item $d(0)=0$ and $d(x)>0$ for all $x>0$;
\item there exist $\delta>0$ and $C_0<\infty$ such that
\[
|d(x)-x|\le C_0x^2 \qquad \forall x\in[0,\delta].
\]
\end{enumerate}}

Define, for $x>0$,
\[
\Phi(x):=\int_1^x \frac{ds}{d(s)},\qquad
L_+'(x):=a\,e^{\Phi(x)}\ (a>0),\qquad
L_+(x):=\int_0^x L_+'(t)\,dt,
\]
and extend evenly by $L(x):=L_+(|x|)$. Let $\tilde d:\mathbb{R}\to\mathbb{R}$ be the odd extension
$\tilde d(x):=\mathrm{sign}(x)\,d(|x|)$ with $\tilde d(0)=0$. Then:
{\renewcommand{\labelenumi}{(\roman{enumi})}%
\begin{enumerate}
\item $L$ is even, convex, and $C^2(\mathbb{R})$, with $L''(x)>0$ for $x\neq 0$.
Moreover, for $x>0$,
\[
L'(x)=L_+'(x),\qquad
L''(x)=\frac{L_+'(x)}{d(x)}.
\]
In particular, $L'$ is odd and $L''$ is even.

\item For every $x\neq 0$,
\[
\frac{L'(x)}{L''(x)}=\tilde d(x),
\]
and therefore the classical Newton update for $L$ is exactly
\[
N(x)=x-\frac{L'(x)}{L''(x)}=x-\tilde d(x).
\]

\item If there exist $X_0>0$ and $\beta>1$ such that
\[
d(x)\ge (1+\beta)x \qquad \forall x\ge X_0,
\]
then every Newton iterate initialized with $|x_0|\ge X_0$ diverges and satisfies
\[
|x_{k+1}|\ge \beta|x_k| \qquad \forall k\ge 0.
\]

\item \textbf{(Globally Lipschitz Hessian.)}
Assume in addition that $d\in C^1((0,\infty))$ and that
\begin{equation}
\label{eq:D2_ratio}
C_1 \;:=\;\sup_{x>0}\frac{|d'(x)-1|}{d(x)}\;<\;\infty.
\end{equation}
Assume moreover that there exist $\kappa>1$ and $X_1>0$ such that
\begin{equation}
\label{eq:D3_linear_lower}
d(x)\ge \kappa x \qquad \forall x\ge X_1.
\end{equation}
(For instance, \cref{eq:D3_linear_lower} holds whenever the hypothesis in (iii) holds.)
Then $L''$ is globally Lipschitz on $\mathbb{R}$, i.e.\ there exists $M<\infty$ such that
\[
|L''(u)-L''(v)|\le 2M|u-v|\qquad \forall u,v\in\mathbb{R}.
\]
More precisely, for $x>0$ we have
\begin{equation}
\label{eq:L3_formula}
L^{(3)}(x)=\frac{L_+'(x)}{d(x)^2}\,\bigl(1-d'(x)\bigr)
= L''(x)\,\frac{1-d'(x)}{d(x)},
\end{equation}
and hence $\sup_{x>0}|L^{(3)}(x)|<\infty$, so one may take
$2M=\sup_{x\neq 0}|L^{(3)}(x)|$.
\end{enumerate}}
\end{theorem}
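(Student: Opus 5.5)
The plan is to construct $L$ on $(0,\infty)$ from the drift, show regularity near the origin using (D1), and then verify (i)–(iv) by direct differentiation.

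\textbf{Step 1: behaviour of $\Phi$ and of $L'_+$ near $0$.} For $x>0$, $\Phi$ is $C^1$ with $\Phi'(x)=1/d(x)$. Hence $L_+'$ is $C^1$ on $(0,\infty)$ and
\[
L_+''(x)=L_+'(x)\,\Phi'(x)=\frac{L_+'(x)}{d(x)}>0 .
\]
The main obstacle is the behaviour at $0$. By (D1), on $(0,\delta]$ we have
\[
\frac{1}{d(s)}=\frac{1}{s}+r(s),\qquad |r(s)|\le \frac{C_0 s^2}{s\,d(s)}\le C',
\]
since $d(s)\ge s-C_0s^2\ge s/2$ after shrinking $\delta$. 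Therefore $\Phi(x)=\log x+c+R(x)$ with $R$ continuous up to $0$, and so $L_+'(x)=a\,x\,e^{c+R(x)}$ for some constant $c$.

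Consequently $L_+'(x)\to 0$, and
\[
\frac{L_+'(x)}{x}\to a e^{c+R(0)}=:\ell>0 .
\]
This makes $L_+'$ extend continuously by $0$, and $L_+$ is $C^1$ with $L_+'(0)=0$.

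\textbf{Step 2: $L$ is $C^2$ (item (i)).} For $x>0$ write
\[
L''(x)=\frac{L_+'(x)}{x}\cdot\frac{x}{d(x)} .
\]
Both factors are continuous on $(0,\infty)$. As $x\to 0^+$ the first tends to $\ell$ and the second tends to $1$ by (D1), so $L''(x)\to\ell$.

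Since $L'$ is odd, the one-sided derivatives of $L'$ at $0$ both equal $\lim L_+'(x)/x=\ell$. Hence $L''(0)=\ell$ exists and $L''$ is continuous and even. Convexity follows from $L''\ge 0$ everywhere.

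\textbf{Step 3: items (ii) and (iii).} Item (ii) is immediate from Step 1 for $x>0$, namely $L'/L''=d(x)$, and by oddness of $L'$ and evenness of $L''$ it extends to $x<0$ with $\tilde d$.

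For (iii), the Newton map is $x-\tilde d(x)$. For $|x|\ge X_0$,
\[
|x-\tilde d(x)|=d(|x|)-|x|\ge \beta|x|\ge |x|\ge X_0,
\]
so the region $|x|\ge X_0$ is invariant and induction gives $|x_{k+1}|\ge\beta|x_k|$, hence divergence.

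\textbf{Step 4: item (iv).} Differentiate $L''=L_+'/d$ using $(L_+')'=L_+'/d$ to get \eqref{eq:L3_formula}. The factor $\frac{1-d'}{d}$ is bounded by $C_1$, so it suffices to show $L''$ is bounded on $(0,\infty)$.

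Near $0$ this holds by Step 2, and on compact subsets of $(0,\infty)$ by continuity. For $x\ge X_1$, \eqref{eq:D3_linear_lower} gives
\[
\Phi(x)\le \Phi(X_1)+\tfrac{1}{\kappa}\log\tfrac{x}{X_1},
\]
so $L_+'(x)\lesssim x^{1/\kappa}$. Combined with $d(x)\ge\kappa x$ this yields
\[
L''(x)\lesssim x^{1/\kappa-1}\to 0 .
\]
Thus $L''$ is bounded, hence $L^{(3)}$ is bounded on $x\neq 0$.

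Since $L''$ is even, continuous at $0$, and $C^1$ away from $0$ with bounded derivative, the mean value theorem applied on each side of $0$ (splitting intervals that cross $0$) gives the global Lipschitz bound with $2M=\sup_{x\ne0}|L^{(3)}|$.
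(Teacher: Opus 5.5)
Your proof is correct and follows essentially the same route as the paper. It uses the expansion $1/d(s)=1/s+O(1)$ near $0$ to get $\Phi(x)=\log x+c+o(1)$ and $L_+''(x)\to\ell>0$, then the drift identity $L'/L''=\tilde d$, and finally the bound $|L^{(3)}|\le C_1 L''$ together with $L''\lesssim x^{1/\kappa-1}$ for $x\ge X_1$. You are in fact slightly more careful than the paper in three places: you justify $L''(0)$ via the one-sided difference quotients of $L'$, you check that $\{|x|\ge X_0\}$ is invariant under the Newton map, and you spell out the mean-value argument across $0$.
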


\begin{proof}
We split the argument into four parts.

\paragraph{Part A: construction and smoothness on $(0,\infty)$.}
By (D0), the map $s\mapsto 1/d(s)$ is continuous on compact subsets of $(0,\infty)$.
Hence $\Phi\in C^1((0,\infty))$ with $\Phi'(x)=1/d(x)$.
Therefore $L_+'(x)=a e^{\Phi(x)}$ is $C^1((0,\infty))$ and
\[
(L_+')'(x)=a e^{\Phi(x)}\Phi'(x)=\frac{L_+'(x)}{d(x)}.
\]
Thus $L_+\in C^2((0,\infty))$, with
\[
L_+''(x)=(L_+')'(x)=\frac{L_+'(x)}{d(x)}>0 \qquad (x>0),
\]
so $L_+$ is strictly convex on $(0,\infty)$.

\paragraph{Part B: behavior at $0$ and $C^2$ even extension.}
Write $d(x)=x(1+\varepsilon(x))$ on $(0,\delta]$, where by (D1) we have
$|\varepsilon(x)|\le C_0x$, hence $\varepsilon(x)\to 0$ as $x\downarrow 0$.
Then on $(0,\delta]$,
\[
\frac{1}{d(x)}=\frac{1}{x}\cdot \frac{1}{1+\varepsilon(x)}
=\frac{1}{x}+g(x),
\]
where $g$ is bounded on $(0,\delta]$ (since $(1+\varepsilon)^{-1}=1+O(\varepsilon)$).
Integrating gives
\[
\Phi(x)=\int_1^x \frac{ds}{d(s)}
=\int_1^x \frac{ds}{s}+\int_1^x g(s)\,ds
=\log x + C_\ast + o(1)\qquad (x\downarrow 0)
\]
for some finite constant $C_\ast$.
Exponentiating yields
\[
L_+'(x)=a e^{\Phi(x)}=a e^{C_\ast}\,x(1+o(1))\qquad (x\downarrow 0),
\]
so $L_+'(x)\to 0$ and $L_+'(x)/x\to a e^{C_\ast}\in(0,\infty)$.
Since $d(x)=x(1+o(1))$, we obtain
\[
L_+''(x)=\frac{L_+'(x)}{d(x)}\to a e^{C_\ast}\in(0,\infty)\qquad (x\downarrow 0).
\]
Define $L_+(0):=0$, $L_+'(0):=0$, and $L_+''(0):=\lim_{x\downarrow 0}L_+''(x)$; then
$L_+\in C^2([0,\infty))$.

Now define $L(x)=L_+(|x|)$. This $L$ is even and $C^2(\mathbb{R})$ with
$L'(x)$ odd and $L''(x)$ even, and for $x>0$ we have
$L'(x)=L_+'(x)$ and $L''(x)=L_+''(x)=L_+'(x)/d(x)$.
Since $L''(x)\ge 0$ everywhere and $L''(x)>0$ for $x\neq 0$, $L$ is convex.
This proves (i).

\paragraph{Part C: drift identity and divergence.}
For $x>0$, by (i) we have $L'(x)=L_+'(x)$ and $L''(x)=L_+'(x)/d(x)$, hence
\[
\frac{L'(x)}{L''(x)}=d(x).
\]
By evenness of $L''$ and oddness of $L'$, the ratio $L'/L''$ is odd, so for $x<0$,
\[
\frac{L'(x)}{L''(x)}=-\frac{L'(|x|)}{L''(|x|)}=-d(|x|)=\tilde d(x),
\]
proving (ii). The divergence statement (iii) follows exactly as in \cref{thm:drift_to_loss_lipschitz}:
if $x\ge X_0$, then $N(x)=x-d(x)\le -\beta x$, hence $|N(x)|\ge \beta|x|$, and similarly for $x\le -X_0$.

\paragraph{Part D: globally Lipschitz Hessian.}
Assume \cref{eq:D2_ratio,eq:D3_linear_lower}.
For $x>0$, differentiating $L''(x)=L_+'(x)/d(x)$ and using $(L_+')'(x)=L_+'(x)/d(x)$
from Part~A gives \cref{eq:L3_formula}. Hence, for $x>0$,
\[
|L^{(3)}(x)| \le \frac{L_+'(x)}{d(x)^2}\,|1-d'(x)|
\le C_1 \frac{L_+'(x)}{d(x)} = C_1 L''(x).
\]
It remains to show $\sup_{x>0}L''(x)<\infty$.
By Part~B, $L''$ extends continuously to $x=0$ with $0<L''(0)<\infty$.
On $[0,X_1]$, continuity gives $\sup_{[0,X_1]}L''<\infty$.
For $x\ge X_1$, using \cref{eq:D3_linear_lower},
\[
\Phi(x)=\Phi(X_1)+\int_{X_1}^x \frac{ds}{d(s)}
\le \Phi(X_1)+\int_{X_1}^x \frac{ds}{\kappa s}
=\Phi(X_1)+\frac{1}{\kappa}\log\frac{x}{X_1},
\]
so $L_+'(x)=a e^{\Phi(x)}\le C\,x^{1/\kappa}$ for some $C$.
Therefore,
\[
L''(x)=\frac{L_+'(x)}{d(x)}\le \frac{C x^{1/\kappa}}{\kappa x}
= \frac{C}{\kappa}x^{\frac{1}{\kappa}-1},
\]
which is bounded for $x\ge X_1$ since $\kappa>1$.
Thus $\sup_{x>0}|L^{(3)}(x)|<\infty$, and since $L''$ is even, $L''$ is globally Lipschitz on $\mathbb{R}$.
\end{proof}

\subsection{Explicit Counterexamples }

\begin{corollary}[Elementary closed-form losses from explicit drifts]
\label{cor:elementary_losses_from_drifts}
Applying \cref{thm:drift_to_loss_lipschitz} with the following choices of $d$
(and the corresponding choice of $a$) yields the stated closed-form losses:
\begin{enumerate}
\item If $d(x)=x(1+x)$ and $a=C/2$, then
\[
L(x)=C\bigl(|x|-\ln(1+|x|)\bigr).
\]

\item If $d(x)=x(1+x^2)$ and $a=C/\sqrt{2}$, then
\[
L(x)=C\bigl(\sqrt{1+x^2}-1\bigr).
\]

\item Fix an integer $m\ge 3$. If
\[
d(x)=\frac{1+x}{m-1}\Bigl((1+x)^{m-1}-1\Bigr)
\qquad\text{and}\qquad
a=C\Bigl(1-2^{-(m-1)}\Bigr),
\]
then
\[
L(x)=C|x|-\frac{C}{m-2}\Bigl(1-(1+|x|)^{-(m-2)}\Bigr).
\]

\item If $d(x)=(1+x^2)\arctan x$ and $a=C(\pi/4)$, then
\[
L(x)=C\Bigl(|x|\arctan|x|-\tfrac12\ln(1+x^2)\Bigr).
\]
\end{enumerate}
In each case, $L$ is even, convex, $C^2$ with globally Lipschitz Hessian, and the
classical Newton method diverges from sufficiently large initializations.
\end{corollary}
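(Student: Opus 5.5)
The plan is to avoid integrating $\Phi$ in closed form and instead verify each claimed loss backwards against \cref{thm:drift_to_loss_lipschitz}. The key observation is that the construction fixes $L_+$ uniquely. Since $\Phi'=1/d$ and $L_+'=a e^{\Phi}$, we have $(\log L_+')'=1/d$ on $(0,\infty)$, with normalization $L_+'(1)=a$ (because $\Phi(1)=0$) and $L_+(0)=0$. So for each candidate $L$ it suffices to check three things on $x>0$: $L(0)=0$, $L'(x)/L''(x)=d(x)$ (equivalently $L''/L'=1/d$), and $L'(1)=a$. Then $L|_{(0,\infty)}$ solves the same first-order ODE for $\log L'$ with the same initial value, so it coincides with $L_+$. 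Evenness is automatic, since every formula is written in $|x|$.

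The computations I would record, for $x>0$, are as follows.
\begin{enumerate}
\item $L'=Cx/(1+x)$ and $L''=C/(1+x)^2$, so the ratio is $x(1+x)$ and $L'(1)=C/2$.
\item $L'=Cx(1+x^2)^{-1/2}$ and $L''=C(1+x^2)^{-3/2}$, so the ratio is $x(1+x^2)$ and $L'(1)=C/\sqrt2$.
\item $L'=C\bigl(1-(1+x)^{-(m-1)}\bigr)$ and $L''=C(m-1)(1+x)^{-m}$, so the ratio is $\frac{(1+x)}{m-1}\bigl((1+x)^{m-1}-1\bigr)$ and $L'(1)=C(1-2^{-(m-1)})$.
\item $L'=C\arctan x$ and $L''=C/(1+x^2)$, so the ratio is $(1+x^2)\arctan x$ and $L'(1)=C\pi/4$.
\end{enumerate}
In each case $L(0)=0$ is immediate.

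Next I would check the hypotheses of \cref{thm:drift_to_loss_lipschitz}, in order.
\begin{itemize}
\item[(D0)] In each case $d$ is continuous with $d(0)=0$ and $d(x)>0$ for $x>0$.
\item[(D1)] Each $d$ is $C^2$ near $0$ with $d(0)=0$ and $d'(0)=1$; for case 3, $d'(0)=\frac{1}{m-1}(m-1)=1$. Taylor's theorem then gives $|d(x)-x|\le C_0x^2$.
\item[(iii)] Divergence follows from superlinear growth: $d(x)/x\to\infty$ in all four cases. The growth rates are $x^2$, $x^3$, $x^m/(m-1)$, and $\tfrac{\pi}{2}x^2$ respectively. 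So for any $\beta>1$ there is an $X_0$ with $d(x)\ge(1+\beta)x$ beyond it. The same bound gives \eqref{eq:D3_linear_lower}.
\end{itemize}

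The step I expect to need the most care is the ratio condition \eqref{eq:D2_ratio}, $\sup_{x>0}|d'(x)-1|/d(x)<\infty$. I would split $(0,\infty)$ into three regions.
\begin{itemize}
\item Near $0$: $d'(x)-1=O(x)$ by smoothness and $d'(0)=1$, while $d(x)\ge x/2$ for small $x$. Hence the ratio is bounded.
\item On compact intervals $[\delta,X]$: the ratio is continuous and $d>0$, so it is bounded.
\item At infinity: $d'(x)/d(x)\sim p/x\to0$, where $p=2,3,m,2$ respectively. In case 4 I would write $d'=2x\arctan x+1$.
\end{itemize}
This gives finite $C_1$, so part (iv) yields a globally Lipschitz Hessian. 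Part (iii) gives divergence of Newton from large initializations, which completes the corollary.
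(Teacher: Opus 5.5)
Your proposal is correct and takes essentially the paper's approach. For $x>0$ you check that $L'/L''=d$, $L'(1)=a$ and $L(0)=0$, so that the construction in \cref{thm:drift_to_loss_lipschitz} reproduces the stated $L$, and you then read off evenness, convexity, the Lipschitz Hessian and divergence from the theorem, with your explicit verification of (D0), (D1), \eqref{eq:D2_ratio} and the superlinear growth of $d$ supplying hypotheses the paper leaves unchecked (indeed $|d'-1|/d$ equals $2/(1+x)$, $3x/(1+x^2)$, $m/(1+x)$ and $2x/(1+x^2)$ in the four cases, so $C_1$ is finite).
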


\begin{proof}
For each item, one checks that the stated $L$ has $L'(x)/L''(x)=\tilde d(x)$ for $x\neq 0$,
and that $L_+'(1)=a$ matches the choice of $a$ above; therefore \cref{thm:drift_to_loss_lipschitz}
constructs exactly the stated $L$. The regularity and divergence conclusions follow from the theorem. We indicate this in the case of $d(x)=x+x^3=x(1+x^2)$. We compute, for $x>0$,
\[
\Phi(x)=\int_1^x\frac{ds}{s(1+s^2)}
=\Big[\log s-\tfrac12\log(1+s^2)\Big]_{s=1}^{s=x}
=\log x-\tfrac12\log(1+x^2)+\tfrac12\log 2.
\]
Therefore
\[
e^{\Phi(x)}=\sqrt{2}\,\frac{x}{\sqrt{1+x^2}},
\qquad
L_+'(x)=a e^{\Phi(x)}=\frac{x}{\sqrt{1+x^2}}
\quad\text{when } a=2^{-1/2}.
\]
Integrating from $0$ to $x$ and using $L(0)=0$ yields $L(x)=\sqrt{1+x^2}-1$.
Finally, by \cref{thm:drift_to_loss_lipschitz}(ii),
\[
x_{k+1}=x_k-\tilde d(x_k)=x_k-(x_k+x_k^3)=-x_k^3,
\]
as claimed.
\end{proof}

\subsection{A boosting/Hilbert-space realization}

\begin{proposition}[A separable empirical-risk objective with divergent Newton updates]
\label{prop:boosting_realization}
Let $\mathcal{H}=L^2(\hat\nu_N)\cong \mathbb{R}^N$ be the space of predictions on the training points,
and consider the empirical risk
\[
  L(F) := \frac1N\sum_{i=1}^N \sqrt{1 + F(x_i)^2}.
\]
Then $L$ is convex and has $2M$-Lipschitz Hessian for some $M<\infty$, and the exact Newton update satisfies, coordinate-wise,
\[
  F_{k+1}(x_i) = -F_k(x_i)^3,\qquad i=1,\dots,N.
\]
In particular, if $|F_0(x_i)|>1$ for some $i$, then $\|F_k\|\to\infty$.
\end{proposition}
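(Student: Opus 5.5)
The proof reduces to the one-dimensional facts already recorded in \cref{prop:sqrt_divergence}, using the fact that $L$ is separable across the training points. Write $\phi(u) := \sqrt{1+u^2}$, so that $L(F) = \frac1N\sum_i \phi(F(x_i))$.

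\textbf{Step 1: gradient and Hessian.} Following \cref{subsec:GBDTsAsRestrictedConvexOptimization}, in $\mathcal{H} = L^2(\hat\nu_N)$ with inner product $\langle f,h\rangle = \frac1N\sum_i f(x_i)h(x_i)$, the Fr\'echet gradient is the pointwise function $\nabla L(F)(x_i) = \phi'(F(x_i))$. The factor $\frac1N$ is absorbed by the inner product. Likewise, the Hessian is the multiplication operator $\nabla^2 L(F)[f](x_i) = \phi''(F(x_i)) f(x_i)$.

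\textbf{Step 2: convexity and Lipschitz Hessian.} From \cref{prop:sqrt_divergence}, $\phi''(u) = (1+u^2)^{-3/2} > 0$, so the Hessian is a positive diagonal operator and $L$ is convex. That proposition also gives $\phi'''$ bounded, so $\phi''$ is $M_0$-Lipschitz with $M_0 = \sup|\phi'''| < \infty$. Applying \cref{prop:regularity} with $K=1$ then yields a $M_0\sqrt N$-Lipschitz Hessian for $L$, so we may set $2M = M_0\sqrt N$.

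\textbf{Step 3: the Newton update.} The Hessian is strictly positive diagonal, hence invertible, so \cref{assumptionHessianNewtonEquationUniqueSolution} holds. The update $-H_k^{-1}g_k$ acts coordinatewise as $-\phi'(F_k(x_i))/\phi''(F_k(x_i)) = -F_k(x_i)(1+F_k(x_i)^2)$. Adding this to $F_k$ gives $F_{k+1}(x_i) = -F_k(x_i)^3$, exactly as in \cref{prop:sqrt_divergence}.

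\textbf{Step 4: divergence.} Suppose $|F_0(x_{i_0})| > 1$. Iterating the recursion gives $|F_k(x_{i_0})| = |F_0(x_{i_0})|^{3^k} \to \infty$. Since $\|F_k\| \ge N^{-1/2}|F_k(x_{i_0})|$, it follows that $\|F_k\| \to \infty$.

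The only point needing care is Step 1: checking that the $\frac1N$ normalisation of the loss and the $\frac1N$ in the inner product cancel, so that the gradient and Hessian carry no extra factor of $N$. Once this holds, the ratio $g/H$ is the same as in the scalar case, and the rest is routine.
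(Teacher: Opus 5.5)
Your proof is correct and takes the same route as the paper. Separability of $L$ reduces the gradient, the Hessian and the Newton step to the scalar computation in \cref{prop:sqrt_divergence}, and the Lipschitz Hessian follows from boundedness of the scalar third derivative; you add the explicit constant $2M = M_0\sqrt{N}$ via \cref{prop:regularity} (the same diagonal operator-norm argument the paper invokes) and the bound $\|F_k\| \ge N^{-1/2}|F_0(x_{i_0})|^{3^k}$, which the paper leaves implicit.
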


\begin{proof}
The objective is separable across coordinates in $\mathbb{R}^N$. The coordinate-wise gradient and
Hessian are those of $L(x)=\sqrt{1+x^2}$, so \cref{prop:sqrt_divergence} applies componentwise.
Lipschitzness of the Hessian follows because the scalar third derivative is bounded, hence the diagonal
Hessian map is Lipschitz in operator norm.
\end{proof}

\subsection{Why gradient regularization fixes it}

\begin{proposition}[GRN converges on the same examples]
\label{prop:grn_converges}
Consider any convex objective $L:\mathbb{R}^d\to\mathbb{R}$ with $2M$-Lipschitz Hessian and finite sublevel sets. Then the exact gradient regularized Newton method in \cref{def:newton_updates}
converges globally with function-value rate $L(x_k)-\min_x L(x)=O(1/k^2)$ for any fixed $\eta\in(0,1]$.
In particular, this applies to $L(x)=\sqrt{1+x^2}$ and to all losses generated by
\cref{thm:drift_to_loss_lipschitz}.
\end{proposition}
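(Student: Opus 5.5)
The plan is to recognize the exact GRN method of \cref{def:newton_updates} as the special case $\mathcal{F}=\mathcal{H}=\mathbb{R}^d$ of \cref{alg:boostingGradRegNewton} with $C=1$, and then invoke \cref{theoremAppendixGradRegGlobalConv}. When $\mathcal{F}=\mathcal{H}$, the weak learner $f^w_{k+1}$ minimizes $Q_k^{reg}$ over all of $\mathcal{H}$. This minimizer is unique and equals $f_{k+1}=-(H_k+\lambda_kI)^{-1}g_k$, since $H_k+\lambda_kI\succeq\lambda_kI\succ0$ whenever $g_k\neq0$ and $L$ is convex. The update $F_{k+1}=F_k+\eta f_{k+1}$ is therefore exactly $\mathrm{GRN}_{M,\eta}(F_k)$. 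If $g_k=0$ at some step, the iterate is already optimal and the claim is trivial, so we may assume $g_k\neq 0$ throughout. The family $\mathcal{F}=\mathcal{H}$ is closed under scalar multiplication, which covers the standing assumption used in \cref{lemmaUsefulProperties,lemmaDecreaseInLossNEW}.

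Next I check the weak gradient edge. By \cref{defWeakGradientEdge}, $g_k^w=-(H_k+\lambda_kI)f^w_{k+1}=g_k$. Hence $\|g_k^w-g_k\|=0\le(1-\gamma^2)\|g_k\|^2$ with $\gamma=1$. With $\gamma=1$ we have $1-\sqrt{1-\gamma^2}=1$, so the bound of \cref{theoremAppendixGradRegGlobalConv} specializes to
\[
L(x_k)-L^*\le\max\Big(\Lambda\|g_0\|e^{-k\eta(3-2\eta)/8},\ \tfrac{1024M\Lambda^3(1+\eta)^4}{\eta^2(1-\eta^2/3)^2(2+k)^2}\Big).
\]
For $\eta\in(0,1]$ the exponent coefficient $\eta(3-2\eta)/8$ is positive. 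The exponential term is therefore $o(1/k^2)$, and the whole bound is $O(1/k^2)$ with constants depending on $M,\Lambda,\eta,\|g_0\|$. This step is the main point to verify. The only real care needed is that the linear-regime term has a strictly positive rate for every $\eta\le1$, which holds because $3-2\eta\ge1$.

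For the "in particular" part, I verify the hypotheses for $L(x)=\sqrt{1+x^2}$ and for the generated losses. Convexity and a globally Lipschitz Hessian come from \cref{prop:sqrt_divergence} and from part (iv) of \cref{thm:drift_to_loss_lipschitz}; in the latter case I use the hypotheses (D2)–(D3) stated there, and the listed examples satisfy them. Finite sublevel sets also require a minimizer, which is $x^*=0$ by evenness and convexity. Boundedness of sublevel sets holds because these losses grow at least linearly at infinity. Indeed, $L'(x)=L_+'(x)$ is positive and nondecreasing for $x>0$, so $L(x)\ge L'(1)(|x|-1)$ for $|x|\ge1$. Coercivity then gives a finite $\Lambda$ for every initialization, and applying the main argument completes the proof.
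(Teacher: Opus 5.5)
Your proposal is correct and follows the paper's own proof, which is the one-line specialization of \cref{thm:global_rate_II} to $\mathcal{F}=\mathcal{H}$ with $C=1$, where the weak step is exact and the weak gradient edge is $\gamma=1$. Your extra checks fill in details the paper leaves implicit: that the exponential term decays faster than $1/k^2$ for every $\eta\in(0,1]$, that the generated losses are coercive (so $\Lambda<\infty$), and that the Lipschitz-Hessian claim for generated losses needs the additional hypotheses of part (iv), not just (D0)--(D1).
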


\begin{proof}
This is a direct specialization of \cref{thm:global_rate_II} / \cref{app:GlobalConvergenceProofs},
since the weak-step becomes exact and the weak-gradient edge is $1$.
\end{proof}

\newpage

\section{Newton Boosting for Hessian-Dominated Losses}\label{app:HessianDominatedLoss}

This appendix proves the claims stated around \cref{defHessianDominatedLoss}: binary and
categorical cross entropy are Hessian-dominated with constant $c=1$, and
pointwise Hessian-dominance implies Hessian-dominance of the empirical risk.

\subsection{A useful log inequality}

\begin{lemma}[A log inequality]
\label{lem:log_ineq}
For all $p\in(0,1]$,
\begin{equation}
\label{eq:log_ineq_1}
\frac{1-p}{p} \;\ge\; -\log p.
\end{equation}
Equivalently, for all $p\in[0,1)$,
\begin{equation}
\label{eq:log_ineq_2}
\frac{p}{1-p} \;\ge\; -\log(1-p).
\end{equation}
\end{lemma}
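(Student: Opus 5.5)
The two inequalities can be proved by one elementary convexity argument for the logarithm, with the second obtained from the first by substitution.

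For \eqref{eq:log_ineq_1}, set $x = \frac{1-p}{p} \ge 0$ for $p\in(0,1]$. Then $\frac{1}{p} = 1+x$, so $-\log p = \log(1+x)$. The claim therefore reduces to $\log(1+x) \le x$ for all $x\ge 0$. This is the standard tangent-line bound for the concave function $\log(1+x)$ at $x=0$; it is the logarithmic form of the bound $1+x\le e^x$ already used in the proof of \cref{theoremAppendixGradRegGlobalConv}.

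If a self-contained argument is wanted, define $\psi(p) := \frac{1-p}{p} + \log p = \frac1p - 1 + \log p$ on $(0,1]$. Then $\psi(1)=0$ and $\psi'(p) = -\frac{1}{p^2} + \frac1p = \frac{p-1}{p^2} \le 0$. So $\psi$ is nonincreasing on $(0,1]$, which gives $\psi(p)\ge \psi(1)=0$ and hence \eqref{eq:log_ineq_1}.

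For \eqref{eq:log_ineq_2}, substitute $q = 1-p$. As $p$ ranges over $[0,1)$, $q$ ranges over $(0,1]$. Applying \eqref{eq:log_ineq_1} to $q$ gives $\frac{1-q}{q} \ge -\log q$, which is exactly $\frac{p}{1-p}\ge -\log(1-p)$. The substitution is a bijection, so the two statements are equivalent.

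No step is a real obstacle. The only point needing care is the endpoint $p=1$ in \eqref{eq:log_ineq_1} (equivalently $p=0$ in \eqref{eq:log_ineq_2}), where both sides equal $0$. This case is covered by $\psi(1)=0$.
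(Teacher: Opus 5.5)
Your proof is correct, and your self-contained argument is exactly the paper's. The paper defines $\phi(p)=\frac{1}{p}-1+\log p$, shows that $\phi'(p)=\frac{p-1}{p^2}\le 0$ and $\phi(1)=0$, and obtains the second inequality by the substitution $q=1-p$; your extra reduction to $\log(1+x)\le x$ via $x=\frac{1-p}{p}$ is an equally valid shortcut that the paper does not use.
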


\begin{proof}
Define $\phi(p)=\frac{1}{p}-1+\log p$ on $(0,1]$. Then
\[
\phi'(p)=-\frac{1}{p^2}+\frac{1}{p}=\frac{p-1}{p^2}\le 0,
\]
so $\phi$ is non-increasing. Since $\phi(1)=0$, we have $\phi(p)\ge 0$ for all
$p\in(0,1]$, which is \eqref{eq:log_ineq_1}. Substituting $q=1-p$ gives
\eqref{eq:log_ineq_2}.
\end{proof}

\subsection{Binary cross entropy is Hessian-dominated}

\begin{proposition}[Binary cross entropy is Hessian-dominated with $c=1$]
\label{prop:binary_ce_hessdom}
Let $y\in\{0,1\}$ and $u\in\mathbb{R}$, and set $p=\sigma(u)=(1+e^{-u})^{-1}$.
Consider the binary cross entropy
\[
l(u,y) \;=\; -\Big(y\log p + (1-y)\log(1-p)\Big).
\]
Let $g(u,y)=\partial_u l(u,y)$ and $h(u)=\partial_{uu}l(u,y)$.
Then the (scalar) Newton step $f(u,y)=-h(u)^{-1}g(u,y)$ satisfies
\begin{equation}
\label{eq:binary_pointwise_dom}
\|f(u,y)\|_{h(u)}^2 := f(u,y)^2\,h(u)
= \frac{g(u,y)^2}{h(u)}
\;\ge\; l(u,y).
\end{equation}
Hence binary cross entropy is pointwise Hessian-dominated with constant $c=1$.
\end{proposition}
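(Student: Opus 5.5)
The plan is to compute $g$ and $h$ explicitly and reduce the inequality to the log inequality of Lemma~\ref{lem:log_ineq}. From the appendix formulas, $g(u,y)=p-y$ and $h(u)=p(1-p)$ with $p=\sigma(u)\in(0,1)$, so $h(u)>0$ and the scalar Newton step $f=-g/h$ is well defined. Its Hessian-induced norm is
\begin{equation*}
f^2 h = \frac{g^2}{h^2}\,h = \frac{g^2}{h} = \frac{(p-y)^2}{p(1-p)}.
\end{equation*}
This is the middle equality in \eqref{eq:binary_pointwise_dom}.

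Since $y\in\{0,1\}$, I will split into the two label cases. If $y=1$, then $l(u,1)=-\log p$ and $(p-1)^2/(p(1-p))=(1-p)/p$. The claim becomes $\frac{1-p}{p}\ge -\log p$, which is exactly \eqref{eq:log_ineq_1} for $p\in(0,1)$. If $y=0$, then $l(u,0)=-\log(1-p)$ and $p^2/(p(1-p))=p/(1-p)$. The claim becomes \eqref{eq:log_ineq_2}, again with $p\in(0,1)$. Together the two cases give $\|f\|_h^2\ge l(u,y)$ pointwise, that is, Hessian-dominance with $c=1$.

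There is no real obstacle. The only thing to check is that $p$ lies strictly in $(0,1)$, which holds because $u\in\mathbb{R}$. This keeps both $h$ and the logarithms finite and makes the ranges of Lemma~\ref{lem:log_ineq} apply.

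The passage from this pointwise statement to Definition~\ref{defHessianDominatedLoss} for the empirical risk is deferred to the subsequent result. It should follow because the Hessian is block-diagonal, so $\|f_{k+1}\|_{H_k}^2=\frac1N\sum_i g_{(i)}^2/h_{(i)}$, and summing the pointwise inequality over $i$ gives the bound.
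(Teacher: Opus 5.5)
Your proposal is correct and matches the paper's proof step for step. You compute $g=p-y$ and $h=p(1-p)>0$, reduce $f^2h$ to $g^2/h$, and split into the cases $y=1$ and $y=0$, where the inequality becomes exactly \eqref{eq:log_ineq_1} and \eqref{eq:log_ineq_2} of Lemma~\ref{lem:log_ineq}.
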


\begin{proof}
A direct computation gives
\[
g(u,y)=p-y,\qquad h(u)=p(1-p)>0.
\]
Thus $\|f\|_{h}^2 = g^2/h$. If $y=1$, then $g=p-1=-(1-p)$, so
\[
\frac{g^2}{h}=\frac{(1-p)^2}{p(1-p)}=\frac{1-p}{p}.
\]
Also $l(u,1)=-\log p$, and \cref{lem:log_ineq} gives
$\frac{1-p}{p}\ge -\log p=l(u,1)$. If $y=0$, then $g=p$, so
\[
\frac{g^2}{h}=\frac{p^2}{p(1-p)}=\frac{p}{1-p}.
\]
Also $l(u,0)=-\log(1-p)$, and \cref{lem:log_ineq} gives
$\frac{p}{1-p}\ge -\log(1-p)=l(u,0)$. This proves \eqref{eq:binary_pointwise_dom}.
\end{proof}

\begin{lemma}[Softmax Hessian is positive definite on the zero-sum subspace]
\label{lem:softmax_hess_posdef}
Let $p\in\mathbb{R}^K$ satisfy $p_k>0$ and $\sum_{k=1}^K p_k=1$, and define
\[
H \;:=\; \mathrm{diag}(p)-pp^\top.
\]
Let $V:=\{v\in\mathbb{R}^K:\mathbf{1}^\top v=0\}$. Then $H$ maps $V$ to $V$ and is
positive definite on $V$. In particular, the restriction $H:V\to V$ is invertible.
\end{lemma}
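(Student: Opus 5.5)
The plan is to prove the three claims in turn by direct linear algebra on $H=\mathrm{diag}(p)-pp^\top$. We use only $p_k>0$, $\sum_k p_k=\mathbf{1}^\top p=1$, and finite dimensionality.

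\emph{Invariance.} For any $v\in\mathbb{R}^K$,
\[
\mathbf{1}^\top H v=\mathbf{1}^\top\mathrm{diag}(p)v-(\mathbf{1}^\top p)(p^\top v)=p^\top v-p^\top v=0.
\]
Hence the range of $H$ already lies in $V$, and in particular $H(V)\subseteq V$. The same computation gives $H\mathbf{1}=p-p(\mathbf{1}^\top p)=0$. So $\mathbf{1}$ spans a null direction, which is why we restrict to $V$.

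\emph{Positive definiteness on $V$.} The main step is to rewrite the quadratic form as a weighted variance. For $v\in\mathbb{R}^K$,
\[
v^\top H v=\sum_k p_k v_k^2-\Big(\sum_k p_k v_k\Big)^2 .
\]
Since $p$ is a probability vector, this equals $\mathrm{Var}_p(v)=\sum_k p_k\big(v_k-\bar v\big)^2$ with $\bar v=\sum_k p_kv_k$. Alternatively, Cauchy--Schwarz with weights $\sqrt{p_k}$ gives $(\sum p_kv_k)^2\le(\sum p_k)(\sum p_kv_k^2)$. Either way $v^\top Hv\ge0$. Equality holds iff $v_k=\bar v$ for every $k$ with $p_k>0$, i.e.\ for all $k$ since every $p_k>0$. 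This forces $v=c\mathbf{1}$.

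If moreover $v\in V$, then $0=\mathbf{1}^\top v=cK$, so $c=0$. Therefore $v^\top Hv>0$ for all nonzero $v\in V$. The only delicate point in the whole argument is this equality case, and it is exactly where the strict positivity $p_k>0$ is used.

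\emph{Invertibility.} $H|_V$ is a symmetric operator on the finite-dimensional space $V$. If $Hv=0$ for some $v\in V$, then $v^\top Hv=0$, so $v=0$. Hence $H|_V$ is injective, and by finite dimensionality it is bijective onto $V$. Quantitatively, its smallest eigenvalue on $V$ is $\min_{v\in V,\|v\|=1}v^\top Hv>0$ by compactness of the unit sphere.
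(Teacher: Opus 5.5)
Your proposal is correct and follows essentially the same route as the paper's proof: the computation $\mathbf{1}^\top Hv=0$ for invariance, the weighted-variance identity for $v^\top Hv$ with equality exactly on $\mathrm{span}\{\mathbf{1}\}$ (using $p_k>0$), and then intersecting with $V$. Your explicit argument that injectivity of $H$ on the finite-dimensional space $V$ gives bijectivity fills in the invertibility step, which the paper leaves implicit.
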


\begin{proof}
First, for any $v\in\mathbb{R}^K$,
\[
\mathbf{1}^\top H v
= \mathbf{1}^\top \mathrm{diag}(p)v - \mathbf{1}^\top p (p^\top v)
= p^\top v - (p^\top v) = 0,
\]
so $Hv\in V$ whenever $v\in V$. Next, for any $v\in\mathbb{R}^K$,
\[
v^\top H v
= \sum_{k=1}^K p_k v_k^2 - \Big(\sum_{k=1}^K p_k v_k\Big)^2
= \sum_{k=1}^K p_k\big(v_k-\sum_{j=1}^K p_j v_j\big)^2 \;\ge\; 0.
\]
Equality holds iff $v_k$ is constant in $k$, i.e.\ $v\in\mathrm{span}\{\mathbf{1}\}$.
Intersecting with $V$ leaves only $v=0$, hence $v^\top H v>0$ for all nonzero
$v\in V$. Therefore $H$ is positive definite on $V$ and invertible as $H:V\to V$.
\end{proof}

\begin{proposition}[Categorical cross entropy is Hessian-dominated with $c=1$]
\label{prop:categorical_ce_hessdom}
Let $u\in\mathbb{R}^K$, $p=\mathrm{softmax}(u)$, and let $y=e_t$ be a one-hot label.
Consider categorical cross entropy
\[
l(u,y) \;=\; -\sum_{k=1}^K y_k\log p_k \;=\; -\log p_t.
\]
Let $g(u,y)=\nabla_u l(u,y)=p-y$ and $H(u)=\nabla^2_{uu}l(u,y)=\mathrm{diag}(p)-pp^\top$.
Work on the subspace $V=\{v:\mathbf{1}^\top v=0\}$; then $g(u,y)\in V$ and by
\cref{lem:softmax_hess_posdef} the Newton direction $f\in V$ solving
\begin{equation}
\label{eq:cat_newton}
H(u)f \;=\; y-p
\end{equation}
is unique. Moreover,
\begin{equation}
\label{eq:cat_decrement}
\|f\|_{H(u)}^2 := f^\top H(u)f
\;=\; \frac{1-p_t}{p_t}
\;\ge\; -\log p_t \;=\; l(u,y).
\end{equation}
Hence categorical cross entropy is pointwise Hessian-dominated with constant $c=1$.
\end{proposition}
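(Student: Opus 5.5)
We prove \cref{prop:categorical_ce_hessdom} in two steps: first find the Newton direction $f$ in closed form, then reduce the inequality to \cref{lem:log_ineq}.

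\textbf{Step 1: well-posedness.} The gradient $g = p - e_t$ has coordinates summing to $1 - 1 = 0$, so $g\in V$ and hence $y-p \in V$. By \cref{lem:softmax_hess_posdef}, $H(u)$ restricted to $V$ is invertible, so the solution $f\in V$ of \eqref{eq:cat_newton} exists and is unique.

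\textbf{Step 2: explicit solution.} We avoid inverting $H(u)$ by guessing the solution. Let $D=\mathrm{diag}(p)$ and consider candidates of the form $f = c\mathbf{1} + D^{-1}(y-p)$. Since $p^\top \mathbf{1}=1$, we have $H\mathbf{1} = p - p = 0$, so the constant component does not affect $Hf$. Using $p^\top D^{-1}(y-p) = \mathbf{1}^\top(y-p) = 0$, we get
\[
H D^{-1}(y-p) = (y-p) - p\,\big(p^\top D^{-1}(y-p)\big) = y-p.
\]
So every such candidate solves \eqref{eq:cat_newton}. Choosing $c$ so that $\mathbf{1}^\top f = 0$ puts $f$ in $V$, and by Step 1 this is the unique solution.

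\textbf{Step 3: the decrement.} Since $H\mathbf{1}=0$, the constant component drops out of the quadratic form as well:
\[
f^\top H f = f^\top (y-p) = (y-p)^\top D^{-1}(y-p),
\]
where we used $\mathbf{1}^\top(y-p)=0$. Expanding,
\[
(y-p)^\top D^{-1}(y-p) = \frac{(1-p_t)^2}{p_t} + \sum_{k\neq t} p_k = \frac{(1-p_t)^2}{p_t} + (1-p_t) = \frac{1-p_t}{p_t}.
\]
This gives the equality in \eqref{eq:cat_decrement}.

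\textbf{Step 4: conclusion.} The inequality $\frac{1-p_t}{p_t} \ge -\log p_t$ is exactly \eqref{eq:log_ineq_1} of \cref{lem:log_ineq}, applied with $p = p_t\in(0,1)$.

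The main obstacle is Step 2, namely computing $f$ without a pseudo-inverse. The ansatz $D^{-1}(y-p)$ plus a multiple of $\mathbf{1}$ resolves it, because $\mathbf{1}$ spans the kernel of $H$.
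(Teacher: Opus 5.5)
Your proof is correct and follows essentially the same route as the paper: the paper solves $H f = y-p$ coordinatewise to obtain $f_k = s + y_k/p_k - 1$ with $s = p^\top f$, which is exactly your ansatz $f = s\mathbf{1} + \mathrm{diag}(p)^{-1}(y-p)$. It then evaluates $f^\top H f = f^\top (y-p) = \frac{1-p_t}{p_t}$ and finishes with the log inequality, just as you do; the only difference is that you guess and verify the form of $f$ where the paper derives it.
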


\begin{proof}
Let $H=\mathrm{diag}(p)-pp^\top$. Define $s:=p^\top f$. Expanding \eqref{eq:cat_newton} yields
\[
\mathrm{diag}(p)f - p(p^\top f)=y-p
\quad\Longleftrightarrow\quad
p_k(f_k-s)=y_k-p_k\qquad \forall k,
\]
so (since $p_k>0$) we have
\[
f_k = s + \frac{y_k}{p_k} - 1.
\]
With $y=e_t$, this becomes
\[
f_t = s + \frac{1}{p_t}-1,\qquad f_k=s-1\quad(k\neq t).
\]
Using symmetry of $H$ and \eqref{eq:cat_newton},
\[
\|f\|_H^2=f^\top H f = f^\top (y-p).
\]
Compute
\begin{align*}
f^\top (y-p)
&= f_t(1-p_t) + \sum_{k\neq t} f_k(0-p_k) \\
&= f_t(1-p_t) - (s-1)\sum_{k\neq t}p_k \\
&= \big(f_t-(s-1)\big)(1-p_t)
= \left(\left(s+\frac{1}{p_t}-1\right)-(s-1)\right)(1-p_t) \\
&= \frac{1-p_t}{p_t}.
\end{align*}
Finally, \cref{lem:log_ineq} gives $\frac{1-p_t}{p_t}\ge -\log p_t=l(u,y)$,
proving \eqref{eq:cat_decrement}.
\end{proof}

\begin{remark}[Relation to Assumption~3.1 (uniqueness of Newton step)]
\label{rem:cat_uniqueness}
On $\mathbb{R}^K$ one has $H(u)\mathbf{1}=0$ (softmax is shift-invariant), so the
Newton equation is not uniquely solvable without fixing a gauge. Restricting to the
zero-sum subspace $V=\{v:\mathbf{1}^\top v=0\}$ (or equivalently fixing one logit
coordinate) yields uniqueness and matches Assumption~3.1.
The decrement $\|f\|_{H(u)}^2$ is invariant to the choice of representative.
\end{remark}

\begin{remark}
Binary cross entropy corresponds to the special case $K=2$ of categorical
cross entropy (after identifying the scalar logit with the difference of the
two softmax logits). Thus \cref{prop:categorical_ce_hessdom}
strictly generalizes \cref{prop:binary_ce_hessdom}; the latter is
included separately for clarity and because it avoids the gauge-fixing
required in the multiclass setting.
\end{remark}

\subsection{From pointwise to empirical Hessian-dominance}

\begin{lemma}[Pointwise $\Rightarrow$ empirical Hessian-dominance]
\label{lem:pointwise_to_empirical_hessdom}
Let $\{(x_i,y_i)\}_{i=1}^N$ be a dataset and define the empirical risk
\[
L(F)=\frac{1}{N}\sum_{i=1}^N l(F(x_i),y_i),
\]
with $F(x_i)\in\mathbb{R}^K$ (take $K=1$ for binary). Work in the empirical Hilbert
space $H=L^2(\hat\nu_N)$ with inner product
\[
\langle A,B\rangle = \frac{1}{N}\sum_{i=1}^N \langle A(x_i),B(x_i)\rangle_{\mathbb{R}^K}.
\]
Fix $F\in H$ and write $u_i:=F(x_i)$. Let
\[
g_{(i)} := \partial_1 l(u_i,y_i)\in\mathbb{R}^K,\qquad
h_{(i)} := \partial_{11} l(u_i,y_i)\in\mathbb{R}^{K\times K}.
\]
Assume there exists $c>0$ such that for all $i$,
\begin{equation}
\label{eq:pointwise_dom_assump}
g_{(i)}^\top h_{(i)}^{-1} g_{(i)} \;\ge\; c\,l(u_i,y_i),
\end{equation}
where $h_{(i)}^{-1}$ is taken on the relevant space (for categorical CE one may restrict
to $V$ pointwise as in \cref{prop:categorical_ce_hessdom}).
Let $g=\nabla L(F)$ and $H=\nabla^2 L(F)$, and let $f=-H^{-1}g$ be the exact Newton
direction (Assumption~3.1). Then
\[
\|f\|_{H}^2 \;\ge\; c\,L(F).
\]
In particular, $L$ is Hessian-dominated in the sense of \cref{defHessianDominatedLoss} with constant $c$.
\end{lemma}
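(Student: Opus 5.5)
The plan is to exploit the block-diagonal (pointwise) structure of $g$ and $H$ in the empirical space, exactly as in the proof of \cref{prop:regularity}. First identify the Fréchet gradient and Hessian. From the formulas in \cref{subsec:GBDTsAsRestrictedConvexOptimization} with $\nu=\widehat{\nu}_N$, we have $g(x_i)=g_{(i)}$ and $(H[f])(x_i)=h_{(i)}f(x_i)$. The normalization $\frac1N$ in the inner product and the $\frac1N$ in $L$ cancel. This can be checked directly by expanding $L(F+f)$ and matching the definition of the Fréchet derivatives against $\langle\cdot,\cdot\rangle=\frac1N\sum_i\langle\cdot,\cdot\rangle_{\mathbb{R}^K}$.

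Next, solve the Newton equation $Hf=-g$ pointwise. Because $H$ acts blockwise, $Hf=-g$ holds iff $h_{(i)}f(x_i)=-g_{(i)}$ for every $i$. Under Assumption~3.1, or with the gauge restriction to $V$ for categorical CE, this gives $f(x_i)=-h_{(i)}^{-1}g_{(i)}$. In the categorical case I will note that $g_{(i)}\in V$, so that $f(x_i)\in V$ is well defined by \cref{lem:softmax_hess_posdef}. By \cref{rem:cat_uniqueness}, the decrement does not depend on the representative chosen.

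Then compute the Hessian norm:
\begin{align*}
\|f\|_H^2=\langle f,Hf\rangle=\frac1N\sum_{i=1}^N\langle f(x_i),h_{(i)}f(x_i)\rangle_{\mathbb{R}^K}=\frac1N\sum_{i=1}^N g_{(i)}^\top h_{(i)}^{-1}g_{(i)},
\end{align*}
using symmetry of $h_{(i)}$. Applying the pointwise assumption \eqref{eq:pointwise_dom_assump} termwise and averaging yields $\|f\|_H^2\ge \frac cN\sum_i l(u_i,y_i)=cL(F)$. Since $F$ was arbitrary, this holds at every iterate $F_k$, which is \cref{defHessianDominatedLoss}.

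The only delicate step is the categorical gauge issue. There $h_{(i)}$ is singular on $\mathbb{R}^K$, so the inverse must be interpreted on $V$, and the empirical Newton direction must be taken in the product subspace $V^N$, on which $H$ is invertible blockwise. I would state this restriction explicitly and otherwise treat the argument as routine.
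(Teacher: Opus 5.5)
Your proposal is correct and matches the paper's proof: identify $g$ and $H$ pointwise, decouple the Newton equation into $f(x_i)=-h_{(i)}^{-1}g_{(i)}$, compute $\|f\|_H^2=\frac{1}{N}\sum_{i=1}^N g_{(i)}^\top h_{(i)}^{-1}g_{(i)}$, and apply the pointwise bound term by term. The paper only gestures at the categorical gauge by taking $h_{(i)}^{-1}$ ``on the relevant space'', so your explicit restriction to $V^N$ is a welcome bit of extra care.
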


\begin{proof}
In the empirical setting, the gradient and Hessian act pointwise:
\[
g(x_i)=g_{(i)},\qquad (H[f])(x_i)=h_{(i)} f(x_i).
\]
Thus the Newton equation $H[f]=-g$ decouples over samples:
\[
h_{(i)} f(x_i)=-g_{(i)} \quad\Rightarrow\quad f(x_i)=-h_{(i)}^{-1}g_{(i)}.
\]
Therefore, using \cref{defCosineAngle},
\begin{align*}
\|f\|_{H}^2
&= \langle f,H[f]\rangle
= \frac{1}{N}\sum_{i=1}^N f(x_i)^\top h_{(i)} f(x_i)
= \frac{1}{N}\sum_{i=1}^N g_{(i)}^\top h_{(i)}^{-1} g_{(i)}.
\end{align*}
Applying \eqref{eq:pointwise_dom_assump} term-by-term yields
\[
\|f\|_{H}^2
\ge \frac{c}{N}\sum_{i=1}^N l(u_i,y_i)
= c\,L(F),
\]
which is exactly \cref{defHessianDominatedLoss}.
\end{proof}

\begin{corollary}[Empirical cross entropy risks are Hessian-dominated with $c=1$]
\label{cor:empirical_ce_hessdom}
Let $L(F)=\frac{1}{N}\sum_{i=1}^N l(F(x_i),y_i)$ where $l$ is either binary cross entropy
(\cref{prop:binary_ce_hessdom}) or categorical cross entropy
(\cref{prop:categorical_ce_hessdom}). Then $L$ is Hessian-dominated with $c=1$
(in the sense of \cref{defHessianDominatedLoss}, with the gauge-fix of \cref{rem:cat_uniqueness}
in the categorical case).
\end{corollary}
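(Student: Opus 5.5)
This corollary combines the pointwise results with the transfer lemma. The plan is to apply \cref{lem:pointwise_to_empirical_hessdom} with $c=1$. To do so we only have to check its hypothesis \eqref{eq:pointwise_dom_assump} at every sample $i$, for an arbitrary fixed $F\in\mathcal{H}$.

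For binary cross entropy ($K=1$), set $u_i=F(x_i)$ and $p_i=\sigma(u_i)\in(0,1)$. Then $h_{(i)}=p_i(1-p_i)>0$ is invertible, and
\[
g_{(i)}^\top h_{(i)}^{-1}g_{(i)}=\frac{g_{(i)}^2}{h_{(i)}}.
\]
This is exactly the quantity bounded below by $l(u_i,y_i)$ in \cref{prop:binary_ce_hessdom}. The hypothesis therefore holds with $c=1$ for each $i$.

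For categorical cross entropy, fix the gauge as in \cref{rem:cat_uniqueness}. We work in the closed subspace $\{F\in\mathcal{H} : \mathbf{1}^\top F(x_i)=0 \ \forall i\}$ and take $h_{(i)}^{-1}$ to be the inverse of the restriction $h_{(i)}:V\to V$. This inverse exists by \cref{lem:softmax_hess_posdef}, since every softmax coordinate is strictly positive. Because $g_{(i)}=p_i-y_i\in V$, the pointwise Newton direction $f_{(i)}=-h_{(i)}^{-1}g_{(i)}$ lies in $V$ and is unique there. Moreover,
\[
g_{(i)}^\top h_{(i)}^{-1}g_{(i)}=f_{(i)}^\top h_{(i)}f_{(i)}=\frac{1-p_{i,t_i}}{p_{i,t_i}}\ge l(u_i,y_i)
\]
by \cref{prop:categorical_ce_hessdom}.

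With the pointwise hypothesis verified, we invoke \cref{lem:pointwise_to_empirical_hessdom}. The gradient and Hessian act blockwise, so the Newton equation decouples across samples. Averaging the pointwise inequalities gives $\|f_{k+1}\|_{H_k}^2\ge L(F_k)$ at every iterate, which is \cref{defHessianDominatedLoss} with $c=1$.

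The only step needing care is the gauge fix in the categorical case, where two things must be checked. First, the empirical Hessian preserves the zero-sum subspace and is invertible on it, so that \cref{assumptionHessianNewtonEquationUniqueSolution} holds there. Second, the decrement is independent of the representative chosen. Both follow blockwise from \cref{lem:softmax_hess_posdef} and the invariance noted in \cref{rem:cat_uniqueness}.
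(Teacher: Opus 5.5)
Your proposal is correct and follows the paper's route. The corollary is exactly \cref{lem:pointwise_to_empirical_hessdom} with $c=1$: its pointwise hypothesis comes from \cref{prop:binary_ce_hessdom} and \cref{prop:categorical_ce_hessdom}, with the zero-sum gauge of \cref{rem:cat_uniqueness} in the multiclass case. Restricting the iterate $F$ itself to the zero-sum subspace is harmless but unnecessary, because softmax is shift-invariant and only the Newton direction needs the gauge fix.
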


\end{document}